\pgfplotsset{compat=1.5}
\declaretheorem{theorem}
\declaretheorem[sibling=theorem]{definition}
\declaretheorem[sibling=theorem]{lemma}
\declaretheorem[sibling=theorem]{corollary}
\declaretheorem[sibling=theorem]{fact}
\newcommand{\newjd}[1]{#1}
\newcommand\numberthis{\addtocounter{equation}{1}\tag{\theequation}}
\DeclareMathOperator*{\argmin}{arg\,min}
\DeclarePairedDelimiter{\norm}{\lVert}{\rVert}
\newcommand{\eps}{\ensuremath{\epsilon}}
\newcommand{\fhat}{\ensuremath{\widehat{f}}}
\newcommand{\OPT}{\ensuremath{\mathrm{OPT}}}
\newcommand{\setI}{\mathcal{I}}
\newcommand{\EE}{\mathbb{E}}
\newcommand{\poly}{\mathrm{poly}}
\newcommand{\Otilde}{\ensuremath{\widetilde{O}}}
\newcommand{\R}{\ensuremath{\mathbb{R}}}
\newcommand{\MSE}{\ensuremath{\mathrm{MSE}}}
\newcommand{\LS}{{\mathrm{LS}}}
\newcommand{\vecfLS}{\vec{f}^{\LS}}
\newcommand{\fLS}{\ensuremath{f_{\LS}}}
\newcommand{\fls}{\ensuremath{f^{\LS}}}
\newcommand{\vecfls}{\ensuremath{\vec{f}^{\LS}}}
\renewcommand{\vec}[1]{\ensuremath{\bm{#1}}}
\algnewcommand{\LineComment}[1]{\State \(\triangleright\) #1}
\let\OldStatex\Statex
\renewcommand{\Statex}[1][3]{%
\setlength\@tempdima{\algorithmicindent}%
\OldStatex\hskip\dimexpr#1\@tempdima\relax}
\title{Fast Algorithms for Segmented Regression}
\author{Jayadev Acharya\\MIT\\\tt{jayadev@csail.mit.edu}
\and
Ilias Diakonikolas\\University of Southern California\\\tt{ilias.d@ed.ac.uk}
\and
Jerry Li\\MIT\\\tt{jerryzli@csail.mit.edu}
\and
Ludwig Schmidt\\MIT\\\tt{ludwigs@mit.edu}}
\begin{document}
\maketitle

\thispagestyle{empty}

\begin{abstract}
We study the  fixed design segmented regression problem: 
Given noisy samples from a piecewise linear function $f$, 
we want to recover $f$ up to a desired accuracy in mean-squared error.

Previous rigorous approaches for this problem rely on dynamic programming (DP)
and, while sample efficient, have running time quadratic in the sample size. 
As our main contribution, we provide new 
sample near-linear time algorithms for the problem that -- 
while not being minimax optimal -- 
achieve a significantly better sample-time tradeoff 
on large datasets compared to the DP approach.
Our experimental evaluation shows that, compared with the DP approach, 
our algorithms provide a convergence rate that is only off by a factor of $2$ to $4$, 
while achieving speedups of three orders of magnitude.

\end{abstract}
\newpage

\thispagestyle{empty}

\section{Introduction}
\label{sec:intro}

We study the {\em regression} problem  -- a fundamental inference task
with numerous applications that has received tremendous attention 
in machine learning and statistics during the past fifty years 
(see, e.g., ~\cite{MT77} for a classical textbook). 
Roughly speaking, in a (fixed design) regression problem,
we are given a set of $n$ observations $(\mathbf{x}_i, y_i)$, where the $y_i$'s are the dependent
variables and the $\mathbf{x}_i$'s are the independent variables, and our
goal is to  model the relationship between them. The typical assumptions are that 
(i) there exists a simple function $f$ that (approximately) models the underlying relation, 
and (ii) the dependent observations are corrupted by random noise. More specifically, we assume that 
there exists a family of functions $\cal F$ such that for some $f \in {\cal F}$ the following holds: 
$y_i = f(\mathbf{x}_i)+\eps_i,$ where the $\eps_i$'s are i.i.d.\ random variables drawn from a ``tame'' distribution such as a Gaussian (later, we also consider model misspecification).

Throughout this paper, we consider the classical notion of Mean Squared Error (MSE) to measure 
the performance (risk) of an estimator. As expected, the minimax risk depends on the family $\cal F$ that $f$
comes from. The natural case that $f$ is  linear is fully understood: It is well-known that the least-squares
estimator is statistically efficient and runs in sample-linear time. The more general case that $f$ is {\em non-linear}, 
but satisfies some well-defined structural constraint has been extensively studied 
in a variety of contexts (see, e.g., ~\cite{GF73, feder1975, friedman1991, BP98, YP13, KRS15, ASW13, meyer2008, chatterjee2015}).
In contrast to the linear case, this more general setting is not well-understood from an information-theoretic
and/or computational aspect.

\iftoggle{fullversion}{
}{
\footnotetext{Authors ordered alphabetically.}
}

In this paper, we focus on the case that the function $f$ is promised 
to be {\em piecewise linear} with a given number $k$ 
of {\em unknown} pieces (segments). This is known as fixed design {\em segmented} regression, 
and has received considerable attention in the statistics community
~\cite{GF73, feder1975, BP98, YP13}. 
The special case of piecewise polynomial functions (splines) has been extensively 
used  in the context of inference, including density estimation and regression,
see, e.g.,~\cite{WegW83, friedman1991, Stone94, Stone97, meyer2008}.

Information-theoretic aspects of the segmented regression problem are well-understood:
Roughly speaking, the minimax risk \newjd{is inversely proportional} to the number of samples. 
In contrast, the computational complexity of the problem is poorly understood:
Prior to our work, known algorithms for this problem with provable guarantees were quite slow.
Our main contribution is a set of new {\em provably fast} algorithms that outperform 
previous approaches both in theory and in practice. 
Our algorithms run in time that is nearly-linear in the number of data points $n$ and the number of intervals $k$.
Their computational efficiency is established both theoretically and experimentally.
We also emphasize that our algorithms are robust to model misspecification, 
i.e., \newjd{they} perform well even if the function $f$ is only {\em approximately} piecewise linear.

Note that if the segments of $f$ were known a priori, 
the segmented regression problem could be immediately reduced to $k$ independent linear regression problems. 
Roughly speaking, in the general case (where the location of the segment boundaries is unknown) 
one needs to ``discover'' the right segments using information provided by the samples.
To address this algorithmic problem,  previous works~\cite{BP98, YP13} relied on
dynamic programming that, while being statistically efficient, is computationally
quite slow: its running time scales at least {\em quadratically} 
with the size $n$ of the data, hence it is rather impractical for large datasets.

Our main motivation comes from the availability of large datasets that has made computational efficiency the main bottleneck in many cases.
In the words of~\cite{jordan2013}: 
``As data grows, it may be beneficial to consider faster inferential algorithms, because the increasing statistical strength of the data can compensate for the poor algorithmic quality.''
Hence, it is sometimes advantageous to sacrifice statistical efficiency in order to achieve faster running times because we can then achieve the desired error guarantee faster (provided more samples).
In our context, instead of using a slow dynamic program, we employ
a subtle iterative greedy approach that runs in sample-linear time. 

Our iterative greedy approach builds on the work of~\cite{ADHLS15,ADLS15}, but the details of our algorithms here and their analysis are substantially different. 
In particular, as we explain in the body of the paper, 
the natural adaptation of their analysis to our setting fails to provide 
any meaningful statistical guarantees.
To obtain our results, we introduce novel algorithmic ideas and carefully combine them with additional probabilistic arguments.

\section{Preliminaries}
In this paper, we study the problem of fixed design segmented regression.
We are given samples $\vec{x}_i \in \R^d$ for $i \in [n] \, (\, = \{1,  \ldots, n\})$, and we consider the following classical regression model:
\begin{equation}
\label{eq:model}
y_i = f(\vec{x}_i) + \epsilon_i \; .
\end{equation}
Here, the $\epsilon_i$ are i.i.d.\ sub-Gaussian noise variables with variance proxy $\sigma^2$, mean $\EE [\epsilon_i] = 0$, and variance $s^2 = \EE[\epsilon_i^2]$ for all $i$.\footnote{We observe that $s^2$ is guaranteed to be finite since the $\epsilon_i$ are sub-Gaussian. The variance $s^2$ is in general not equal to the variance proxy $\sigma^2$, however, it is well-known that $s^2 \leq \sigma^2$.}
We will let $\epsilon = (\epsilon_1, \ldots, \epsilon_n)$ denote the vector of noise variables.
We also assume that $f: \R^d \to \R$ is a $k$-piecewise linear function. 
Formally, this means:
\begin{definition}
The function $f: \R^d \to \R$ is a \emph{$k$-piecewise linear} function if there exists a partition of the real line into $k$ disjoint intervals $I_1, \ldots, I_k$, $k$ corresponding parameters $\vec{\theta}_1, \ldots, \vec{\theta}_k \in \R^d$, and a fixed, known $j$ such that for all $\vec{x} = (x_1, \ldots, x_d) \in \R^d$ we have that $f(\vec{x}) = \langle \vec{\theta}_i, \vec{x} \rangle$ if $x_j \in I_i$.
Let $\mathcal{L}_{k,j}$ denote the space of $k$-piecewise linear functions with partition defined on coordinate $j$.

Moreover, we say $f$ is \emph{flat} on an interval $I \subseteq \R$ if $I \subseteq I_i$ for some $i = 1, \ldots, k$, otherwise, we say that $f$ has a \emph{jump} on the interval $I$. \end{definition}
Later in the paper (see Section \ref{sec:agnostic}), we also discuss the setting where the ground truth $f$ is not piecewise linear itself but only well-approximated by a $k$-piecewise linear function.
For simplicity of exposition, we assume that the partition coordinate $j$ is $1$ in the rest of the paper.

Following this generative model, a regression algorithm receives the $n$ pairs $(\vec{x}_i, y_i)$ as input.
The goal of the algorithm is then to produce an estimate $\fhat$ that is close to the true, unknown $f$ with high probability over the noise terms $\epsilon_i$ and any randomness in the algorithm.
We measure the distance between our estimate $\fhat$ and the unknown function $f$ with the classical mean-squared error:
\[\MSE (\fhat) = \frac{1}{n} \sum_{i = 1}^n (f(\vec{x_i}) - \fhat(\vec{x_i}))^2 \; .\]

Throughout this paper, we let $\vec{X} \in \R^{n \times d}$ be the \emph{data matrix}, i.e., the matrix whose $j$-th row is $\vec{x}_j^T$ for every $j$, and we let $r$ denote the rank of $\vec{X}$. 
We also assume that no $\vec{x}_i$ is the all-zeros vector, since such points are trivially fit by any linear function.

The following notation will also be useful.
For any function $f: \R^d \to \R$, we let $\vec{f} \in \R^{n}$ denote the vector with components $\vec{f}_i = f(\vec{x}_i)$ for \newjd{$i \in [n]$}.
 For any interval $I$, we let $\vec{X}^I$ denote the data matrix consisting of all data points $\vec{x}_i$ for $i \in I$, and for any vector $\vec{v} \in \R^n$, we let $\vec{v}^I \in \R^{|I|}$ be the vector of $\vec{v}_i$ for $i \in I$.

We remark that this model also contains the problem of (fixed design) piecewise polynomial regression as an important subcase. 
Indeed, imagine we are given $x_1, \ldots, x_n \in \R$ and $y_1, \ldots, y_n \in \R$ so that there is some $k$-piece degree-$d$ polynomial $p$ so that for all $i = 1, \ldots, n$ we have $y_i = p(x_i) + \epsilon_i$, and our goal is to recover $p$ in the same mean-squared sense as above.
We can write this problem as a $k$-piecewise linear fit in $\R^{d + 1}$, by associating the data vector $\vec{v}_i = (1, x_i, x_i^2, \ldots, x_i^{d})$ to each $x_i$.
If the piecewise polynomial $p$ has breakpoints at $z_1, \ldots z_{k - 1}$, the associated partition for the $k$-piecewise linear function is $\setI = \{ (-\infty, z_1), [z_1, z_2), \ldots, (z_{k - 2}, z_{k - 1}], (z_{k-1}, \infty) \}$.
If the piecewise polynomial is of the form $p(x) = \sum_{\ell = 0}^d a^I_\ell x^\ell$ for any interval $I \in \setI$, 
then for any vector $\vec{v} = (v_1, v_2, \ldots, v_{d+1}) \in \R^{d + 1}$, the ground truth $k$-piecewise linear function is simply the linear function $f(\vec{v}) = \sum_{\ell = 1}^{d+1} a^I_{\ell-1} v_\ell$ for $v_2 \in I$.
Moreover, the data matrix associated with the data points $\vec{v}$ is a Vandermonde matrix.
For $n \geq d + 1$ it is well-known that this associated Vandermonde matrix has rank exactly $d + 1$ (assuming $x_i \neq x_j$ for any $i \neq j$).

\subsection{Our Contributions}
Our main contributions are new, fast algorithms for the aforementioned segmented regression problem.
We now informally state our main results and refer to later sections for more precise theorems.
\begin{theorem}[informal statement of Theorems \ref{thm:time-greedy} and \ref{thm:greedy}]
\label{thm:inf-greedy}
There is an algorithm \textsc{GreedyMerge}, which, given $\vec{X}$ (of rank $r$), $\vec{y}$, a target number of pieces $k$, and the variance of the noise $s^2$, runs in time $O(n d^2 \log n)$ and outputs an $O(k)$-piecewise linear function $\fhat$ so that with probability at least $0.99$, we have
\[
  \MSE (\fhat) \; \leq \; O\left(\sigma^2 \frac{kr}{n} + \sigma \sqrt{\frac{k}{n}} \log n \right) \; .
\]
\end{theorem}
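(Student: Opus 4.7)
The algorithm \textsc{GreedyMerge} maintains a partition of $[n]$ into contiguous intervals, starting from the $n$ singletons, and stores for each interval $I$ the least-squares residual $\err(I) = \min_{\vec\theta \in \R^d}\|\vec{y}^I - \vec{X}^I \vec\theta\|^2$. In each round, for every adjacent pair $(I, I')$ I compute the merge cost $\Delta(I,I') = \err(I\cup I') - \err(I) - \err(I')$ and merge a maximal set of disjoint adjacent pairs for which $\Delta \le \tau$, where $\tau$ is a threshold fixed below. Maintaining Cholesky factors of $(\vec{X}^I)^\top \vec{X}^I$ under rank-one merges makes each $\Delta$ evaluation cost $O(d^2)$; since each round in which a merge is possible shrinks the number of surviving intervals by a constant factor, $O(\log n)$ rounds suffice and the total running time is $O(n d^2 \log n)$.

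For the statistical analysis, the first ingredient is a uniform concentration statement. For any interval $I$ on which $f$ coincides with a single linear function, $\err(I) = \|(\mathrm{Id} - P_I)\vec\epsilon^I\|^2$, where $P_I$ is the orthogonal projection onto the columns of $\vec{X}^I$; this is a sub-exponential quadratic form in the sub-Gaussian vector $\vec\epsilon^I$. Applying Hanson--Wright and union bounding over the $O(n^2)$ candidate intervals, one obtains $|\err(I) - s^2(|I|-r)| = O(\sigma^2 \log n + \sigma \sqrt{|I|} \log n)$ uniformly for all ``flat'' intervals $I$.

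Next I bound the number of output intervals by $O(k)$ through a structural argument: if two adjacent output intervals $I, I'$ both lie inside a single true linear piece of $f$, the uniform concentration gives $\Delta(I,I') = O(\sigma^2 r + \sigma\sqrt{n}\log n) \le \tau$, contradicting termination. Therefore each of the $k$ true pieces contains at most one flat output interval, and since at most $k-1$ output intervals can straddle a true boundary, there are at most $2k-1$ pieces in total. The MSE is then split into contributions from flat output intervals and from jump output intervals. On a flat $I$, $\fhat^I - \vec f^I = P_I \vec\epsilon^I$, whose squared norm concentrates at $s^2 r$, so summing over the $O(k)$ flat intervals gives $O(\sigma^2 kr/n)$. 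On a jump interval the bias $\|(\mathrm{Id}-P_I)\vec f^I\|^2$ is controlled by telescoping: each merge forming $I$ was accepted with cost at most $\tau$, and the bias-increasing merges (those that crossed the single boundary inside $I$) can be charged once per jump interval, yielding a bias bound of $O(\tau)$ per jump interval; combined with the $O(k)$ jump intervals and the noise cross-term, this contributes $O(k\tau/n + \sigma\sqrt{k\tau}/\sqrt{n})$ to the MSE. Setting $\tau = \Theta(\sigma^2 r + \sigma\sqrt{n/k}\log n)$ balances the two sources and produces the claimed bound.

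The hardest step is the uniform concentration, because \textsc{GreedyMerge} selects its intervals adaptively, and concentration must therefore hold simultaneously across all $\binom{n}{2}$ subintervals; fortunately the sub-exponential tails of quadratic forms make the union bound cost only a $\log n$ factor. A secondary subtle point, absent from the piecewise-constant histogram analysis of~\cite{ADHLS15,ADLS15}, is that each per-interval concentration bound carries an extra $\sigma^2 r$ term from the $r$ effective parameters of the linear fit; this forces a delicate choice of $\tau$ that is simultaneously large enough to force merges between two adjacent flats within the same true piece, yet small enough that the bias accumulated on jump intervals does not dominate the final MSE.
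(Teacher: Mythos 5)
Your algorithm is not the paper's: the paper's \textsc{GreedyMerging} merges \emph{all} candidate pairs except the $(1+1/\tau)k$ with the largest (regularized) errors in every round, which forces the interval count to shrink geometrically and yields both the $O(\log n)$ round bound and the $O(k)$ output size purely by construction. Your threshold rule (merge every adjacent pair with $\Delta(I,I')\le\tau$) has neither property for free, and the two places where you assert them are exactly where the argument breaks. First, the claim that each round shrinks the number of intervals by a constant factor is false: if only a few pairs fall below the threshold, a round removes only a few intervals, so neither the $O(\log n)$ round count nor the $O(nd^2\log n)$ running time follows. Second, and more seriously, your output-size argument is internally inconsistent with your choice of $\tau$. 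To conclude that two adjacent surviving intervals inside one true piece would have been merged, you need $\tau$ to dominate your own uniform upper bound on $\Delta$ for flat pairs, which you state as $O(\sigma^2 r+\sigma\sqrt{n}\log n)$; but you set $\tau=\Theta(\sigma^2 r+\sigma\sqrt{n/k}\log n)$, smaller by a factor of $\sqrt{k}$, so the contradiction never materializes and the algorithm may terminate with far more than $O(k)$ pieces. (A possible fix is to observe that in $\Delta(I,I')=e(I\cup I')-e(I)-e(I')$ the $\|\vec{\epsilon}\|^2$ contributions cancel exactly, leaving only projection terms of size $O(\sigma^2(r+\log n))$ --- but you neither prove nor use this cancellation, and your stated bound contradicts it.)

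The third gap is the bias bound on jump intervals. A bound $\Delta\le\tau$ on a single accepted merge controls the \emph{increase} in residual at that one merge, not the bias of the final interval: unrolling the recursion, the bias of an output interval is charged $\tau$ (plus cross terms) once for \emph{every} merge in its history that involved the jump-containing sub-interval, and that sub-interval can participate in one merge per round. Without a round bound --- which, per the first gap, you have not established --- this is $\Omega(n\tau)$ in the worst case, not $O(\tau)$, and the MSE bound collapses. By contrast, the paper sidesteps history-dependence entirely: when an interval $I$ is merged, its regularized error is at most the \emph{average} of the errors of the $k/\tau$ flat candidates present in that same round, and those are controlled directly by Lemma \ref{lem:flat}, Corollary \ref{cor:sim-incoherence}, and Corollary \ref{cor:unif-err-bound}; the $\sigma\sqrt{k/n}\log n$ term in the theorem comes precisely from the $\sum_i(\epsilon_i^2-s^2)$ fluctuations over those comparison intervals, a term your accounting never confronts because you never compare intervals of different lengths. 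Your high-level instincts (uniform concentration over all $O(n^2)$ intervals, the flat/jump decomposition, the extra $\sigma^2 r$ per fitted interval) match the paper's, but the algorithm and the core charging argument do not, and as written the proof does not go through.
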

That is, our algorithm runs in time which is \emph{nearly linear} in $n$ and still achieves a reasonable rate of convergence.
While this rate is asymptotically slower than the rate of the dynamic programming estimator, our algorithm is significantly faster than the DP so that in order to achieve a comparable MSE, our algorithm takes less total time given access to a sufficient number of samples.
For more details on this comparision and an experimental evaluation, see Sections \ref{sec:prior} and \ref{sec:experiments}.

At a high level, our algorithm proceeds as follows: it first forms a fine partition of $[n]$ and then iteratively merges pieces of this partitions until only $O(k)$ pieces are left.
In each iteration, the algorithm reduces the number of pieces in the following manner: we group consecutive intervals into pairs which we call ``candidates''.
For each candidate interval, the algorithm computes an error term that is the error of a least squares fit combined with a regularizer depending on the variance of the noise $s^2$.
The algorithm then finds the $O(k)$ candidates with the largest errors.
We do not merge these candidates, but do merge all other candidate intervals.
We repeat this process until only $O(k)$ pieces are left.

A drawback of this algorithm is that we need to know the variance of the noise $s^2$, or at least have a good estimate of it.
In practice, we might be able to obtain such an estimate, but ideally our algorithm would work without knowing $s^2$.
By extending our greedy algorithm, we obtain the following result:
\begin{theorem}[informal statement of Theorems \ref{thm:time-bucket} and \ref{thm:bucket}]
There is an algorithm \textsc{BucketGreedyMerge}, which, given $\vec{X}$ (of rank $r$), $\vec{y}$, and a target number of pieces $k$, runs in time $O(n d^2 \log n)$ and outputs an $O(k \log n)$-piecewise linear function $\fhat$ so that with probability at least $0.99$, we have
\[
  \MSE (\fhat) \; \leq \; O\left( \sigma^2 \frac{kr \log n}{n} + \sigma \sqrt{\frac{k}{n}} \log n \right) \; .
\]
\end{theorem}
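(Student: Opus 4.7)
The plan is to view \textsc{BucketGreedyMerge} as a scale-aware, rank-based variant of \textsc{GreedyMerge} in which the $s^2$-dependent threshold of the earlier analysis is replaced by simply retaining the top $O(k)$ candidate errors at each length scale. I would analyze the algorithm as starting from the trivial partition of $[n]$ into singletons and proceeding in $O(\log n)$ iterations; in iteration $t$, all active intervals have length $\Theta(2^t)$, adjacent pairs are formed into \emph{candidates}, the least-squares residual of each candidate is computed, the $O(k)$ candidates with the largest residuals are \emph{frozen} as permanent boundaries, and all other candidates are merged. Since at most $O(k)$ final boundaries are created per iteration and there are $O(\log n)$ iterations, the final partition $\Pi^{\star}$ has $O(k \log n)$ pieces and $\fhat$ is its piecewise least-squares fit.

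For the running time, the first iteration performs $O(n)$ least-squares fits on length-$2$ intervals at cost $O(d^2)$ each, and subsequent iterations operate on geometrically fewer intervals, so the total work is $O(nd^2 \log n)$. The first term $\sigma^2 k r \log n / n$ of the MSE bound follows from the classical least-squares rate applied to the final partition: for any partition into $m$ pieces where each piece has data-matrix rank at most $r$, the variance of the piecewise least-squares fit against sub-Gaussian noise is bounded by $O(\sigma^2 m r / n)$, and plugging $m = O(k \log n)$ gives the stated term.

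The crux of the proof is a probabilistic argument showing that the rank-based selection always freezes the ``bad'' candidates — those whose interval straddles a true breakpoint of $f$ — up to a controlled slack. For a ``good'' candidate contained entirely within a single true piece of $f$, the least-squares residual concentrates around $s^2 |I|$ with sub-Gaussian fluctuations of order $\sigma^2 \sqrt{|I|}\log n$. A bad candidate carries an additional bias term from fitting two distinct linear segments of $f$ with a single linear model. Since there are at most $k-1$ bad candidates per scale and we freeze the top $O(k)$ residuals, any bad candidate that is wrongly merged must have residual no larger than some good candidate at the same scale; the concentration bound then forces its bias to be at most $O(\sigma^2 \sqrt{|I|}\log n)$. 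Summing these bias contributions across the at most $O(k)$ wrongly-merged candidates per scale and across $O(\log n)$ scales yields precisely the additive second term $\sigma \sqrt{k/n}\log n$ in the MSE.

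The main obstacle I anticipate is establishing this concentration uniformly across $O(\log n)$ scales and $O(n)$ total candidates while losing only a single $\log n$ factor, particularly because the candidate pool at each scale depends on the random outcomes of earlier iterations. To keep the union bound manageable I would parameterize the class of partitions the algorithm could ever produce by the positions of their breakpoints (a set of size $\poly(n)$) and apply a sub-Gaussian tail inequality to every such candidate once, absorbing the $\log n$ factor from the union bound into the second MSE term. This is precisely the step at which a naive adaptation of the \textsc{GreedyMerge} analysis breaks down; the bucket structure seems essential because restricting each union bound to candidates of comparable length makes the concentration slack scale with $\sqrt{|I|}$ rather than the worst-case $\sqrt{n}$, which is what keeps the final error within the stated bound.
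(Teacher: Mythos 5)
Your high-level plan --- compare each merged ``jump'' candidate against a kept ``flat'' candidate of comparable length, and obtain uniform concentration by union-bounding over all $\poly(n)$ intervals rather than over algorithm executions --- is the same skeleton as the paper's proof of Theorem~\ref{thm:bucket}. But two steps as written do not go through. First, it is not true that in iteration $t$ all active intervals have length $\Theta(2^t)$: intervals that survive an iteration unmerged (the top $k+1$ by error in their bucket) keep their old length, so after a few rounds the partition contains intervals of widely varying lengths. This is exactly why the algorithm buckets candidates by length \emph{within} each iteration and keeps the top $k+1$ \emph{per bucket}; your premise would make the bucketing vacuous, and it also removes the ingredient your comparison step actually needs, namely the guarantee $|M_\ell|/2 \le |I| \le 2|M_\ell|$ between a merged jump interval $I$ and a kept flat interval $M_\ell$ in the same bucket. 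Without comparable lengths you cannot cancel the $s^2$ contributions when comparing the \emph{average} errors $\frac{1}{|I|}\norm{\vec{y}_I-\vec{\fhat}_I}^2$ and $\frac{1}{|M_\ell|}\norm{\vec{y}_{M_\ell}-\vecfls_{M_\ell}}^2$, and this cancellation is the whole mechanism by which the algorithm avoids knowing $s^2$. (Relatedly, kept intervals are not ``frozen as permanent boundaries''; they re-enter the pool, and the $O(k\log n)$ piece count comes from the termination condition together with the constant-factor shrinkage per iteration.)

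Second, your accounting of the bias term is incorrect. Summing a contribution of order $\sigma^2\sqrt{|I|}\log n$ over ``$O(k)$ wrongly-merged candidates per scale times $O(\log n)$ scales'' both double-counts (a bad interval merged at scale $t$ is swallowed by a bad interval at scale $t+1$, and only the final partition determines the MSE) and gives the wrong magnitude: $O(k\log n)$ terms each as large as $\sigma^2\sqrt{n}\log n$ yield roughly $\sigma^2 k\log^2 n/\sqrt{n}$ after dividing by $n$, not $\sigma\sqrt{k/n}\log n$. The correct accounting, as in the paper, charges error only to the at most $k$ intervals of the \emph{final} partition that contain a breakpoint of $f$ (the set $\mathcal{J}_2$); since these are disjoint, Cauchy--Schwarz gives $\sum_{I}\sqrt{|I|}\le\sqrt{kn}$ and hence the stated second term. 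A similar caveat applies to the first term: you cannot simply ``plug $m=O(k\log n)$'' into the fixed-partition least-squares rate, because the final partition is data-dependent; one needs the bound uniform over all collections of disjoint flat intervals (the paper's Lemma~\ref{lem:flat}), whose union bound is the source of the $\log(n/\delta)$ sitting next to $r$ in the precise statement.
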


At a high level, there are two fundamental changes to the algorithm: first, instead of merging with respect to the sum squared error of the least squares fit regularized by a term depending on $s^2$, we instead merge with respect to the average error the least squares fit incurs within the current interval.
The second change is more substantial: instead of finding the top $O(k)$ candidates with largest error and merging the rest, we now split the candidates into $\log n$ buckets based on the lengths of the candidate intervals.
In this bucketing scheme, bucket $\alpha$ contains all candidates with length between $2^\alpha$ and $2^{\alpha + 1}$, for $\alpha = 0, \ldots, \log n - 1$.
Then we find the $k + 1$ candidates with largest error within each bucket and merge the remaining candidate intervals.
We continue this process until we are left with $O(k \log n)$ buckets.
Intuitively, this bucketing allows us to control the variance of the noise without knowing $s^2$ because all candidate intervals have roughly the same length.

A potential disadvantage of our algorithms above is that they produce $O(k)$ or $O(k \log n)$ pieces, respectively.
In order to address this issue, we also provide a postprocessing algorithm that converts the output of any of our algorithms and decreases the number of pieces down to $2k  + 1$ while preserving the statistical guarantees above.
The guarantee of this postprocessing algorithm is as follows.
\begin{theorem}[informal statement of Theorems \ref{thm:time-postprocessing} and \ref{thm:postprocessing}]
There is an algorithm \textsc{Postprocessing} that takes as input the output of either $\textsc{GreedyMerge}$ or $\textsc{BucketGreedyMerge}$ together with a target number of pieces $k$, runs in time $O\left( k^3 d^2 \log n\right)$, and outputs a $(2k + 1)$-piecewise linear function $\fhat^p$ so that with probability at least $0.99$, we have
\[
  \MSE (\fhat^p) \; \leq \; O\left(\sigma^2 \frac{kr}{n} + \sigma \sqrt{\frac{k}{n}} \log n \right) \; .
\]
\end{theorem}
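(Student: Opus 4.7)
My plan is to implement \textsc{Postprocessing} as a dynamic program that searches over all coarsenings of the partition produced by the first-stage algorithm. Let $\fhat$ denote the input with pieces $J_1,\ldots,J_m$ where $m = O(k)$ or $m = O(k\log n)$, and for every pair of indices $1 \leq a \leq b \leq m$ precompute the least-squares residual $e(a,b) = \min_{\vec\theta \in \R^d}\sum_{i \in J_a \cup \cdots \cup J_b}(y_i - \langle\vec\theta,\vec{x}_i\rangle)^2$. A one-dimensional DP over ``number of boundaries placed so far'' and ``current position'' then finds the partition into at most $2k+1$ intervals minimizing $\sum e(a,b)$, and the corresponding piecewise LS fit is returned as $\fhat^p$. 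Each residual can be computed in $O(d^2)$ amortized time by maintaining running $\vec{X}^\top\vec{X}$ and $\vec{X}^\top\vec{y}$ sums together with a rank-$r$ linear solve, so precomputation and DP together cost $O(m^2 d^2 + m^2 k) = O(k^3 d^2 \log n)$ when $m = O(k\log n)$.

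For the MSE analysis, I would first exhibit a single $(2k+1)$-piecewise coarsening $g$ of $\fhat$'s partition whose own MSE already meets the target, and then transfer the bound to $\fhat^p$ via the usual basic-inequality argument for LS estimators. To construct $g$, observe that $f$ has $k-1$ jumps $z_1,\ldots,z_{k-1}$, each lying in exactly one input piece; call these (at most $k-1$) input pieces the \emph{straddling} pieces, keep each as a singleton piece of $g$, and form the remaining pieces of $g$ as the maximal runs of consecutive non-straddling input pieces. This uses at most $2k-1 \leq 2k+1$ pieces, each of which is either entirely contained in some true piece $I_i$ (where $f$ is linear) or is a single straddling input piece. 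The MSE of the LS fit on this partition decomposes into (i) a noise-fitting contribution of $O(\sigma^2 r)$ squared error per piece by the standard sub-Gaussian tail for $r$-dimensional LS, summing to $O(\sigma^2 kr/n)$ after averaging; and (ii) an approximation error that vanishes on non-straddling pieces and, on each of the at most $k-1$ straddling pieces, equals the squared bias of the best linear fit to $f$ there, which I would bound at the scale of $\sigma\sqrt{k/n}\log n$ using concentration combined with a union bound over the $O(m^2)$ possible straddling intervals.

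The hardest part is transferring this bound from the fixed witness $g$ to the data-adaptive $\fhat^p$, since the partition chosen by the DP depends on the noise through the residuals $e(a,b)$. The cleanest remedy is a union bound over the $\binom{m}{2k+1} \leq n^{O(k)}$ possible coarsenings: each one determines a linear subspace of $\R^n$ of dimension at most $(2k+1)r$, and the supremum of $\langle\vec\epsilon,\vec u\rangle$ over the unit spheres of these subspaces concentrates at scale $O(\sigma\sqrt{kr\log n})$ by a $\chi^2$-type tail bound. Combining this with the basic inequality $\|\vec{\fhat^p} - \vec f\|^2 \leq \|\vec g - \vec f\|^2 + 2\langle\vec\epsilon, \vec{\fhat^p} - \vec g\rangle$ (writing $\vec h \in \R^n$ for the sample vector of a function $h$), and then splitting the cross term by Cauchy--Schwarz and AM--GM, yields the stated bound with the extra logarithm appearing only inside the lower-order $\sigma\sqrt{k/n}\log n$ term rather than multiplying the main $\sigma^2 kr/n$ term. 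Making this absorption tight, rather than losing an extra $\log n$ factor on the leading term, is the subtle calculation.
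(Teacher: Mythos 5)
Your algorithm and most of the scaffolding match the paper: the DP over coarsenings of the first-stage partition, the witness $g$ obtained by keeping each jump-straddling input piece as a singleton and merging maximal runs of flat pieces (this is exactly the partition $\mathcal{K}$ in the paper's proof of Theorem \ref{thm:postprocessing}), the basic inequality $\norm{\vec{\fhat} - \vec{y}}^2 \le \norm{\vec{g}-\vec{y}}^2$, and the union bound over the $n^{O(k)}$ candidate partitions (Corollary \ref{cor:sim-incoherence}) to handle the data-adaptivity of the DP's choice. The runtime accounting is also essentially that of Theorem \ref{thm:time-postprocessing}.

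The genuine gap is in item (ii) of your MSE analysis. You claim the approximation error of $g$ on each straddling piece can be bounded at scale $\sigma\sqrt{k/n}\log n$ ``using concentration combined with a union bound over the $O(m^2)$ possible straddling intervals.'' Concentration cannot do this. On a straddling piece $I$ the quantity you must control is $\norm{\vec{g}_I - \vec{y}_I}^2 = \norm{\vec{f}_I - \vec{g}_I}^2 + 2\langle \vec{\epsilon}_I, \vec{f}_I - \vec{g}_I\rangle + \norm{\vec{\epsilon}_I}^2$, and the bias term $\norm{\vec{f}_I - \vec{g}_I}^2$ is a deterministic function of $f$ and the partition that can be arbitrarily large: if the first stage handed you a piece of length $\Omega(n)$ containing a jump of height $M$, this term is $\Omega(M^2 n)$, and no tail bound over the $O(m^2)$ intervals touches it. What actually saves the argument is that the merging criterion of the first-stage algorithm (inequality~\eqref{eq:merge-cond} and its bucketed analogue) only merges an interval containing a jump when its error is dominated by that of $k+1$ (or $k/\tau$) intervals on which $f$ is flat; this is precisely what produces~\eqref{eq:bucketjump}, namely $\sum_{I\in\mathcal{J}_2}\norm{\vec{f}_I-\vec{\fhat}_I}^2 \le O(k\sigma^2(r+\log(n/\delta))) + O(\sigma\log(n/\delta)\sqrt{kn})$. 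Since $g$ restricted to a straddling piece is the same least-squares fit the first stage produced there, the paper imports that bound wholesale. Your proof must likewise invoke the first-stage guarantee on the jump pieces: the postprocessing theorem is not true for an arbitrary $O(k\log n)$-piece input partition, so an argument that never uses the merging criterion cannot close.
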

Qualitatively, an important aspect this algorithm is that its running time depends only logarithmically on $n$.
In practice, we expect $k$ to be much smaller than $n$, and hence the running time of this postprocessing step will usually be dominated by the running time of the piecewise linear fitting algorithm run before it.

\subsection{Comparison to prior work}
\label{sec:prior}

\paragraph{Dynamic programming.}
Previous work on segmented regression with statistical guarantees ~\cite{BP98, YP13} relies heavily on dynamic programming-style algorithms to find the $k$-piecewise linear least-squares estimator.
Somewhat surprisingly, we are not aware of any work which explicitly gives the best possible running time and statistical guarantee for this algorithm.
For completeness,  we prove the following result \newjd{(Theorem~\ref{thm:inf-DP}), which we believe to be folklore. The techniques used in its proof will also be useful for us later.}
\begin{theorem}[informal statement of Theorems \ref{thm:time-DP} and \ref{thm:DP}]
\label{thm:inf-DP}
The exact dynamic program runs in time $O(n^2 (d^2 + k ))$ and outputs an $k$-piecewise linear estimator $\fhat$ so that with probability at least $0.99$ we have 
\[
  \MSE(\fhat) \; \leq \; O\left( \sigma^2 \frac{kr}{n} \right) \; .
\]
\end{theorem}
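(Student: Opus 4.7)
The plan is to prove the running time and the MSE bound independently.

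For the running time, set up the textbook dynamic program: let $A[i][j]$ be the minimum residual sum of squares attained by fitting $(\vec{x}_1,y_1),\dots,(\vec{x}_i,y_i)$ by a $j$-piecewise linear function, with recurrence
\[
  A[i][j] \;=\; \min_{0\le i'<i}\; \big(A[i'][j-1] + \err(i'+1,i)\big),
\]
where $\err(a,b)$ is the ordinary least-squares residual on the interval $[a,b]$. The table has $O(nk)$ entries, each requiring an $O(n)$ minimization, contributing $O(n^2k)$. To precompute every $\err(a,b)$, fix the left endpoint $a$ and sweep $b$ from $a$ to $n$, maintaining $(\vec{X}^{[a,b]})^T\vec{X}^{[a,b]}$ (and, via a Sherman--Morrison-style rank-one update, its inverse or a Cholesky factor) as well as $(\vec{X}^{[a,b]})^T\vec{y}^{[a,b]}$; each extension of $b$ costs $O(d^2)$ and yields $\err(a,b)$ in $O(d^2)$. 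Summing over $O(n^2)$ intervals gives $O(n^2 d^2)$ precomputation, and the overall cost is $O(n^2(d^2+k))$. Backtracking through the DP table recovers the optimizer $\hhat$ itself.

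For the MSE bound, the starting point is the standard basic inequality: since $f\in\mathcal{L}_{k,1}$ is feasible, optimality of $\fhat$ gives $\|\vec{y}-\hat{\vec{f}}\|^2\le\|\vec{y}-\vec{f}\|^2$, and substituting $\vec{y}=\vec{f}+\vec{\epsilon}$ yields
\[
  \|\hat{\vec{f}}-\vec{f}\|^2 \;\le\; 2\langle\vec{\epsilon},\,\hat{\vec{f}}-\vec{f}\rangle.
\]
The key structural observation is that $\hat{\vec{f}}-\vec{f}$ is piecewise linear on the common refinement of the partitions of $\fhat$ and $f$, which has at most $2k-1$ pieces. For any fixed partition $\mathcal{P}$ of $[n]$ into at most $2k-1$ intervals, let $V_\mathcal{P}\subseteq\R^n$ be the subspace of vectors obtained by evaluating $(2k-1)$-piecewise linear functions on $\mathcal{P}$ at the data; then $\dim V_\mathcal{P}\le(2k-1)r$. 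Cauchy--Schwarz gives $\langle\vec{\epsilon},\hat{\vec{f}}-\vec{f}\rangle\le\|\Pi_{V_\mathcal{P}}\vec{\epsilon}\|\cdot\|\hat{\vec{f}}-\vec{f}\|$, so the basic inequality collapses to $\|\hat{\vec{f}}-\vec{f}\|\le 2\|\Pi_{V_\mathcal{P}}\vec{\epsilon}\|$.

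The final ingredient is a uniform tail bound on $\|\Pi_{V_\mathcal{P}}\vec{\epsilon}\|$. For sub-Gaussian $\vec{\epsilon}$ with proxy $\sigma^2$ and any fixed subspace $V$ of dimension $D$, concentration of Lipschitz functions (or a standard $\chi^2$-type tail bound for sub-Gaussian quadratic forms) yields $\|\Pi_V\vec{\epsilon}\|^2\le C\sigma^2(D+\log(1/\delta))$ with probability $\ge 1-\delta$. There are at most $\binom{n-1}{2k-2}\le n^{O(k)}$ partitions into at most $2k-1$ intervals; taking a union bound with $\delta$ scaled by this count and setting the overall failure probability to $0.01$ gives, uniformly over $\mathcal{P}$,
\[
  \|\Pi_{V_\mathcal{P}}\vec{\epsilon}\|^2 \;=\; O\big(\sigma^2\,k(r+\log n)\big).
\]
Plugging this into $\|\hat{\vec{f}}-\vec{f}\|\le 2\|\Pi_{V_\mathcal{P}}\vec{\epsilon}\|$ and dividing by $n$ gives the advertised MSE bound (with the mild caveat that the $\log n$ contribution from the union bound is absorbed into the main $kr$ term in the informal statement; a careful write-up would state $\widetilde{O}(\sigma^2 kr/n)$).

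The main obstacle is the union bound step: one has to verify that the sub-Gaussian tail on projections kills the $n^{O(k)}$ partition count while preserving the correct linear-in-$kr$ dimension term, and keep careful track of whether the residual $\log n$ factor is negligible or must be stated explicitly. The rest is routine once the basic inequality and the $(2k-1)$-piece structure of $\hat{\vec{f}}-\vec{f}$ are in place.
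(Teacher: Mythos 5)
Your proposal is correct and follows essentially the same route as the paper: the identical DP recurrence with Sherman--Morrison rank-one updates for the $O(n^2(d^2+k))$ runtime, and for the risk bound the basic inequality $\|\hat{\vec{f}}-\vec{f}\|^2\le 2\langle\vec{\epsilon},\hat{\vec{f}}-\vec{f}\rangle$ combined with a union bound over the $n^{O(k)}$ interval partitions to control the correlation of $\vec{\epsilon}$ with a low-dimensional subspace (the paper's Corollary~\ref{cor:sim-incoherence}). The only cosmetic difference is that you place $\hat{\vec{f}}-\vec{f}$ in a $(2k-1)r$-dimensional linear subspace via the common refinement, while the paper uses a $kr$-dimensional affine subspace translated by $\vec{f}$; both give the same $O(\sigma^2 k(r+\log n)/n)$ bound, and your caveat about the residual $k\log n$ term matches the paper's own formal statement in Theorem~\ref{thm:DP}.
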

We now compare our guarantees with those of the DP.
The main advantage of our approaches is computational efficiency -- our algorithm runs in linear time, while the running time of the DP has a quadratic dependence on $n$.
While our statistical rate of convergence is slower, we are able to achieve the same MSE as the DP in asymptotically less time (and also in practice) if enough samples are available.

For instance, suppose we wish to obtain a MSE of $\eta$ with a $k$-piecewise linear function, and suppose for simplicity that $d = O(1)$ so that $r = O(1)$ as well.
Then Theorem \ref{thm:inf-DP} tells us that the DP requires $n = k / \eta$ samples and runs in time $O(k^3 / \eta^2)$.
On the other hand, Theorem \ref{thm:inf-greedy} tells us that \textsc{GreedyMerging} requires $n = \Otilde (k / \eta^2)$ samples (ignoring log factors) and thus runs in time $\Otilde(k / \eta^2)$.
For non-trivial values of $k$, this is a significant improvement in time complexity.

This gap also manifests itself strongly in our experiments (see Section \ref{sec:experiments}).
When given the same number of samples, our MSE is a factor of 2-4 times worse than that achieved by the DP, but our algorithm is about $1,000$ times faster already for  $10^4$ data points.
When more samples are available for our algorithm, it achieves the same MSE as the DP about $100$ times faster.

\paragraph{Histogram Approximation}
Our results build upon the techniques of \cite{ADHLS15}, who consider the problem of \emph{histogram approximation}.
In this setting, one is given a function $f: [n] \to \R$, and the goal is to find the best $k$-flat approximation to $f$ in sum-squared error.
This problem bears nontrivial resemblance to the problem of piecewise constant regression, and indeed, the results in this paper establish a connection between the two problems.
The histogram approximation problem has received significant attention in the database community (e.g. \cite{Jagadish98, GKS06, ADHLS15}), and it is natural to ask whether these results also imply algorithms for segmented regression.
Indeed, it is possible to convert the algorithms of \cite{GKS06} and \cite{ADHLS15} to the segmented regression setting, but the corresponding guarantees are too weak to obtain good statistical results.
At a high level, the algorithms in these papers can be adapted to output an $O(k)$-piecewise linear function $\fhat$ so that $\sum_{i = 1}^n \newjd{(y_i - \fhat(\vec{x}_i))^2} \leq C \cdot\sum_{i = 1}^n (y_i - \fls(\vec{x}_i))^2$, where $C > 1$ is a fixed constant and $\fls$ is the best $k$-piecewise linear least-squares fit to the data.
However, this guarantee by itself does not give meaningful results for segmented regression.

As a toy example, consider the $k = 1$ case.
Let $\vec{x}_1, \ldots, \vec{x}_n \in \R^n$ be arbitrary, let $f(\vec{x}) = 0$ for all $\vec{x}$, and $\epsilon_i \sim \mathcal{N} (0, 1)$, for $i = 1, \ldots, n$.
Then it is not too hard to see that the least squares fit has squared error $\sum_{i=1}^n (y_i - \fls(\vec{x}_i))^2 = \Theta (n)$ with high probability.
Hence the function $g$ which is constantly $C / 2$ for all $\vec{x}$ achieves the approximation guarantee
\[
  \sum_{i=1}^n (y_i - g(\vec{x}_i))^2 \; \leq \; C \sum_{i = 1}^n (y_i - \fls (\vec{x}_i))^2
\]
described above with non-negligible probability.
However, this function clearly does not converge to $f$ as $n \to \infty$, and indeed its MSE is always $\Theta(1)$.

To get around this difficulty, we must extend the algorithms presented in histogram approximation literature in order to adapt them to the segmented regression setting.
In the process, we introduce new algorithmic ideas and use more sophisticated proof techniques to obtain a meaningful rate of convergence for our algorithms.

\subsection{Agnostic guarantees}
\label{sec:agnosticintro}
We also consider the agnostic model, also known as learning with \emph{model misspecification}.
So far, we assumed that the ground truth is exactly a piecewise linear function.
In reality, such a notion is probably only an approximation.
While the ground truth may be close to a piecewise linear function, generally we do not believe that it exactly follows a piecewise linear function.
In this case, our goal should be to recover a piecewise linear function that is competitive with the best piecewise linear approximation to the ground truth.

Formally, we consider the following problem.
We assume the same generative model as \newjd{in~\eqref{eq:model}}, however, we no longer assume that $f$ is a piecewise linear function.
Instead, the function $f$ can now be arbitrary.
We define
\[\OPT_k = \min_{g \in \mathcal{L}_k} \MSE(g) \]
to be the error of the best fit $k$-piecewise linear function to $f$, and we let $f^\ast$ be any $k$-piecewise linear function that achieves this minimum.
Then the goal of our algorithm is to achieve an MSE as close to $\OPT_k$ as possible.
We remark that the qualitatively interesting regime is when $\OPT_k$ is small and comparable to the statistical error, and we encourage readers to think of $\OPT_k$ as being in that regime.

For clarity of exposition, we first prove non-agnostic guarantees. 
In Section \ref{sec:agnostic}, we then show how to modify these proofs to obtain agnostic guarantees with roughly the same rate of convergence.

\subsection{Mathematical Preliminaries}
In this section, we state some preliminaries that our analysis builds on.

\paragraph{Tail bounds.}
We require the following tail bound on sub-exponential random variables.
Recall that for any sub-exponential random variable $X$, the \emph{sub-exponential norm} of $X$, denoted $\| X \|_{\psi_1}$, is defined to be the smallest parameter $K$ so that $(\EE [|X|^p])^{1/p} \leq K p$ for all $p \geq 1$ (see \cite{Vershynin}).
Moreover, if $Y$ is a sub-Gaussian random variable with variance $\sigma^2$, then $Y^2 - \sigma^2$ is a centered sub-exponential random with $\| Y^2 - \sigma^2 \|_{\psi_1} = O(\sigma^2)$.
\begin{fact}[Bernstein-type inequality, c.f. Proposition 5.16 in \cite{Vershynin}]
\label{thm:bernstein}
Let $X_1, \ldots, X_N$ be centered sub-exponential random variables, and let $K = \max_i \| X_i \|_{\psi_1}$.
Then for all $t > 0,$ we have
\[
  \Pr \left[ \left| \frac{1}{\sqrt{N}} \sum_{i = 1}^N X_i \right| \; \geq \; t \right] \leq 2 \exp \left( -c \min \left( \frac{t^2}{K^2}, \frac{t \sqrt{N}}{K} \right) \right) \; .
\]
\end{fact}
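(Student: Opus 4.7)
The plan is to follow the standard Chernoff/MGF argument for sub-exponential tails. My first step is to translate the sub-exponential norm hypothesis into a moment generating function (MGF) bound. Since $(\EE[|X_i|^p])^{1/p} \leq K p$ for all $p \geq 1$, I would Taylor-expand $\EE[\exp(\lambda X_i)] = 1 + \lambda \EE[X_i] + \sum_{p \geq 2} \lambda^p \EE[X_i^p]/p!$, drop the linear term because $X_i$ is centered, apply $|\EE[X_i^p]| \leq (Kp)^p$, and use Stirling's bound $p! \geq (p/e)^p$ to obtain $|\lambda|^p (Kp)^p / p! \leq (Ke|\lambda|)^p$. Summing this geometric series for $|\lambda| \leq 1/(2eK)$ yields $\EE[\exp(\lambda X_i)] \leq 1 + O(K^2 \lambda^2) \leq \exp(c_1 K^2 \lambda^2)$ for absolute constants $c_1 > 0$ and $c_2 = 1/(2e)$, valid whenever $|\lambda| \leq c_2/K$.

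Next, by independence of the $X_i$,
\[\EE\!\left[\exp\!\left(\lambda \sum_{i=1}^N X_i\right)\right] = \prod_{i=1}^N \EE[\exp(\lambda X_i)] \leq \exp(c_1 N K^2 \lambda^2)\]
in the same range $|\lambda| \leq c_2/K$. Writing $S_N = \sum_{i=1}^N X_i$, a Chernoff bound then gives, for any $u > 0$ and any $\lambda \in (0, c_2/K]$,
\[\Pr[S_N \geq u] \leq \exp(-\lambda u)\cdot \EE[\exp(\lambda S_N)] \leq \exp\!\left(-\lambda u + c_1 N K^2 \lambda^2\right).\]

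The next step is to optimize $\lambda$ subject to its range constraint, which produces the two regimes in the stated bound. The unconstrained minimizer is $\lambda^\star = u / (2 c_1 N K^2)$. If $\lambda^\star \leq c_2/K$, i.e., $u \leq 2 c_1 c_2 N K$, plugging in yields $\exp(-u^2 / (4 c_1 N K^2))$, the sub-Gaussian regime. Otherwise $u$ is large, and I would set $\lambda = c_2/K$, giving $\exp(-c_2 u/K + c_1 c_2^2 N)$; since $u > 2 c_1 c_2 N K$ forces $c_1 c_2^2 N < c_2 u/(2K)$, this simplifies to $\exp(-c_2 u/(2K))$, the sub-exponential regime. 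Combining the two cases,
\[\Pr[S_N \geq u] \leq \exp\!\left(-c \min\!\left(\frac{u^2}{N K^2},\ \frac{u}{K}\right)\right)\]
for some absolute $c > 0$. Substituting $u = t\sqrt{N}$ yields the upper tail in the stated form, and applying the same argument to $-X_i$ and union bounding produces the factor of $2$.

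The main obstacle I anticipate is the first step: the equivalence between $\|X\|_{\psi_1}$ and an MGF bound is not a one-liner, and one has to track absolute constants carefully enough that the valid range $|\lambda| \leq c_2/K$ actually covers the unconstrained Chernoff optimum throughout the sub-Gaussian regime, so that the case split yields a clean $\min(\cdot,\cdot)$ rather than a messier expression. Once the MGF bound and its range of validity are pinned down, the Chernoff optimization and case analysis are routine.
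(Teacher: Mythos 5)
Your argument is correct and is precisely the standard proof of this inequality (it is essentially the proof of Proposition 5.16, via Lemma 5.15, in the Vershynin reference the paper cites); the paper itself states this as a known Fact and gives no proof of its own. The MGF bound, the constrained Chernoff optimization with the two regimes, and the substitution $u = t\sqrt{N}$ all check out, so there is nothing to flag.
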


This yields the following straightforward corollary:
\begin{corollary}
\label{cor:unif-err-bound}
Fix $\delta > 0$ and let $\epsilon_1, \ldots, \epsilon_n$ be as in~\eqref{eq:model}.
Recall that $s = \EE [\epsilon_i^2]$.
Then, with probability $1 - \delta$, we have that simultaneously for all intervals $I \subseteq [n]$ the following inequality holds:
\begin{equation}
\label{eq:unif-err-bound}
 \left| \sum_{i \in I} \epsilon_i^2 - s^2 |I| \right| \; \leq \; O(\sigma^2 \cdot \log (n / \delta)) \sqrt{|I|} \; .
 \end{equation}
\end{corollary}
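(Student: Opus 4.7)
The plan is a standard Bernstein-plus-union-bound argument. The key observation, already recorded in the paragraph preceding Fact \ref{thm:bernstein}, is that the variables $X_i \eqdef \epsilon_i^2 - s^2$ are centered sub-exponential with $\|X_i\|_{\psi_1} = O(\sigma^2)$, since each $\epsilon_i$ is sub-Gaussian with variance proxy $\sigma^2$. Thus the quantity we want to control, $\sum_{i \in I} \epsilon_i^2 - s^2 |I|$, is exactly $\sum_{i \in I} X_i$, a sum of centered sub-exponentials.

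First I would fix a single interval $I \subseteq [n]$ with $|I| = N$ and apply Fact \ref{thm:bernstein} with $K = O(\sigma^2)$. Choosing $t = C \sigma^2 \log(n/\delta)$ for a sufficiently large absolute constant $C$, the tail bound gives
\[
\Pr\!\left[\left|\sum_{i \in I} X_i\right| \geq t\sqrt{N}\right] \leq 2\exp\!\Bigl(-c\,\min\!\bigl(t^2/K^2,\; t\sqrt{N}/K\bigr)\Bigr).
\]
With the choice above, $t/K = \Theta(\log(n/\delta))$, so $t^2/K^2 = \Theta(\log^2(n/\delta))$ and $t\sqrt{N}/K = \Theta(\log(n/\delta)\sqrt{N}) \geq \Theta(\log(n/\delta))$ for every $N \geq 1$. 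Hence the exponent is at least $c' \log(n/\delta) \cdot \log(n/\delta)$, which for $C$ large enough is at least $3\log(n/\delta) + \log 2$, making the per-interval failure probability at most $\delta / n^3$ (say).

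Next I would take a union bound over all intervals. Since $I \subseteq [n]$ is determined by its two endpoints, there are at most $\binom{n}{2} + n \leq n^2$ candidate intervals, so the probability that the bound $|\sum_{i \in I} X_i| \leq C\sigma^2 \log(n/\delta) \sqrt{|I|}$ fails for any one of them is at most $n^2 \cdot \delta/n^3 \leq \delta$. Re-expressing $\sum_{i \in I} X_i = \sum_{i \in I} \epsilon_i^2 - s^2 |I|$ immediately yields~\eqref{eq:unif-err-bound}.

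No real obstacle arises: the only mild subtlety is making sure the constant in $t$ is large enough so that the $\min$ of the two exponent terms absorbs the $O(\log n)$ loss from the union bound over the $O(n^2)$ intervals; this is why the bound scales with $\log(n/\delta)$ rather than $\sqrt{\log(n/\delta)}$, which is precisely the form stated in the corollary.
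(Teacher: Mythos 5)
Your proof is correct and follows exactly the same Bernstein-plus-union-bound route as the paper, which states the argument in two lines; you simply fill in the explicit choice of $t$ and the verification of the exponent. (One cosmetic slip: when $|I|=1$ the minimum in the exponent is only $\Theta(\log(n/\delta))$, not $\Theta(\log^2(n/\delta))$ as you write, but that weaker bound already suffices to beat the $O(n^2)$ union bound once $C$ is large enough.)
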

\begin{proof}
Let $\delta' = O(\delta / n^2)$. For any interval $I \subseteq [n]$, by Fact \ref{thm:bernstein} applied to the random variables $\epsilon_i^2 - s^2$ for $i \in I$, we have that \eqref{eq:unif-err-bound} holds with probability $1 - \delta'$.
Thus, by a union bound over all $O(n^2)$ subintervals of $[n]$, we conclude that Equation \ref{eq:unif-err-bound} holds with probability $1 - \delta$.
\end{proof}

\paragraph{Linear regression.}
Our analysis builds on the classical results for fixed design linear regression.
In linear regression, the generative model is exactly of the form described in \eqref{eq:model}, except that $f$ is restricted to be a $1$-piecewise linear function (as opposed to a $k$-piecewise linear function), i.e., $f(\vec{x}) = \langle \vec{\theta}^\ast, \vec{x} \rangle$ for some unknown $\vec{\theta}^\ast$.

The problem of linear regression is very well understood, and the asymptotically best estimator is known to be the \emph{least-squares} estimator.
\begin{definition}
Given $\vec{x}_1, \ldots, \vec{x}_n$ and $y_1, \ldots, y_n$, the \emph{least squares estimator} $\fls$ is defined to be the linear function which minimizes $\sum_{i = 1}^n (y_i - f(\vec{x}_i))^2$ over all linear functions $f$.
For any interval $I$, we let $\fls_I$ denote the least squares estimator for the data points restricted to $I$, i.e. for the data pairs $\{(\vec{x}_i, y_i) \}_{i \in I}$.
We also let \textsc{LeastSquares} (\vec{X}, \vec{y}, I) denote an algorithm which solves linear least squares for these data points, i.e., which outputs the coefficients of the linear least squares fit for these points.
\end{definition}
Following our previous definitions, we let $\vecfls \in \R^n$ denote the vector whose $i$-th coordinate is $\fls(\vec{x}_i)$, and similarly for any $I \subseteq [n]$ we let $\vecfls_I \in \R^{|I|}$ denote the vector whose $i$-th coordinate for $i \in I$ is $\fls_I (\vec{x}_i)$.

The following prediction error rate is known for the least-squares estimator:
\begin{fact}
\label{thm:LSrate}
Let $\vec{x}_1, \ldots, \vec{x}_n$ and $y_1, \ldots, y_n$ be generated as in~\eqref{eq:model}, where $f$ is a linear function. Let $\fls (x)$ be the least squares estimator. Then,
\[\EE \left[ \MSE (\fls) \right] = O \left( \sigma^2 \frac{r}{n} \right) \; .\]
Moreover, with probability $1 - \delta$, we have
\[\MSE (\fls) = O\left( \sigma^2 \, \frac{r + \log 1 / \delta}{n} \right) \; . \]
\end{fact}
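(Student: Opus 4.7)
The plan is to exploit the projection characterization of the least squares estimator. Let $P \in \R^{n \times n}$ denote the orthogonal projection onto the column space of $\vec{X}$; since $\rank(\vec{X}) = r$, we have $\mathrm{tr}(P) = r$ and $\|P\|_F^2 = r$. Writing $\vec{y} = \vec{f} + \epsilon$ and using that $\vec{f}$ lies in the column space of $\vec{X}$ by hypothesis, we get $\vecfls = P\vec{y} = \vec{f} + P\epsilon$, so that
\[
\MSE(\fls) \;=\; \frac{1}{n}\|\vecfls - \vec{f}\|_2^2 \;=\; \frac{1}{n}\,\epsilon^\top P \epsilon\,.
\]

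For the expectation bound, I would simply compute $\EE[\epsilon^\top P \epsilon] = s^2\,\mathrm{tr}(P) = s^2 r \leq \sigma^2 r$, using independence of the $\epsilon_i$'s together with $s^2 \leq \sigma^2$. Dividing by $n$ gives the $O(\sigma^2 r/n)$ bound on the expected MSE.

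For the high-probability bound, the goal is to show concentration of the quadratic form $\epsilon^\top P \epsilon$ around its mean $s^2 r$. Choosing an orthonormal basis $\{u_1, \ldots, u_r\}$ for the range of $P$, we can write $\|P\epsilon\|_2^2 = \sum_{j=1}^r Z_j^2$ where $Z_j := \langle u_j, \epsilon\rangle$. Each $Z_j$ is a weighted sum of independent sub-Gaussians with coefficients of $\ell_2$-norm equal to $1$, hence $Z_j$ is sub-Gaussian with variance proxy $\sigma^2$, and $Z_j^2 - \EE[Z_j^2]$ is centered sub-exponential with $\psi_1$-norm $O(\sigma^2)$. Concentration of $\sum_j Z_j^2$ around $s^2 r$ then follows from the Hanson-Wright inequality applied to the quadratic form $\epsilon^\top P \epsilon$ (using $\|P\|_F^2 = r$ and $\|P\|_{\mathrm{op}} = 1$):
\[
\Pr\bigl[\,|\epsilon^\top P \epsilon - s^2 r| \geq t\,\bigr] \;\leq\; 2\exp\!\left(-c\min\!\left(\tfrac{t^2}{\sigma^4 r},\; \tfrac{t}{\sigma^2}\right)\right).
\]
Setting $t = C\sigma^2(r + \log(1/\delta))$ for a sufficiently large absolute constant $C$, and dividing through by $n$, yields the claimed high-probability MSE bound.

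The main subtlety is this last concentration step. Fact~\ref{thm:bernstein} provides Bernstein-type bounds for sums of \emph{independent} centered sub-exponentials, but the variables $Z_j^2$ are in general not independent when the noise is non-Gaussian (the Gaussian case is special because orthogonal transformations preserve i.i.d.\ Gaussianity, so the $Z_j$'s become i.i.d.\ $\mathcal{N}(0, s^2)$ and Fact~\ref{thm:bernstein} applies directly to $\sum_j Z_j^2$). For general sub-Gaussian noise one either invokes Hanson-Wright as a black box, or carries out a direct MGF/decoupling bound for $\epsilon^\top P \epsilon$ to obtain the same tail. Either way, the combinatorial content is minimal -- the result is essentially a statement about concentration of a rank-$r$ quadratic form in sub-Gaussian coordinates, and the projection identity does all the work of reducing to that standard fact.
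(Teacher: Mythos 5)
Your argument is correct, but it takes a different route from the one the paper has in mind. The paper states this Fact without proof and points to Lemma~\ref{lem:incoherence} as the key ingredient: the intended argument is the ``basic inequality'' $\norm{\vecfls - \vec{f}}^2 \le 2\langle \vec{\epsilon}, \vecfls - \vec{f}\rangle$ (exactly as in the proof of Theorem~\ref{thm:DP}), followed by bounding the right-hand side by $\norm{\vecfls - \vec{f}} \cdot \sup_{\vec{v}} |\vec{v}^T\vec{\epsilon}|/\norm{\vec{v}}$ over the $r$-dimensional column space of $\vec{X}$, which Lemma~\ref{lem:incoherence} controls by $O(\sigma\sqrt{r + \log(1/\delta)})$; dividing through gives the tail bound, and the expectation bound follows by integrating it. You instead use the projection identity $\vecfls - \vec{f} = P\vec{\epsilon}$ to reduce everything to the quadratic form $\vec{\epsilon}^T P \vec{\epsilon}$, compute its mean exactly via $\mathrm{tr}(P) = r$, and invoke Hanson--Wright for concentration. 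Both are sound. Your route gives the expectation bound essentially for free and with the sharp constant $s^2 r/n$, whereas the Lemma~\ref{lem:incoherence} route only gives it up to constants; on the other hand, your route imports Hanson--Wright as an external black box (and you are right that Fact~\ref{thm:bernstein} alone does not suffice, since the $Z_j^2$ need not be independent for non-Gaussian noise), while the paper's route uses only machinery already stated and reused throughout (Lemma~\ref{lem:incoherence} and its corollaries), which is presumably why the authors organize the preliminaries around that lemma rather than around quadratic-form concentration.
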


Fact \ref{thm:LSrate} can be proved with the following lemma, which we also use in our analysis.
The lemma bounds the correlation of a random vector with any fixed r-dimensional subspace:
\begin{lemma}[c.f. proof of Theorem 2.2 in \cite{Rig15}]
\label{lem:incoherence}
Fix $\delta > 0$.
Let $\epsilon_1, \ldots, \epsilon_n$ be i.i.d.\ sub-Gaussian random variables with variance proxy $\sigma^2$.
Let $\vec{\epsilon} = (\epsilon_1, \ldots, \epsilon_n)$, and let $S$ be a fixed $r$-dimensional affine subspace of $\R^n$.
Then, with probability $1 - \delta$, we have
\[\sup_{\vec{v} \in S \setminus \{0\}} \frac{| \vec{v}^T \vec{\epsilon} |}{\| \vec{v} \|} \; \leq \; O\left(\sigma \sqrt{r + \log (1 / \delta})\right) \; . \]
\end{lemma}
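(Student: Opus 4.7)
The plan is to reduce the supremum over $S$ to a single scalar quantity---the norm of the projection of $\vec{\epsilon}$ onto the linear span of $S$---and then apply the Bernstein bound of Fact~\ref{thm:bernstein}.

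First, let $W \subseteq \mathbb{R}^n$ denote the linear span of $S$ and set $r' \eqdef \dim W$; since $S$ is an $r$-dimensional affine subspace, $r' \leq r+1$. Let $P_W$ denote the orthogonal projection onto $W$. Every $\vec{v} \in S$ lies in $W$, so $\vec{v}^T \vec{\epsilon} = \vec{v}^T P_W \vec{\epsilon}$, and Cauchy--Schwarz yields
\[\sup_{\vec{v} \in S \setminus \{\vec{0}\}} \frac{|\vec{v}^T\vec{\epsilon}|}{\|\vec{v}\|} \;\leq\; \|P_W \vec{\epsilon}\|,\]
with equality attained at $\vec{v} = P_W \vec{\epsilon}$ (when nonzero). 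It therefore suffices to show that $\|P_W\vec{\epsilon}\| = O(\sigma \sqrt{r + \log(1/\delta)})$ holds with probability $1-\delta$.

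Next, I would pick an orthonormal basis $\vec{u}_1, \ldots, \vec{u}_{r'}$ of $W$ and expand $\|P_W \vec{\epsilon}\|^2 = \sum_{i=1}^{r'} \langle \vec{u}_i, \vec{\epsilon}\rangle^2$. Because each $\vec{u}_i$ has unit norm and the $\epsilon_j$ are independent sub-Gaussians with variance proxy $\sigma^2$, each linear combination $\langle \vec{u}_i, \vec{\epsilon}\rangle$ is sub-Gaussian with the same variance proxy $\sigma^2$. By the properties recalled just before Fact~\ref{thm:bernstein}, the centered variables $Z_i \eqdef \langle \vec{u}_i, \vec{\epsilon}\rangle^2 - \EE[\langle \vec{u}_i, \vec{\epsilon}\rangle^2]$ are then sub-exponential with $\|Z_i\|_{\psi_1} = O(\sigma^2)$, and their individual means satisfy $\EE[\langle \vec{u}_i, \vec{\epsilon}\rangle^2] \leq \sigma^2$.

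Finally, I would apply Fact~\ref{thm:bernstein} to the $r'$ variables $Z_1, \ldots, Z_{r'}$ with $K = O(\sigma^2)$; choosing the deviation so that both exponents of the Bernstein bound contribute at most $\delta$ yields $|\sum_i Z_i| = O(\sigma^2\sqrt{r' \log(1/\delta)} + \sigma^2 \log(1/\delta)) = O(\sigma^2(r + \log(1/\delta)))$ with probability $1-\delta$ (the last step by AM-GM). Combined with $\sum_i \EE[\langle \vec{u}_i, \vec{\epsilon}\rangle^2] \leq r' \sigma^2 = O(\sigma^2 r)$, this gives $\|P_W \vec{\epsilon}\|^2 = O(\sigma^2(r+\log(1/\delta)))$, and the claim follows by taking square roots. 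I do not expect a substantial obstacle here: the only mild subtlety is tracking the two regimes of Bernstein's inequality carefully so that the final rate is the claimed $\sigma\sqrt{r+\log(1/\delta)}$ rather than something weaker.
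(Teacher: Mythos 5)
The paper does not actually prove this lemma; it cites the proof of Theorem~2.2 in \cite{Rig15}, which runs a covering-net argument: take a $1/2$-net of the unit sphere of the subspace spanned by $S$, observe that each $\langle \vec{u}, \vec{\epsilon}\rangle$ is sub-Gaussian with variance proxy $\sigma^2$, union bound over the $e^{O(r)}$ net points, and pass from the net to the full sphere. Your reduction of the supremum to $\|P_W\vec{\epsilon}\|$ via Cauchy--Schwarz is correct and is a genuinely different route, but it has one real gap in the concentration step.

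The gap is independence. Fact~\ref{thm:bernstein} (Vershynin's Proposition 5.16) requires the summands to be \emph{independent} centered sub-exponential variables, but your $Z_i = \langle \vec{u}_i, \vec{\epsilon}\rangle^2 - \EE[\langle \vec{u}_i,\vec{\epsilon}\rangle^2]$ are in general dependent: for orthonormal $\vec{u}_1,\ldots,\vec{u}_{r'}$ the projections $\langle \vec{u}_i,\vec{\epsilon}\rangle$ are uncorrelated but not independent unless $\vec{\epsilon}$ is rotation-invariant (e.g.\ Gaussian). For instance, with Rademacher noise, $n=2$, $\vec{u}_1=(1,1)/\sqrt{2}$, $\vec{u}_2=(1,-1)/\sqrt{2}$, exactly one of the two projections is zero, so they are strongly dependent. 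The lemma is stated for arbitrary sub-Gaussian noise, so you cannot invoke Fact~\ref{thm:bernstein} on the $Z_i$ as written. The conclusion you are after is still true: $\|P_W\vec{\epsilon}\|^2 = \vec{\epsilon}^T P_W \vec{\epsilon}$ is a quadratic form in the independent coordinates $\epsilon_1,\ldots,\epsilon_n$, and the Hanson--Wright inequality (with $\|P_W\|_F^2 = r'$ and operator norm $1$) gives precisely $\|P_W\vec{\epsilon}\|^2 = O(\sigma^2(r+\log(1/\delta)))$ with probability $1-\delta$. So the fix is to replace the Bernstein step with Hanson--Wright, or to abandon the projection route and use the net argument of the cited reference. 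Everything else in your writeup (the span having dimension at most $r+1$, the Cauchy--Schwarz reduction, the sub-exponentiality of squared sub-Gaussians, and the two-regime arithmetic at the end) is fine.
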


This lemma also yields the two following consequences.
The first corollary bounds the correlation between sub-Gaussian random noise and any linear function on any interval:
\begin{corollary}
\label{cor:int-incoherence}
Fix $\delta > 0$ and $\vec{v} \in \R^n$.
Let $\vec{\epsilon} = (\epsilon_1, \ldots, \epsilon_n)$ be as in Lemma \ref{lem:incoherence}.
Then with probability at least $1 - \delta$, we have that for all intervals $I$, and for all non-zero linear functions $f: \R^d \to \R$, 
\[
  \frac{|\langle \vec{\epsilon}^I, \vec{f}_I + \vec{v}_I \rangle|}{ \| \vec{f}_I + \vec{v}_I \|} \; \leq \; O(\sigma \sqrt{r + \log (n / \delta)}) \; .
\]
\end{corollary}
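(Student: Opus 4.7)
The plan is to reduce the claim to Lemma \ref{lem:incoherence} by identifying, for each fixed interval $I$, an appropriate low-dimensional affine subspace of $\R^n$, applying the lemma on that subspace, and then taking a union bound over all $O(n^2)$ intervals of $[n]$. The key structural observation is that the vectors $\vec{f}_I + \vec{v}_I$, as $f$ ranges over linear functions, trace out an affine subspace whose dimension is controlled by $\rank(\vec{X})=r$.

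Concretely, for each fixed interval $I \subseteq [n]$ I would define $S_I \subseteq \R^n$ to be the set of vectors obtained by taking $\vec{f}_I + \vec{v}_I \in \R^{|I|}$ (for some linear $f:\R^d\to\R$) and zero-padding on the coordinates outside $I$. The set $\{\vec{f}_I : f \text{ linear}\}$ is exactly the column space of the submatrix $\vec{X}^I$, so it is a linear subspace of dimension at most $\rank(\vec{X}^I) \leq \rank(\vec{X}) = r$; translating by the zero-padded $\vec{v}_I$ preserves affine dimension, so $\dim S_I \leq r$. Because zero-padding preserves both inner products with $\vec{\epsilon}$ and Euclidean norms, for the element $\vec{w}\in S_I$ corresponding to $f$ we have
\[
\frac{|\langle \vec{\epsilon}^I, \vec{f}_I + \vec{v}_I \rangle|}{\|\vec{f}_I + \vec{v}_I\|} \;=\; \frac{|\langle \vec{\epsilon}, \vec{w}\rangle|}{\|\vec{w}\|}\;.
\]

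Next, I would apply Lemma \ref{lem:incoherence} to the fixed affine subspace $S_I$ with failure probability $\delta' = \delta/O(n^2)$, concluding that with probability at least $1-\delta'$,
\[
\sup_{\vec{w}\in S_I\setminus\{0\}} \frac{|\langle \vec{\epsilon}, \vec{w}\rangle|}{\|\vec{w}\|} \;\leq\; O\!\left(\sigma\sqrt{r + \log(1/\delta')}\right) \;=\; O\!\left(\sigma\sqrt{r + \log(n/\delta)}\right)\;.
\]
Since there are at most $\binom{n+1}{2} = O(n^2)$ subintervals of $[n]$, a union bound over all $I$ turns this into a simultaneous statement holding with probability at least $1-\delta$, which yields the corollary. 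There is no serious obstacle here; the proof is essentially a careful packaging of Lemma \ref{lem:incoherence}. The only points that require attention are the dimension count $\dim S_I \leq r$ (which is why the rank of the full data matrix, rather than of $\vec{X}^I$, appears in the bound) and verifying that zero-padding is an isometry that also preserves the inner product with $\vec{\epsilon}$ — both of which are immediate from the definitions.
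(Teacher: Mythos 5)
Your proposal is correct and follows essentially the same route as the paper: identify the range of $\vec{f}_I + \vec{v}_I$ as an (at most) $r$-dimensional affine subspace, apply Lemma \ref{lem:incoherence} with failure probability $\delta/O(n^2)$, and union bound over the $O(n^2)$ intervals. Your explicit treatment of the zero-padding isometry is a minor bookkeeping detail the paper leaves implicit, but the argument is the same.
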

\begin{proof}
Fix any interval $I \subseteq [n]$.
Observe that any linear function on $I$ can only take values in the range $\{ \vec{X}^I \vec{\theta} : \vec{\theta} \in \R^d\}$, and hence the range of functions of the form $f(\vec{x}_i) + \vec{v}$ is at most an $r$-dimensional affine subspace.
Thus, by Lemma \ref{lem:incoherence}, we know that for any linear function $f$,
\[
  \frac{|\langle \vec{\epsilon}_I, \vec{f}_I + \vec{v}_I \rangle|}{ \| \vec{f}_I + \vec{v}_I \|} \; \leq \; O(\sigma \sqrt{r + \log (n / \delta)}) \; .
\]
with probability $1 - O(\delta / n^2)$.
By union bounding over all $O(n^2)$ intervals, we achieve the desired result.
\end{proof}

\begin{corollary}
\label{cor:sim-incoherence}
Fix $\delta > 0$ and $\vec{v} \in \R^n$.
Let $\vec{\epsilon} = (\epsilon_1, \ldots, \epsilon_n)$ be as in Lemma \ref{lem:incoherence}.
Then with probability at least $1 - \delta$, we have
\[
  \sup_{f \in \mathcal{L}_k} \frac{\left| \langle \vec{\epsilon}, \vec{f}_I+ \vec{v}_I \rangle \right| }{\| \vec{f}_I + \vec{v}_I \|}  \; \leq \; O \left( \sigma \sqrt{k r + k \log \frac{n}{\delta}} \right) \; .
\]
\end{corollary}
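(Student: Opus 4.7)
The statement should be read with the supremum ranging over all $k$-piecewise linear functions $f \in \mathcal{L}_k$, so that $\vec f$ is the corresponding vector of evaluations on the data. The plan is to reduce the bound for $k$-piecewise linear functions to the bound for linear functions on arbitrary intervals that was already established in Corollary~\ref{cor:int-incoherence}, and then combine the $k$ pieces with Cauchy--Schwarz.

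Concretely, fix the high-probability event from Corollary~\ref{cor:int-incoherence}, which holds with probability at least $1 - \delta$ and gives, simultaneously for every interval $J \subseteq [n]$ and every linear function $g$,
\[
  \bigl|\langle \vec{\epsilon}^{J}, \vec g^{J} + \vec v^{J} \rangle\bigr|
  \;\leq\; O\!\left(\sigma \sqrt{r + \log(n/\delta)}\right)\cdot \|\vec g^{J} + \vec v^{J}\|.
\]
Now take any $f \in \mathcal{L}_k$, and let $I_1, \ldots, I_k$ be its partition of $[n]$ into at most $k$ intervals, on each of which $f$ agrees with some linear function $g_j$. Then
\[
  \langle \vec{\epsilon}, \vec f + \vec v \rangle
  \;=\; \sum_{j=1}^k \langle \vec{\epsilon}^{I_j}, \vec g_j^{I_j} + \vec v^{I_j}\rangle,
\]
and applying the bound above to each term (which is valid because $I_j$ is an interval and $g_j$ is linear) yields
\[
  \bigl|\langle \vec{\epsilon}, \vec f + \vec v \rangle\bigr|
  \;\leq\; O\!\left(\sigma \sqrt{r + \log(n/\delta)}\right)\sum_{j=1}^k \|\vec g_j^{I_j} + \vec v^{I_j}\|.
\]

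To finish, I would apply Cauchy--Schwarz to the sum of norms,
\[
  \sum_{j=1}^k \|\vec g_j^{I_j} + \vec v^{I_j}\|
  \;\leq\; \sqrt{k}\,\Bigl(\sum_{j=1}^k \|\vec g_j^{I_j} + \vec v^{I_j}\|^2\Bigr)^{1/2}
  \;=\; \sqrt{k}\,\|\vec f + \vec v\|,
\]
where the last equality uses the disjointness of the pieces $I_j$. Dividing through by $\|\vec f + \vec v\|$ and combining constants produces the claimed $O(\sigma\sqrt{kr + k\log(n/\delta)})$ bound uniformly over $f \in \mathcal{L}_k$.

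The only thing worth being careful about is that the union bound is absorbed in the right place: Corollary~\ref{cor:int-incoherence} already gives a single high-probability event that is simultaneously valid for \emph{all} intervals $J$ and all linear functions $g$, so the decomposition of a $k$-piecewise linear $f$ into $k$ linear restrictions incurs no additional probability loss, despite the exponentially many possible partitions into $k$ intervals. That is the key subtlety, and it is the reason the cost in the bound scales only like $\sqrt{k}$ rather than with the logarithm of the number of partitions.
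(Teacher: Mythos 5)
Your proof is correct, but it takes a genuinely different route from the paper. The paper fixes one of the $\binom{n+k-1}{k-1} = n^{O(k)}$ partitions of $[n]$ into $k$ intervals, observes that the $k$-piecewise linear functions respecting that partition form a $kr$-dimensional (affine, after translating by $\vec{v}$) subspace, applies Lemma~\ref{lem:incoherence} once per partition with failure probability $\delta' = \delta/n^{O(k)}$, and union bounds; the $k$-dependence enters through the subspace dimension $kr$ and through $\log(1/\delta') = O(k \log(n/\delta))$. You instead reuse the already-uniform interval-wise statement of Corollary~\ref{cor:int-incoherence}, split $\langle \vec{\epsilon}, \vec{f}+\vec{v}\rangle$ over the $k$ pieces, and pay a $\sqrt{k}$ via Cauchy--Schwarz together with $\sum_j \|\vec g_j^{I_j}+\vec v^{I_j}\|^2 = \|\vec f + \vec v\|^2$; since $\sqrt{k}\cdot\sqrt{r+\log(n/\delta)} = \sqrt{kr + k\log(n/\delta)}$, you land on the same bound. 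Your version is more modular and sidesteps the partition-counting argument entirely (the only union bound is the $O(n^2)$ one over intervals already inside Corollary~\ref{cor:int-incoherence}); the paper's version is slightly sharper in its $\delta$-dependence ($\log(1/\delta)$ rather than $k\log(1/\delta)$ under the square root, which is immaterial for the stated $O(\cdot)$) and illustrates the subspace-plus-union-bound technique that recurs elsewhere in the paper (e.g., in Lemma~\ref{lem:flat}). Both arguments are valid proofs of the corollary as stated.
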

\begin{proof}
Fix any partition of $[n]$ into $k$ intervals $\setI$, and let $S_\setI$ be the set of $k$-piecewise linear functions which are flat on each $I \in \setI$.
It is not hard to see that $S_\setI$ is a $kr$-dimensional linear subspace, and hence the set of all $k$-piecewise linear functions which are flat on each $I \in \setI$ when translated by $\vec{v}$ is a $kr$-dimensional affine subspace.
Hence Lemma \ref{lem:incoherence} implies that
\[
  \sup_{f \in S_\setI} \frac{\left| \langle \vec{\epsilon}, \vec{f}_I+ \vec{v}_I \rangle \right|}{\| \vec{f}_I + \vec{v}_I \|} \; \leq \; O \left( \sigma \sqrt{k r + \log \frac{1}{\delta'}} \right)
\]
with probability at least $1 - \delta'$.
A basic combinatorial argument shows that there are $\binom{n + k - 1}{k - 1} = n^{O(k)}$ such different partitions $\setI$.
Let $\delta' = \delta / \binom{n - k}{k}$.
Then the result follows by union bounding over all the different possible partitions.
\end{proof}

\subsection{Runtime of linear least squares}
The appeal of linear least squares is not only its statistical properties, but also that the estimator can be computed efficiently.
In our algorithm, we invoke linear least squares multiple times as a black-box subrountine.
Primarily, we use the following theorem:
\begin{fact}
Let $\vec{A} \in \R^{n \times d}$ be an arbitrary data matrix, and let $\vec{y}$ be the set of responses.
Then there is an algorithm $\textsc{LeastSquares}(A, y)$ that runs in time $O(n d^2)$ and computes the least squares fit to this data.
\end{fact}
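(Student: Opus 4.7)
The plan is to reduce the least-squares problem to the normal equations and then bound the cost of each resulting linear-algebraic step. Recall that a minimizer $\thatLS$ of $\|\vec{A}\vec{\theta} - \vec{y}\|^2$ is characterized by the normal equation $\vec{A}^T \vec{A}\, \vec{\theta} = \vec{A}^T \vec{y}$, so it suffices to form the $d \times d$ Gram matrix $\vec{B} \eqdef \vec{A}^T \vec{A}$ and the vector $\vec{c} \eqdef \vec{A}^T \vec{y}$, and then solve $\vec{B}\vec{\theta} = \vec{c}$.

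First I would form $\vec{B}$ directly. Each of the $d^2$ entries $\vec{B}_{ij} = \sum_{\ell=1}^n \vec{A}_{\ell i} \vec{A}_{\ell j}$ is an inner product of two $n$-dimensional column vectors and so costs $O(n)$ arithmetic operations, for a total of $O(nd^2)$. Next I would form $\vec{c}$ in the analogous way at cost $O(nd)$. Finally I would solve the $d \times d$ symmetric positive semidefinite system $\vec{B}\vec{\theta} = \vec{c}$ using Gaussian elimination (or Cholesky when $\vec{A}$ has full column rank) in time $O(d^3)$. Since the paper's regression setting always satisfies $n \geq d$ (data points outnumber features, which is implicit in the rank definition $r \leq d$), we have $d^3 \leq nd^2$, so the total cost is $O(nd^2)$ as claimed.

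The only mildly delicate point is rank deficiency: if $\rank(\vec{A}) < d$ then $\vec{B}$ is singular and the normal equation has infinitely many solutions. In this case I would simply return any solution, for example by pivoting in the Gaussian elimination step and setting the free variables to zero; this still takes $O(d^3)$ time and yields a valid least-squares fit (the vector $\vec{A}\thatLS$, which is the projection of $\vec{y}$ onto the column space of $\vec{A}$, is unique regardless of how ties among $\thatLS$ are broken, and it is this projection that enters all subsequent MSE bounds). Alternatively, one could compute a thin QR factorization of $\vec{A}$ via Householder reflections in $O(nd^2)$ time and then back-substitute, which gives the same overall bound without explicitly forming $\vec{B}$ and tends to be more numerically stable, but for the stated complexity the normal-equation route suffices.

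I do not anticipate any real obstacle here: the bound is the textbook running time of least-squares and simply accumulates $O(nd^2) + O(nd) + O(d^3) = O(nd^2)$. The one thing worth being careful about in the writeup is to explicitly note that we output \emph{some} least-squares minimizer (so that later invocations of $\textsc{LeastSquares}(\vec{X}, \vec{y}, I)$ are well defined even on rank-deficient sub-intervals), and to confirm that all statistical guarantees in the paper depend only on the fitted vector $\vecfls_I$ rather than on the particular coefficient vector returned.
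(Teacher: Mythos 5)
Your proposal is correct and coincides with what the paper intends: the paper states this as a \emph{Fact} with no proof, implicitly invoking exactly the classical normal-equations (or QR) algorithm you describe, so there is nothing to compare beyond noting that your $O(nd^2) + O(nd) + O(d^3)$ accounting is the standard justification. The one caveat worth keeping in mind is your step $d^3 \leq nd^2$: the paper later calls $\textsc{LeastSquares}(\vec{X}, \vec{y}, I)$ on intervals with $|I| < d$ (e.g., singletons and pairs when $d = 10$), where the $d \times d$ solve is no longer dominated by forming the Gram matrix, so in that regime one should instead work with the $|I| \times |I|$ system (or a QR factorization of $\vec{X}_I^T$) to genuinely stay within $O(|I| \cdot d^2)$.
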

There has been work on faster, approximate algorithms that would suffice for our purposes in theory.
These algorithms offer a better dependence on the dimension $d$ in exchange for slightly more complicated approximation guarantees and somewhat more complicated algorithms (for instance, see \cite{CW13}).
However, the classical algorithms for least squares with time complexity $O(n d^2)$ are more commonly used in practice.
For this reason and to simplify our exposition, we thus present our results using the running time of the classical algorithms for least squares regression.
Specifically, we write $\textsc{LeastSquares}(\vec{X}, \vec{y}, I)$ to denote an algorithm that computes a least squares fit on a given interval $I \subseteq [n]$ and assume that $\textsc{LeastSquares}(\vec{X}, \vec{y}, I)$ runs in time $O(|I| \cdot d^2)$ for all $I \subseteq [n]$.

\section{Finding the least squares estimator via dynamic programming}
In this section, we first present a dynamic programming approach (DP) to piecewise linear regression.
We do not believe these results to be novel, but to the best of our knowledge, these results appear primarily as folklore in the literature.
For completeness, we demonstrate the fastest DP we are aware of, and we also prove its statistical guarantees.
Not only will this serve as a good warm-up for the later, more complex proofs, but we will also need a variant of this result in the later analyses.
\subsection{The exact DP}
We first describe the exact dynamic program.
It will simply find the $k$-piecewise linear function which minimizes the sum-squared error to the data.
In other words, it outputs
\[
  \argmin_{f \in \mathcal{L}_k} \sum_{i = 1}^n \left(y_i - f(\vec{x}_i) \right)^2 \quad  = \quad \argmin_{f \in \mathcal{L}_k} \norm{\vec{y} - \vec{f}}^2 \; ,
\]
which is simply the least-squares fit to the data amongst all $k$-piecewise linear functions.
The dynamic program computes the estimator $f$ as follows.
For $i = 1, \ldots, n$ and $j = 1, \ldots, k$, let $A(i, j)$ denote the best error achievable by a $j$-piecewise linear function on the data pairs $\{ (\vec{x}_\ell, y_\ell) \}_{\ell = 1}^i$, that is, the best error achievable by $j$ pieces for the first $i$ data points.
Then it is not hard to see that 
\[
  A(i, j) = \min_{i' < i} \left( \mathrm{err} (\vec{X}, \vec{y}, \{i' + 1, \ldots, i \}) + A(i', j - 1) \right) \; ,
\]
where for any interval $I$, we let $\mathrm{err} (\vec{X}, \vec{y}, I)$ denote the sum-squared error to the data of the best least squares fit to the data points $\{ (\vec{x}_\ell, y_\ell) \}_{\ell \in I}$.
That is, if $\fls_I$ is the least squares fit to the data in $I$, then $ \mathrm{err} (\vec{X}, \vec{y}, I) = \norm{\vec{y}_I - \vecfls_I}^2$.

The algorithm then uses dynamic programming to fill out this $n \times k$ sized table of $A$ values, starting at $i = 1$ and $j = 1$.
After having done so, the algorithm does one additional pass backwards over the table to actually find the path through the table which achieves the best error.
We can optimize this by first computing the error quantities $\mathrm{err} (\vec{X}, \vec{y}, \{i' + 1, \ldots, i \})$ for all $i' < i$, and then using a lookup table to find their values while actually executing the DP.
Given such a lookup table, the DP runs in time $O(n^2 k)$.
Naively, the construction of this look-up table would take time which is $O(n^3 d^2)$ since there are $O(n^2)$ linear regression problems of size $O(n)$, each of which takes $O(n d^2)$ time to solve.

However, we can speed this up.
Consider a fixed interval $I \subset [n]$.
Then the least squares fit on this interval is of the form $\vec{X}_I^T \vec{X}_I \vec{\theta}_I = \vec{X}_I^T \vec{y}_I$.
Assuming the matrix $\vec{X}_I^T \vec{X}_I$ is invertible, we have that $\vec{\theta}_I = \left( \vec{X}_I^T \vec{X}_I \right)^{-1} \vec{X}_I^T \vec{y}_I$.
Moreover, the error of the fit is given by
\begin{align*}
\left\| \vec{X}_I \vec{\theta}_I - \vec{y}_I \right\|^2 &= \left( \vec{X}_I \vec{\theta}_I - \vec{y}_I \right)^T \left( \vec{X}_I \vec{\theta}_I - \vec{y}_I \right) \\
&= \vec{\theta}_I^T \vec{X}_I^T \vec{X}_I \vec{\theta}_I - 2 \vec{y}^T \vec{X}_I \vec{\theta}_I + \vec{y}_I^T \vec{y}_I \\
&= \vec{y}_I^T \vec{X}_I \left[ \left( \vec{X}_I^T \vec{X}_I \right) \right]^{-1} \vec{X}_I^T \vec{y}_I - 2 \vec{y}_I^T \vec{X}_I \vec{\theta}_I + \vec{y}_I^T \vec{y_I} \; .
\end{align*}
The main point is the following: given $\left( \vec{X}_I^T \vec{X}_I \right)^{-1}$ and $\vec{X}_I^T \vec{y}_I$, we can compute all the remaining quantities in time which is $O(d^2)$.
To compute $\left( \vec{X}_I^T \vec{X}_I \right)^{-1}$ and $\vec{X}_I^T \vec{y}_I$, we use the fact that our regression instances are not arbitrary but very closely related.
As a result, we can re-use computation from previous regression instances that we have already solved in order to speed up later regression instances.

Concretely, the calculations above indicate that we need to compute $\vec{X}_I^T \vec{y}_I$ for all intervals $I \subseteq [n]$.
But these are all just sub-vectors of the vector $\vec{X}^T \vec{y}$, which we compute once in time $O(n d)$ and then use later on in all of the remaining computations.
More non-trivially, we also need to compute $\left( \vec{X}_I \vec{X}_I \right)^{-1}$ for all $I \subseteq [n]$.
However, observe that if we let $I = \{\ell, \ldots, p - 1\}$ for $\ell < p$, then $\vec{X}_{I \cup \{p\}} ^T \vec{X}_{I \cup \{p\}} = \vec{X}_I^T \vec{X}_I + \vec{x}_p \vec{x}_p^T$, so that adding a single data point to the data matrix corresponds to a rank-one update of the data matrix.
Moreover, the effect of a rank-one update to the inverse of the matrix is well-known:
\begin{theorem}[Sherman-Morrison formula]
Suppose $\vec{M} \in \R^{d \times d}$ is an invertible square matrix, and let $\vec{v} \in \R^d$ be \newjd{such} that $1 + \vec{v}^T \vec{M}^{-1} \vec{v} \neq 0$.
Then
\[
\left( \vec{M} + \vec{v} \vec{v}^T \right)^{-1} = \vec{M}^{-1} - \frac{\vec{M}^{-1} \vec{v} \vec{v}^T \vec{M}^{-1}}{1 + \vec{v}^T \vec{M}^{-1} \vec{v}} \; .
\]
Thus, given $\vec{M}^{-1}$ and $\vec{v}$ satisfying these conditions, we may compute $\left( \vec{M} + \vec{v} \vec{v}^T \right)^{-1}$ in time $O(d^2)$.
\end{theorem}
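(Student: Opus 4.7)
The plan is to prove the Sherman-Morrison identity by direct verification: multiply the candidate inverse on the right (or equivalently on the left) by $\vec{M} + \vec{v}\vec{v}^T$ and check that the product is $\vec{I}$. Set $\alpha = 1 + \vec{v}^T \vec{M}^{-1} \vec{v}$, which is a nonzero scalar by assumption, so the expression on the right-hand side is well-defined.

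First I would compute
\[
\left(\vec{M}^{-1} - \frac{\vec{M}^{-1} \vec{v}\vec{v}^T \vec{M}^{-1}}{\alpha}\right)(\vec{M} + \vec{v}\vec{v}^T)
= \vec{I} + \vec{M}^{-1}\vec{v}\vec{v}^T - \frac{\vec{M}^{-1}\vec{v}\vec{v}^T}{\alpha} - \frac{\vec{M}^{-1}\vec{v}\vec{v}^T \vec{M}^{-1} \vec{v}\vec{v}^T}{\alpha}.
\]
The key observation is that $\vec{v}^T \vec{M}^{-1} \vec{v}$ is a scalar, so it can be pulled out of the last term. Combining the last two terms over the common denominator $\alpha$ gives a numerator of $\vec{M}^{-1}\vec{v}\vec{v}^T(1 + \vec{v}^T \vec{M}^{-1}\vec{v}) = \alpha \vec{M}^{-1}\vec{v}\vec{v}^T$, which cancels exactly against $\vec{M}^{-1}\vec{v}\vec{v}^T$. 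Hence the product equals $\vec{I}$, and since $\vec{M} + \vec{v}\vec{v}^T$ has a right inverse over a finite-dimensional vector space it is invertible with the claimed inverse. The algebraic manipulation is routine; the only place one must be careful is invoking the hypothesis $\alpha \neq 0$ to justify the division.

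For the computational claim, I would account for the cost of assembling the right-hand side assuming $\vec{M}^{-1}$ is already in hand. Compute $\vec{w} = \vec{M}^{-1}\vec{v}$ by a single matrix-vector product in $O(d^2)$ time; form the inner product $\vec{v}^T \vec{w}$ in $O(d)$ time to obtain $\alpha$; form the outer product $\vec{w}\vec{w}^T$ in $O(d^2)$ time; divide entrywise by $\alpha$ (or equivalently scale once) in $O(d^2)$ time; and finally subtract from $\vec{M}^{-1}$ in $O(d^2)$ time. The total is $O(d^2)$, as claimed, which is the crucial gain: instead of inverting the updated $d \times d$ matrix from scratch in $O(d^3)$ time, each rank-one update costs only $O(d^2)$. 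This is what allows the lookup-table precomputation in the DP to run in the stated time bound.

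There is no substantive obstacle here; the only pitfall is bookkeeping the associativity of the outer/inner products correctly so that the cancellation with $\alpha$ is exact, and remembering to state that once a right inverse exists for a square matrix, it is automatically the (two-sided) inverse.
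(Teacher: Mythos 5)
Your verification is correct: the algebra checks out (the scalar $\vec{v}^T \vec{M}^{-1} \vec{v}$ pulls out of the quartic term, the two fractions combine to cancel $\vec{M}^{-1}\vec{v}\vec{v}^T$ exactly, and a right inverse of a square matrix is the two-sided inverse), and the $O(d^2)$ accounting via $\vec{w} = \vec{M}^{-1}\vec{v}$, the inner product, and the outer product is exactly right. Note that the paper offers no proof of this statement at all — it is invoked as a classical, well-known identity — so your direct multiply-and-verify argument is the standard one and there is nothing to compare against; it is complete as written.
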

Therefore, the dynamic program can do as follows: for each $\ell = 1, \ldots, n$, first compute $\left(  \vec{X}_I^T \vec{X}_I \right)^{-1}$ for the interval $I$ of length $d$ starting at $\ell$, and then use the Sherman-Morrison update formula\footnote{In general, our matrices may not be invertible, or the update vector may not satisfy the condition in the Sherman-Morrison formula, but in practice it seems these issues don't come up and thus we do not worry about them here.
In general there are more complicated formulas for rank one updates for pseudo-inverses here but we will not cover them for simplicity.} 
to compute $\left( \vec{X}_I^T \vec{X}_I \right)^{-1}$ for every interval $I$ starting at $\ell$ in total time $O(n d^2)$.
Thus the algorithm takes $O(n^2 d^2)$ time total to compute all of the $\left( \vec{X}_I^T \vec{X}_I \right)^{-1}$.
As demonstrated earlier, this implies the following:
\begin{theorem}
\label{thm:time-DP}
The exact dynamic program runs in time $O(n^2 (d^2 + k))$.
\end{theorem}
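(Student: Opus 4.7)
The proof will essentially amount to a careful accounting of the running time of the two phases of the dynamic program already sketched in the body: (i) precomputing a lookup table of least-squares errors $\mathrm{err}(\vec{X}, \vec{y}, I)$ for every interval $I \subseteq [n]$, and (ii) filling in the $n \times k$ table $A(i,j)$ and tracing back a path. My plan is to bound each phase separately and then add the costs.

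For phase (ii), observe that once the error table is available, each entry $A(i,j)$ is computed by minimizing $\mathrm{err}(\vec{X}, \vec{y}, \{i'+1,\ldots,i\}) + A(i', j-1)$ over $i' < i$, which is $O(n)$ work per entry and $O(n^2 k)$ work in total. A backward traceback to recover the optimal partition adds only $O(n + k)$ on top of this. So phase (ii) contributes $O(n^2 k)$.

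For phase (i), the key observation is already recorded in the excerpt: given $(\vec{X}_I^T \vec{X}_I)^{-1}$ and $\vec{X}_I^T \vec{y}_I$ for an interval $I$, the quantity $\mathrm{err}(\vec{X}, \vec{y}, I) = \vec{y}_I^T \vec{X}_I (\vec{X}_I^T \vec{X}_I)^{-1} \vec{X}_I^T \vec{y}_I - 2 \vec{y}_I^T \vec{X}_I \vec{\theta}_I + \vec{y}_I^T \vec{y}_I$ can be evaluated in $O(d^2)$ time. So the task reduces to maintaining $(\vec{X}_I^T \vec{X}_I)^{-1}$ and $\vec{X}_I^T \vec{y}_I$ incrementally. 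For each starting index $\ell \in [n]$, initialize both quantities on the length-$d$ interval $I_\ell = \{\ell, \ldots, \ell + d - 1\}$ in $O(d^3)$ time (via a direct inversion), and then for each subsequent endpoint $p = \ell + d, \ldots, n$, apply the Sherman--Morrison update to pass from $(\vec{X}_{I_\ell \cup \cdots \cup \{p-1\}}^T \vec{X}_{\cdots})^{-1}$ to $(\vec{X}_{\cdots \cup \{p\}}^T \vec{X}_{\cdots \cup \{p\}})^{-1}$, and update $\vec{X}_I^T \vec{y}_I$ by adding $y_p \vec{x}_p$. Both updates take $O(d^2)$, and computing the error then takes another $O(d^2)$. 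Over all $O(n)$ endpoints and $O(n)$ starting points, this yields $O(n^2 d^2)$ total work for phase (i). (One can sidestep non-invertibility issues in the rigorous statement by appealing to pseudo-inverse updates, as the footnote in the excerpt indicates; these affect constants but not the asymptotic bound.)

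Summing the two phases gives a running time of $O(n^2 d^2 + n^2 k) = O(n^2(d^2 + k))$, which is the claimed bound. The only subtle point in the argument is the incremental maintenance of $(\vec{X}_I^T \vec{X}_I)^{-1}$: without it, a black-box least-squares call per interval would yield an $O(n^3 d^2)$ cost, dominating the DP. I do not anticipate any other substantive obstacle.
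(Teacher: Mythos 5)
Your proposal matches the paper's argument: the DP phase costs $O(n^2 k)$ given a lookup table of interval errors, and the table is built in $O(n^2 d^2)$ by maintaining $(\vec{X}_I^T \vec{X}_I)^{-1}$ and $\vec{X}_I^T \vec{y}_I$ via Sherman--Morrison rank-one updates for each fixed left endpoint, with each error evaluated in $O(d^2)$. This is essentially identical to the paper's proof, so no further comment is needed.
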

\subsection{Error analysis for the exact DP}
We now turn our attention to the learning rate of the exact DP.
We show:
\begin{theorem}
\label{thm:DP}
Let $\delta > 0$, and let $\fls$ be the $k$-piecewise linear estimator returned by the exact DP.
Then, with probability $1 - \delta$, we we have that
\[
  \MSE (\fls) \; \leq \; O\left(\sigma^2 \, \frac{kr + k \log \frac{n}{\delta}}{n}\right) \; .
\]
\end{theorem}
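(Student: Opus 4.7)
The plan is to follow the standard basic inequality argument for least squares, localized to the class $\mathcal{L}_{2k}$ of $2k$-piecewise linear functions. First, since $\fls$ is the minimizer of $\|\vec{y}-\vec{g}\|^{2}$ over all $g\in\mathcal{L}_{k}$, and the true function $f\in\mathcal{L}_{k}$, we have $\|\vec{y}-\vecfls\|^{2}\le\|\vec{y}-\vec{f}\|^{2}$. Substituting $\vec{y}=\vec{f}+\vec{\epsilon}$ on both sides and expanding the square, the $\|\vec{\epsilon}\|^{2}$ terms cancel and we obtain the standard basic inequality
\[
\|\vecfls-\vec{f}\|^{2}\;\le\;2\,\langle\vec{\epsilon},\vecfls-\vec{f}\rangle.
\]

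Second, I would observe that $\vecfls-\vec{f}$ is the restriction to the design points of a piecewise linear function with at most $2k-1$ pieces: the common refinement of the partition of $f$ and the partition chosen by the DP has at most $2k-1$ intervals, and on each such interval both $f$ and $\fls$ are linear, so their difference is linear as well. Hence $\vecfls-\vec{f}=\vec{h}$ for some $h\in\mathcal{L}_{2k}$. Applying Corollary \ref{cor:sim-incoherence} with $\vec{v}=\vec{0}$ and with $2k$ in place of $k$ (taking a union bound over the $n^{O(k)}$ partitions into $2k$ intervals, which only changes constants inside the logarithm), we get, with probability at least $1-\delta$,
\[
\sup_{h\in\mathcal{L}_{2k}}\frac{|\langle\vec{\epsilon},\vec{h}\rangle|}{\|\vec{h}\|}\;\le\;O\!\left(\sigma\sqrt{kr+k\log(n/\delta)}\right).
\]

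Third, I would combine the two displays. Writing $\vec{h}=\vecfls-\vec{f}$, the basic inequality yields
\[
\|\vec{h}\|^{2}\;\le\;2\,\langle\vec{\epsilon},\vec{h}\rangle\;\le\;2\|\vec{h}\|\cdot O\!\left(\sigma\sqrt{kr+k\log(n/\delta)}\right),
\]
so dividing by $\|\vec{h}\|$ (trivial if $\vec{h}=\vec{0}$) and squaring gives $\|\vecfls-\vec{f}\|^{2}\le O(\sigma^{2}(kr+k\log(n/\delta)))$. Dividing by $n$ yields the claimed MSE bound.

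The only subtle point, and the step I would pay closest attention to, is the application of Corollary \ref{cor:sim-incoherence} at level $2k$ rather than $k$: one must justify that $\vecfls-\vec{f}$ lives in $\mathcal{L}_{2k}$ (via the common-refinement argument above), and keep track of the fact that the supremum is being taken over all partitions into $2k$ pieces, not over the single (random, data-dependent) partition chosen by the DP. This is exactly what the union bound inside the proof of Corollary \ref{cor:sim-incoherence} provides, and it is the reason the final bound carries a $k\log n$ term rather than just $k$. Everything else is routine algebra.
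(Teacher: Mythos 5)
Your proposal is correct and follows essentially the same route as the paper: the basic inequality $\norm{\vecfls - \vec{f}}^2 \le 2\langle \vec{\epsilon}, \vecfls - \vec{f}\rangle$ followed by the union-bound correlation estimate of Corollary~\ref{cor:sim-incoherence}. The only (cosmetic) difference is that the paper invokes that corollary with the fixed shift $\vec{v} = -\vec{f}$ and the supremum over $\mathcal{L}_k$, whereas you absorb $\vec{f}$ into the function class by passing to $\mathcal{L}_{2k}$ via the common refinement; both yield the same bound up to constants.
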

\begin{proof}
We follow the proof technique for convergence of linear least squares as presented in \cite{Rig15}.
Recall $f$ is the true $k$-piecewise linear function.
Then, by the definition of the least squares fit, we have that
\[
\norm{\vec{y} - \vecfLS}^2 \; \leq \; \norm{\vec{y} - \vec{f}}^2 \; = \; \norm{\vec{\epsilon}}^2 \; .
\]
By expanding out $\norm{\vec{y} - \vecfLS}^2$, we have that
\begin{align*}
\norm{\vec{y} - \vecfLS}^2 \; &= \; \norm{\vec{f} + \vec{\epsilon} - \vecfLS}^2 \\
&= \; \norm{\vecfLS - \vec{f}}^2 + 2 \, \langle \vec{\epsilon}, \vec{f}- \vecfLS \rangle + \norm{\vec{\epsilon}}^2 \ . \numberthis \label{eq:DP-LHS}
\end{align*}
From these two calculations, we gather that 
\begin{align*}
\norm{\vecfLS - \vec{f}}^2 \; &\leq \; 2 \, \langle \vec{\epsilon}, \vec{f}- \vecfLS\rangle \\
&\leq \; O \left( \sigma \sqrt{kr + k \log \frac{n}{\delta}} \right) \cdot \| \vecfLS - \vec{f} \| \; ,
\end{align*}
with probability $1 - \delta$, where the last line follows from Corollary \ref{cor:sim-incoherence}.
A simple algebraic manipulation  from this last inequality yields the desired statement.
\end{proof}

The same proof technique can also be easily adapted to yield the following slight extension of Theorem \ref{thm:DP}, which can be proven via a union bound over all sets of $k$ disjoint intervals.
We omit a proof here for conciseness.
\begin{lemma}
\label{lem:flat}
Fix $\delta > 0$.
Then with probability $1 - \delta$ we have the following:
for all disjoint sets of $k$ intervals $I_1, \ldots, I_k$ of $[n]$ so that $f$ is flat on each $I_\ell$, the following inequality holds:
\[
  \sum_{\ell = 1}^k \norm{\vecfLS_{I_\ell} - \vec{f}_{I_\ell}}^2 \; \leq \; O (\sigma^2 \, k (r + \log (n / \delta))) \; .
\]
\end{lemma}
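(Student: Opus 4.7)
} The plan is to mimic the proof of Theorem \ref{thm:DP}, but locally on each interval, and then union-bound over the combinatorial choice of the $k$ intervals $I_1, \ldots, I_k$. First, fix any disjoint collection of $k$ intervals $I_1, \ldots, I_k \subseteq [n]$ on which $f$ is flat. Since $f$ restricted to each $I_\ell$ is a linear function, by the defining optimality of the local least-squares fit $\fls_{I_\ell}$ we have
\[
  \|\vec{y}_{I_\ell} - \vecfLS_{I_\ell}\|^2 \;\leq\; \|\vec{y}_{I_\ell} - \vec{f}_{I_\ell}\|^2 \;=\; \|\vec{\epsilon}_{I_\ell}\|^2 \; .
\]
Expanding $\vec{y}_{I_\ell} = \vec{f}_{I_\ell} + \vec{\epsilon}_{I_\ell}$ exactly as in \eqref{eq:DP-LHS} and rearranging yields
\[
  \|\vecfLS_{I_\ell} - \vec{f}_{I_\ell}\|^2 \;\leq\; 2\,\langle \vec{\epsilon}_{I_\ell}, \vecfLS_{I_\ell} - \vec{f}_{I_\ell}\rangle \; .
\]
Summing over $\ell = 1, \ldots, k$, it suffices to control the cross term $\sum_\ell \langle \vec{\epsilon}_{I_\ell}, \vecfLS_{I_\ell} - \vec{f}_{I_\ell}\rangle$ uniformly over all valid choices of $(I_1, \ldots, I_k)$.

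For a fixed tuple $(I_1, \ldots, I_k)$, consider the vector $\vec{w} \in \R^n$ obtained by zero-padding the pieces $\vecfLS_{I_\ell} - \vec{f}_{I_\ell}$ off of $\bigcup_\ell I_\ell$. Since on each piece this is the difference of two linear functions on the data points, the set of all such $\vec{w}$ (as the fits vary) lies in a linear subspace of dimension at most $k r$, exactly as in the proof of Corollary \ref{cor:sim-incoherence}. Hence Lemma \ref{lem:incoherence} gives, with probability at least $1 - \delta'$,
\[
  \sup_{\vec{w}\ne 0}\,\frac{|\langle \vec{\epsilon}, \vec{w}\rangle|}{\|\vec{w}\|} \;\leq\; O\!\left(\sigma\sqrt{k r + \log(1/\delta')}\right) \; ,
\]
and since $\|\vec{w}\|^2 = \sum_\ell \|\vecfLS_{I_\ell} - \vec{f}_{I_\ell}\|^2$, this bounds the cross term by $O(\sigma\sqrt{kr + \log(1/\delta')}) \cdot \sqrt{\sum_\ell \|\vecfLS_{I_\ell} - \vec{f}_{I_\ell}\|^2}$.

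Next, union bound over all $k$-tuples of disjoint intervals in $[n]$. A tuple is determined by at most $2k$ endpoints, giving at most $n^{O(k)}$ choices. Setting $\delta' = \delta / n^{O(k)}$ makes $\log(1/\delta') = O(k \log(n/\delta))$, so with probability $1-\delta$ the cross-term bound holds simultaneously for every admissible tuple with constant $O(\sigma\sqrt{kr + k\log(n/\delta)})$. Combining with the previous display gives
\[
  \sum_{\ell=1}^k \|\vecfLS_{I_\ell} - \vec{f}_{I_\ell}\|^2 \;\leq\; O\!\left(\sigma\sqrt{k(r + \log(n/\delta))}\right)\cdot \sqrt{\sum_{\ell=1}^k \|\vecfLS_{I_\ell} - \vec{f}_{I_\ell}\|^2} \; ,
\]
and dividing through by the square root and squaring yields the claimed $O(\sigma^2 k(r + \log(n/\delta)))$ bound.

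The only real obstacle is the counting step: one must verify that the object being supremized really does live in a $kr$-dimensional subspace (per fixed tuple) and that the number of tuples is only $n^{O(k)}$, so that the $\log(1/\delta')$ penalty from the union bound contributes only $k\log(n/\delta)$ — matching the second term of the stated bound. The rest is an essentially mechanical adaptation of the Theorem \ref{thm:DP} argument, local-to-global.
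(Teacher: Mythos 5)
Your proof is correct and follows exactly the route the paper intends: it adapts the Theorem \ref{thm:DP} argument interval-by-interval (basic inequality from least-squares optimality, then bounding the cross term via Lemma \ref{lem:incoherence} applied to a $kr$-dimensional subspace) and union-bounds over the $n^{O(k)}$ choices of $k$ disjoint intervals, which is precisely the sketch the paper gives in lieu of a full proof. No gaps; the dimension count and the counting of tuples are both handled correctly.
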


\section{A simple greedy merging algorithm}
In this section, we give a novel greedy algorithm which runs much faster than the DP, but which achieves a somewhat worse learning rate.
However, we show both theoretically and experimentally that the tradeoff between speed and statistical accuracy for this algorithm is markedly better than it is for the exact DP.

\subsection{The greedy merging algorithm}
The overall structure of the algorithm is quite similar to \cite{ADHLS15}, however, the merging criterion is different, and as explained above, the guarantees proved in that paper are insufficient to give non-trivial learning guarantees for regression.

Our algorithm here also requires an additional input $s^2$, which is defined to be the variance of the $\epsilon_i$ variables, i.e., $s^2 = \EE [\epsilon_i^2]$.
Requiring that we know $s^2$ is a drawback, and in Section \ref{sec:bucketing} we give a slightly more complicated algorithm which does not require knowledge of $s$.

We give the formal pseudocode for the \newjd{procedure} in Algorithm~\ref{alg:greedy}.
In the pseudocode we provide two additional tuning parameters $\tau, \gamma$.
This is because in general our algorithm cannot provide a $k$-histogram, but instead provides an $O(k)$ histogram, which for most practical applications suffices.
The tuning parameters allow us to trade off running time for fewer pieces.
In the typical use case we will have $\tau, \gamma = \Theta(1)$, in which case our algorithm will output an $O(k)$-piecewise linear function in time $O(n d^2 \log n)$ time.

\begin{algorithm*}[htb]
\begin{algorithmic}[1]
\Function{GreedyMerging}{$\tau, \gamma, s, \vec{X}, \vec{y}$}

\LineComment{\emph{Initial partition of $[n]$ into intervals of length $1$.}}
\State $\setI^0 \gets \{\{ 1 \}, \{ 2 \}, \ldots, \{ n \}\}$ 

\LineComment{\emph{Iterative greedy merging (we start with $j \gets 0$).}}
\While{$| \setI^j | > (2 + \frac{2}{\tau}) k + \gamma$}
  \State Let $s_j$ be the current number of intervals.
  \LineComment{\emph{Compute the least squares fit and its error for merging neighboring pairs of intervals.}}
  \For{$u \in \{1, 2, \ldots, \frac{s_j}{2}\}$}
    \State $\vec{\theta}_u \gets \textsc{LeastSquares}(\vec{X}, \vec{y}, I_{2u - 1} \cup I_{2u})$
    \State $e_u = \| \vec{y}_I - \vec{X}_I \vec{\theta}_u \|_2^2 - s^2 |I_{2u - 1} \cup I_{2u}|$
  \EndFor
  \State Let $L$ be the set of indices $u$ with the $(1 + \frac{1}{\tau})k$ largest errors $e_u$,
  \Statex $\qquad$ breaking ties arbitrarily. 
  \State Let $M$ be the set of the remaining indices. 
  \LineComment{\emph{Keep the intervals with large merging errors.}}
  \State $\setI^{j+1} \gets \bigcup\limits_{u \in L} \{I_{2u - 1}, I_{2u}\}$ 
  \LineComment{\emph{Merge the remaining intervals.}}
  \State $\setI^{j+1} \gets \setI^{j+1} \cup \{I_{2u - 1} \cup I_{2u} \, | \, u \in M \}$ 
  \State $j \gets j + 1$
\EndWhile
\State \textbf{return} the least squares fit to the data on every interval in $\setI^j$

\EndFunction
\end{algorithmic}
\caption{Piecewise linear regression by greedy merging.}
\label{alg:greedy}
\end{algorithm*}

\subsection{Runtime of \textsc{GreedyMerging}}
In this section we prove that our algorithm has the following, nearly-linear running time.
The analysis is similar to the analysis presented in \cite{ADHLS15}.
\begin{theorem}
\label{thm:time-greedy}
Let $\vec{X}$ and $\vec{y}$ be as in~\eqref{eq:model}. Then $\textsc{GreedyMerging}(\tau, \gamma, s, \vec{X}, \vec{y})$ outputs a $\left( (2 + \frac{2}{\tau}) k + \gamma \right)$-piecewise linear function and runs in time $O(nd^2 \log(n / \gamma))$.
\end{theorem}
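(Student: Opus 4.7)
The plan is to analyze the algorithm iteration-by-iteration, tracking two quantities: the number of intervals $s_j = |\setI^j|$ and the total work done in iteration $j$.

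First I would set up the recursion for $s_j$. Let $c = (1 + 1/\tau)k$. In each iteration, the algorithm pairs the $s_j$ intervals into $s_j/2$ candidates, leaves the $c$ candidates with largest error untouched (contributing $2c$ intervals to $\setI^{j+1}$), and merges each of the remaining $s_j/2 - c$ candidates into a single interval. Therefore
\[
s_{j+1} \;=\; 2c + \left(\tfrac{s_j}{2} - c\right) \;=\; \tfrac{s_j}{2} + c.
\]
Iterating, $s_j - 2c = (s_0 - 2c)/2^j = (n - 2c)/2^j$. The while loop terminates once $s_j \leq 2c + \gamma$, i.e., once $(n - 2c)/2^j \leq \gamma$, which happens after $J = O(\log(n/\gamma))$ iterations. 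At termination we have $|\setI^J| \leq 2c + \gamma = (2 + 2/\tau)k + \gamma$, giving the promised bound on the number of pieces. (A minor bookkeeping point is the parity of $s_j$; one can handle this by leaving at most one singleton interval unpaired each round, which changes none of the asymptotics.)

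Next I would bound the cost of a single iteration. Since the intervals $\{I_1, \ldots, I_{s_j}\}$ partition $[n]$, the merged candidate intervals $I_{2u-1} \cup I_{2u}$ also partition $[n]$, so $\sum_u |I_{2u-1} \cup I_{2u}| = n$. By the assumed runtime of $\textsc{LeastSquares}$ stated in Section 2.3, computing the least-squares fit for candidate $u$ takes $O(|I_{2u-1} \cup I_{2u}|\, d^2)$ time, and computing the residual error $e_u$ given $\vec{\theta}_u$ takes $O(|I_{2u-1} \cup I_{2u}|\, d)$ additional time. Summing over $u$ yields $O(n d^2)$ work for the least-squares step. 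Selecting the top $c$ errors out of $s_j/2 \leq n$ values costs $O(n)$ (e.g. by quickselect), and rebuilding $\setI^{j+1}$ is linear in $s_j$. Thus each iteration is $O(n d^2)$.

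Multiplying the per-iteration cost by the iteration count gives total runtime $O(n d^2 \log(n/\gamma))$, and the final least-squares pass at the end requires only $O(n d^2)$ additional time, which is absorbed. This establishes both claims of the theorem. There is no real obstacle here — the argument is essentially a geometric-series accounting — so the main thing to be careful about is simply that the partition property of the $\setI^j$ is preserved across iterations, which is immediate from the construction since both the unmerged pairs and the merged intervals are unions of the previous partition blocks.
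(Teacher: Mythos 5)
Your proposal is correct and follows essentially the same route as the paper: bound each iteration by $O(nd^2)$ (dominated by the least-squares fits over a partition of $[n]$, plus linear-time selection of the top errors) and multiply by an $O(\log(n/\gamma))$ iteration count. The only difference is that you derive the iteration bound explicitly via the recursion $s_{j+1} = s_j/2 + c$, whereas the paper defers this step to the analogous analysis in \cite{ADHLS15}; your version is self-contained but not a different argument.
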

Before we prove this theorem, we compare this with the running time for the exact DP as given in Theorem \ref{thm:time-DP}.
Our main advantage is that our runtime scales linearly with $n$ instead of quadratically. This manifests itself as a substantially win theoretically in most reasonable regimes, and also as a big win in practice---see our experiments for more details there.
 
\begin{proof}[Proof of Theorem \ref{thm:time-greedy}]
We first bound the time it takes to run any single iteration of the algorithm.
In any iteration $j$, we do a linear least squares regression problem on each interval $I \in \setI^j$; each such problem takes $O(|I| d^2)$ time; hence solving them all takes $O(n d^2)$ time.
Computing the $e_u$ given the least squares fit takes no additional time asymptotically,
and finding the $\left( 1 + \frac{1}{\tau}\right)$ largest errors takes linear time \cite{CLRS09}.
Afterwards the remaining computations in this iteration can easily be seen to be done in linear time.
Hence each iteration takes $O(n d^2)$ time to complete.

We now bound the number of iterations of the algorithm.
By the same analysis as that done in the proof of Theorem 3.4 in \cite{ADHLS15} one can show that the algorithm terminates after at most $\log (n / \gamma)$ iterations.
Thus the whole algorithm runs in time $O(nd^2 \log(n / \gamma))$ time, as claimed.
\end{proof}

\subsection{Analysis of \textsc{GreedyMerging}}

\begin{theorem}
\label{thm:greedy}
Let $\delta > 0$, and let $\fhat$ be the estimator returned by \textsc{GreedyMerging}.
Let $m = (2 + \frac{2}{\tau}) k + \gamma$ be the number of pieces in $\fhat$.
Then, with probability $1 - \delta$, we have that
\[\MSE (\fhat) \leq O\left( \sigma^2 \left( \frac{m (r + \log (n / \delta) )}{n} \right) + \sigma \frac{\tau + \sqrt{k}}{\sqrt{n}} \log \left( \frac{n}{\delta} \right) \right) \; .\]
\end{theorem}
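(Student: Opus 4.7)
The plan is to decompose the error over the output partition $\setI$ (of size at most $m$) into contributions from \emph{good} intervals, on which the ground truth $f$ is linear, and \emph{bad} intervals, which each contain at least one jump of $f$. Since $f$ has $k-1$ jumps and the intervals of $\setI$ are disjoint, at most $k-1$ of them are bad. Writing $G$ and $B$ for the good and bad collections,
\[
  \MSE(\fhat) \;=\; \frac{1}{n}\sum_{I \in G}\|\vec{f}_I - \vecfLS_I\|^2 \;+\; \frac{1}{n}\sum_{I \in B}\|\vec{f}_I - \vecfLS_I\|^2,
\]
and I bound these two sums separately.

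For the good sum I apply the proof technique of Lemma~\ref{lem:flat}, but extended from $k$ to $m$ intervals. The extension is routine: the union bound in the proof of Lemma~\ref{lem:flat} runs over partitions of $[n]$ into flat pieces, and replacing $k$ by $m$ incurs only a $\binom{n+m-1}{m-1} = n^{O(m)}$ combinatorial factor, so the resulting high-probability bound is $\sum_{I \in G}\|\vec{f}_I - \vecfLS_I\|^2 \le O(\sigma^2 m(r + \log(n/\delta)))$. Dividing by $n$ yields the first term in the stated MSE bound.

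For the bad sum I exploit the merging criterion to control $e_u(I) = \|\vec{y}_I - \vecfLS_I\|^2 - s^2|I|$. Substituting $\vec{y}_I = \vec{f}_I + \vec{\epsilon}_I$,
\[
  \|\vec{f}_I - \vecfLS_I\|^2 \;=\; e_u(I) \;+\; \bigl(s^2|I| - \|\vec{\epsilon}_I\|^2\bigr) \;+\; 2\bigl\langle \vecfLS_I - \vec{f}_I,\; \vec{\epsilon}_I \bigr\rangle.
\]
The middle term is $O(\sigma^2 \log(n/\delta))\sqrt{|I|}$ by Corollary~\ref{cor:unif-err-bound}. The cross term is handled by Corollary~\ref{cor:int-incoherence} with shift $\vec{v} = -\vec{f}$: even though $f$ is not linear on $I$, the estimator $\vecfLS_I$ \emph{is} linear, so $\vecfLS_I - \vec{f}_I$ lies in a fixed $r$-dimensional affine subspace, giving a cross-term bound of $O(\sigma\sqrt{r+\log(n/\delta)})\|\vecfLS_I - \vec{f}_I\|$ that can be absorbed by AM-GM into $\tfrac{1}{2}\|\vecfLS_I - \vec{f}_I\|^2 + O(\sigma^2(r + \log(n/\delta)))$. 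To control $e_u(I)$ itself, note that $I$ was created by a merge at some iteration $j_I$, at which point $e_u(I)$ was below the $(1+1/\tau)k$-th largest candidate error. Since no more than $k-1$ candidate pairs at any iteration contain a jump of $f$, at least $k/\tau + 1$ of the top $(1+1/\tau)k$ candidates are good; and for a good pair $I^{(g)}$, least-squares optimality gives $\|\vec{y}_{I^{(g)}} - \vecfLS_{I^{(g)}}\|^2 \le \|\vec{\epsilon}_{I^{(g)}}\|^2$, which together with Corollary~\ref{cor:unif-err-bound} yields $e_u(I^{(g)}) \le O(\sigma^2 \log(n/\delta))\sqrt{|I^{(g)}|}$. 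Feeding this pigeonhole bound into the per-interval expansion and summing over $I \in B$, using $|B| \le k - 1$ and $\sum_{I \in B}\sqrt{|I|} \le \sqrt{kn}$ by Cauchy-Schwarz, gives a bound matching the second term of the theorem.

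The main technical obstacle is the final accounting of $\sum_{I \in B}e_u(I)$. The pigeonhole inequality only bounds the merging threshold at iteration $j$ by the $\lceil k/\tau \rceil$-th largest good-pair error, and translating this into a sharp global bound requires an averaging step (the $\lceil k/\tau \rceil$-th largest good error is at most $\tau/k$ times the sum of all good-pair errors at that iteration), followed by Cauchy-Schwarz across iterations and across the at-most-$(k-1)$ bad intervals. The factor $\tau$ in the theorem's second term comes from the averaging step, while the factor $\sqrt{k}$ comes from Cauchy-Schwarz over the bad intervals; all remaining steps are linear-algebraic bookkeeping combined with the two incoherence/concentration corollaries stated in Section~2.
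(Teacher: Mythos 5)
Your proposal is correct and follows essentially the same route as the paper's proof: the flat/jump decomposition of the output partition, the extension of Lemma~\ref{lem:flat} from $k$ to $m$ intervals for the flat part, and, for the jump part, the pigeonhole over the top $(1+\frac{1}{\tau})k$ candidates at the merge iteration, averaging over the at least $k/\tau$ flat candidates among them, the concentration corollaries, and absorption of the cross term via a quadratic inequality before summing over the at most $k$ jump intervals with Cauchy--Schwarz. The only cosmetic differences are that you bound the flat candidates' errors directly by least-squares optimality ($\|\vec{y}_I - \vecfls_I\|^2 \le \|\vec{\epsilon}_I\|^2$, which neatly avoids the $r$-dependent terms the paper carries through this step) and that you do not separately treat never-merged singleton intervals as the paper does in its $\mathcal{J}_1$ case, which is harmless since a singleton is flat by definition.
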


\begin{proof}
We first condition on the event that Corollaries \ref{cor:unif-err-bound}, \ref{cor:int-incoherence} and \ref{cor:sim-incoherence}, and Lemma \ref{lem:flat} all hold with error parameter $O(\delta)$, so that together they all hold with probability at least $1 - \delta$.
Let $\setI = \{ I_1, \ldots, I_{m} \}$ be the final partition of $[n]$ that our algorithm produces.
Recall $f$ is the ground truth $k$-piecewise linear function.
We partition the intervals in $\setI$ into two sets:  
\begin{align*}
\mathcal{F} &= \{I \in \setI: \mbox{$f$ is flat on $I$} \} \; ,\\
\mathcal{J} &= \{I \in \setI: \mbox{$f$ has a jump on $I$} \} \; .
\end{align*}
We first bound the error over the intervals in $\mathcal{F}$.
By Lemma \ref{lem:flat}, we have
\begin{equation}
\label{eq:flat}
\sum_{I \in \mathcal{F}} \norm{\vec{f}_I - \vec{\fhat}_I}^2 \; \leq \; O (\sigma^2 |\mathcal{F}| (r + \log ( n / \delta))) \;,
\end{equation}
with probability at least $1 - O(\delta)$.

We now turn our attention to the intervals in $\mathcal{J}$ and distinguish two further cases.
We let $\mathcal{J}_1$ be the set of intervals in $\mathcal{J}$ which were never merged, and we let $\mathcal{J}_2$ be the remaining intervals.
If the interval $I \in \mathcal{J}_1$ was never merged, the interval contains one point, call it $i$.
Because we may assume that $\vec{x}_i \neq 0$, we know that for this one point, our estimator satisfies $\fhat (\vec{x}_i) = y_i$, since this is clearly the least squares fit for a linear estimator on one nonzero point.
Hence Corollary \ref{cor:unif-err-bound} implies that the following inequality holds with probability at least $1 - O(\delta)$:
\begin{align*}
\sum_{I \in \mathcal{J}_1} \norm{\vec{f}_I - \vec{\fhat}_I}^2 &= \sum_{I \in \mathcal{J}_1} \norm{\epsilon_I}^2 \\
&\leq \sigma^2 \left( \sum_{I \in \mathcal{J}_1} |I| + O \left( \log \frac{n}{\delta} \right) \sqrt{\sum_{I \in \mathcal{J}_1} |I|} \right) \\
&\leq \sigma^2 \left( m + O \left( \log \frac{n}{\delta} \right) \sqrt{m} \right) \; . \numberthis \label{eq:jump1}
\end{align*}

We now finally turn our attention to the intervals in $\mathcal{J}_2$.
Fix an interval $I \in \mathcal{J}_2$.
By definition, the interval $I$ was merged in some iteration of the algorithm.
This implies that in that iteration, there were $(1 + 1 / \tau) k$ intervals $M_1, \ldots, M_{(1 + 1/ \tau) k}$ so that for each interval $M_\ell$, we have
\begin{equation}
\label{eq:merge-cond}
\norm{\vec{y}_I - \vec{\fhat}_I}^2 - s^2 |I| \; \leq \; \norm{\vec{y}_{M_\ell} - \vecfLS_{M_\ell}}^2 - s^2 |M_\ell| \; .
\end{equation}
Since the true, underlying $k$-piecewise linear function $f$ has at most $k$ jumps, this means that there are at least $k / \tau$ intervals of the $M_\ell$ on which $f$ is flat.
WLOG assume that these intervals are $M_1, \ldots, M_{k / \tau}$.

Fix any $\ell = 1, \ldots, k / \tau$.
Expanding out the RHS of~\eqref{eq:merge-cond} using the definition of $y_i$ gives
\begin{align*}
\left\| \vec{y}_{M_\ell} - \vecfLS_{M_\ell} \right\|^2 - s^2 |M_\ell| &= \left\| \vec{f}_{M_\ell} - \vecfLS_{M_\ell} \right\|^2 + 2 \langle \vec{\epsilon}_{M_\ell}, \vec{f}_{M_\ell} - \vecfLS_{M_\ell} \rangle + \| \vec{\epsilon}_{M_\ell} \|^2 - s^2 |M_\ell| \\
&=  \left\| \vec{f}_{M_\ell} - \vecfLS_{M_\ell} \right\|^2 + 2 \langle \vec{\epsilon}_{M_\ell}, \vec{f}_{M_\ell} - \vecfLS_{M_\ell} \rangle + \sum_{i \in M_\ell} (\epsilon_i^2 - s^2) \; .
\end{align*}
Thus, we have that in aggregate,
\begin{align*}
\sum_{\ell = 1}^{k / \tau} \left\| \vec{y}_{M_\ell} - \vecfLS_{M_\ell} \right\|^2 - s^2 |M_\ell| & =  \sum_{\ell = 1}^{k / \tau} \left\| \vec{f}_{M_\ell} - \vecfLS_{M_\ell} \right\|^2 + 2 \sum_{\ell = 1}^{k / \tau} \langle \vec{\epsilon}_{M_\ell}, \vec{f}_{M_\ell} - \vecfLS_{M_\ell} \rangle + \sum_{\ell = 1}^{k / \tau}\sum_{i \in M_\ell} (\epsilon_i^2 - s^2) \; . \numberthis \label{eq:flat-allterms}
\end{align*}
We will upper bound each term on the RHS in turn.
First, since the function $f$ is flat on each $M_\ell$ for $\ell = 1, \ldots, k / \tau$, Lemma \ref{lem:flat} implies
\begin{equation}
\label{eq:flat-term1}
\sum_{\ell = 1}^{k / \tau} \left\| \vec{f}_{M_\ell} - \vecfLS_{M_\ell} \right\|^2 \leq O \left(\sigma^2 \frac{k}{\tau} \left(r + \log \frac{n}{\delta} \right) \right) \; , \end{equation}
with probability at least $1 - O(\delta)$.

Moreover, note that the function $\fls_{M_\ell}$ is a linear function on $M_\ell$ of the form $\fls_{M_\ell}(\vec{x}) = \vec{x}^T \hat{\beta}$, where $\hat{\beta} \in \R^d$ is the least-squares fit on $M_{\ell}$.
Because $\vec{f}$ is just a fixed vector, the vector $\vec{f}_{M_\ell} -\vecfLS_{M_\ell} $ lives in the affine subspace of vectors of the form $\vec{f}_{M_\ell} + (\vec{X}_{M_\ell}) \eta$ where $\eta \in \R^d$ is arbitrary.
So Corollary \ref{cor:sim-incoherence} and~\eqref{eq:flat-term1} imply that
\begin{align*}
\sum_{\ell = 1}^{k / \tau}\langle \vec{\epsilon}_{M_\ell}, \vec{f}_{M_\ell} - \vecfLS_{M_\ell} \rangle &\leq \sqrt{\sum_{\ell = 1}^{k / \tau}\langle \vec{\epsilon}_{M_\ell}, \vec{f}_{M_\ell} - \vecfLS_{M_\ell} \rangle} \cdot \sup_{\eta} \frac{|\langle \vec{\epsilon}_{M_\ell},  \vec{X} \eta \rangle |}{\| \vec{X} \eta \|} \\
&\leq O \left(\sigma^2 \frac{k}{\tau} \left(r + \log \frac{n}{ \delta} \right) \right)\; . \numberthis \label{eq:flat-term2}
\end{align*}
with probability $1 - O(\delta)$.

By Corollary \ref{cor:unif-err-bound}, we get that with probability $1 - O(\delta)$,
\[\sum_{\ell = 1}^{k / \tau} \left( \sum_{i \in M_\ell} \epsilon_i^2 - s^2 |M_\ell| \right) \leq O\left( \sigma \log \frac{n}{\delta} \right) \sqrt{n} \; .\]
Putting it all together, we get that
\begin{equation}
\label{eq:flat-term3} 
\sum_{i = 1}^{k / \tau} \left( \left\| \vec{y}_{M_\ell} - \vecfLS_{M_\ell} \right\|^2 - s^2 |M_\ell|  \right) \leq O \left(\frac{k}{\tau} \sigma^2 \left(r+ \log \frac{n}{\delta} \right) \right) + O\left(\sigma \log \frac{n}{\delta} \right) \sqrt{n} \;  
\end{equation}
with probability $1 - O(\delta)$.
Since the LHS of~\eqref{eq:merge-cond} is bounded by each individual summand above, this implies that the LHS is also bounded by their average, which implies that
\begin{align*}
\norm{\vec{y}_I - \vec{\fhat}_I}^2 - s^2 |I| \; &\leq \; \frac{\tau}{k} \sum_{i = 1}^{k / \tau} \left( \norm{\vec{y}_{M_\ell} - \vecfls_{M_\ell}}^2 - s^2 |M_\ell| \right) \\
&\leq  \; O\left(\sigma^2 \left(r + \log \left( \frac{n}{\delta} \right) \right) \right) + O\left(\sigma \log \frac{n}{\delta} \right) \frac{\tau \sqrt{n}}{k} \; . \numberthis \label{eq:upper-bound}
\end{align*}
We now similarly expand out the LHS of~\eqref{eq:merge-cond}:
\begin{align*}
\norm{\vec{y}_I - \vec{\fhat}_I}^2 - s^2 |I| \; &= \; \norm{\vec{f}_I + \vec{\epsilon}_I - \vec{\fhat}_I}^2 - s^2 |I| \\
&= \; \norm{\vec{f}_I - \vec{\fhat}_I}^2 + 2 \langle \vec{\epsilon}_I, \vec{f}_I - \vec{\fhat}_I \rangle + \norm{\vec{\epsilon}_I}^2 - s^2 |I|  \; . \numberthis \label{eq:jump-allterms}
\end{align*}
From this we are interested in obtaining an upper bound on $\sum_{i \in I} (f (\vec{x}_i) - \fhat (\vec{x}_i))^2$, hence we seek to lower bound the second and third terms of~\eqref{eq:jump-allterms}.
The calculations here will very closely mirror those done above.

By Corollary \ref{cor:int-incoherence}, we have that
\[ 
2 \langle \vec{\epsilon}_I, \vec{f}_I - \vec{\fhat}_I \rangle \; \geq \; - O \left(\sigma \sqrt{r + \log \left( \frac{n}{\delta} \right)} \right) \norm{\vec{f}_I - \vec{\fhat}_I} \; ,
\]
and by Corollary \ref{cor:unif-err-bound} we have
\[
\norm{\vec{\epsilon}_I}^2 - s^2 |I| \; \geq \; - O \left(\sigma \log \frac{n}{\delta} \right) \sqrt{|I|} \; ,
\]
and so
\begin{equation}
\norm{\vec{y}_I - \vec{\fhat}_I}^2 - s^2 |I|  \; \geq \; \norm{\vec{f}_I - \vec{\fhat}_I}^2 - O \left(\sigma \sqrt{r + \log \left( \frac{n}{\delta} \right)} \right) \norm{\vec{f}_I - \vec{\fhat}_I} - O\left(\sigma \log \frac{n}{\delta} \right) \sqrt{|I|} \; . \label{eq:lower-bound}
\end{equation}
Putting~\eqref{eq:upper-bound} and~\eqref{eq:lower-bound} together yields that with probability $1 - O(\delta)$,
\begin{align*}
\norm{\vec{f}_I - \vec{\fhat}_I}^2 \; &\leq \; O\left( \sigma^2 \left(r + \log \left( \frac{n}{\delta} \right) \right) \right) \\
&+ O \left(\sigma \sqrt{r + \log \left( \frac{n}{\delta} \right)} \right) \norm{\vec{f}_I - \vec{\fhat}_I}  + O\left( \sigma \log \frac{n}{\delta} \right) \left( \frac{\tau \sqrt{n}}{k} + \sqrt{|I|} \right) \; .
\end{align*}

Letting $z^2 = \norm{\vec{\fhat}_I - \vec{f}_I}^2$, then this inequality is of the form $z^2 \leq b z + c$ where $b, c \geq 0$.
In this specific case, we have that
\begin{align*}
b &= O \left( \sigma \sqrt{r + \log \frac{n}{\delta} } \right) \; ,~ \mbox{and} \\
c &= O \left( \sigma^2 \left(r + \log \left( \frac{n}{\delta} \right) \right) \right) + O\left( \sigma \log \frac{n}{\delta} \right) \left( \frac{\tau \sqrt{n}}{k} + \sqrt{|I|} \right)
\end{align*}
We now prove the following lemma about the behavior of such quadratic inequalities:
\begin{lemma}
\label{lem:quadratic}
Suppose \newjd{$z^2 \leq bz + c$} where $b, c \geq 0$.
Then $z^2 \leq O(b^2 + c)$.
\end{lemma}
\begin{proof}
From the quadratic formula, the inequality implies that \newjd{$z \leq \frac{b + \sqrt{b^2 + 4c}}{2}$}.
From this, it is straightforward to demonstrate the desired claim.
\end{proof}
Thus, from the lemma, we have
\[
\norm{\vec{f}_I - \vec{\fhat}_I}^2 \; \leq \; O\left( \sigma^2 \left(r + \log \left( \frac{n}{\delta} \right) \right) \right) + O\left( \sigma \log \frac{n}{\delta} \right) \left( \frac{\tau \sqrt{n}}{k} + \sqrt{|I|} \right) \; .
\]

Hence the total error over all intervals in $\mathcal{J}_2$ can be bounded by:
\begin{align*}
\sum_{I \in \mathcal{J}_2} \norm{\vec{f}_I - \vec{\fhat}_I}^2 \;  &\leq \; O \left(k \sigma^2 \left(r + \log \left( \frac{n}{\delta} \right) \right) \right) + O\left( \sigma \log \frac{n}{\delta} \right) \left( \tau \sqrt{n} + \sum_{I \in \mathcal{J}}\sqrt{|I|} \right) \\
&\; \leq O \left( k \sigma^2 \left(r + \log \left( \frac{n}{\delta'} \right) \right) \right) + O\left(\sigma \log \frac{n}{\delta} \right) \left( \tau \sqrt{n} + \sqrt{k n} \right) \numberthis \label{eq:jump2} \; .
\end{align*}
In the last line we use that the intervals $I \in \mathcal{J}_2$ are disjoint (and hence their cardinalities sum up to at most $n$), and that there are at most $k$ intervals in $\mathcal{J}_2$ because the function $f$ is $k$-piecewise linear.
Finally, applying a union bound and summing~\eqref{eq:flat},~\eqref{eq:jump1}, and~\eqref{eq:jump2} yields the desired conclusion.
\end{proof}

\section{A variance-free merging algorithm}
\label{sec:bucketing}
In this section, we give a variant of the above algorithm \newjd{that} does not require knowledge of the noise variance $s^2$.
The formal pseudo code is given in Algorithm \ref{alg:bucket}.

\begin{algorithm*}[htb]
\begin{algorithmic}[1]
\Function{BucketGreedyMerging}{$\gamma, \vec{X}, \vec{y}$}

\LineComment{Initial histogram.}
\State Let $\setI^0 \gets \{\{ 1 \}, \{ 2 \}, \ldots, \{ n \}\}$ be the initial partition of $[n]$ into intervals of length $1$.

\LineComment{Iterative greedy merging (we start with $j = 0$).}
\While{$| \setI^j | > (2(k + 1) + \gamma) \log n$}
  \State Let $s_j$ be the current number of intervals.
  \LineComment{Compute the least squares fit and its error for merging neighboring pairs of intervals.}
  \For{$u \in \{1, 2, \ldots, \frac{s_j}{2}\}$}
     \State $I'_u \gets I_{2u - 1} \cup I_{2u}$
    \State $\vec{\theta}_u \gets \textsc{LeastSquares}(\vec{X}, \vec{y}, I'_u)$
    \State $e_u = \frac{1}{|I'_u|} \| \vec{y}_I - \vec{X}_I \vec{\theta}_u \|_2^2 $
  \EndFor
  \For{$\alpha = 0, \ldots, \log (n) - 1$}
  	\State Let $B_\alpha$ be the set of indices $u$ so that $2^\alpha \leq |I'_u| \leq 2^{\alpha + 1}$
	\State Let $L_\alpha$ be the set of indices $u \in B_\alpha$ with the $k + 1$ largest $e_u$ amongst $u \in B_\alpha$,
  \Statex $\qquad$ breaking ties arbitrarily.
	\State Let $M_\alpha$ be the set of the remaining indices $u$ in $B_\alpha$.
  	\LineComment{\emph{Keep the intervals in each $B_\alpha$ with large merging errors.}}
  	\State $\setI^{j+1} \gets \bigcup\limits_{u \in L_\alpha} \{I_{2u - 1}, I_{2u}\}$ 
  	\LineComment{\emph{Merge the remaining intervals in each $B_\alpha$.}}
  	\State $\setI^{j+1} \gets \setI^{j+1} \cup \{I'_u \, | \, u \in M_\alpha \}$ 
  \EndFor
  \State $j \gets j + 1$
\EndWhile
\State \textbf{return} the least squares fit to the data on every interval in $\setI^j$

\EndFunction
\end{algorithmic}
\caption{Variance-free greedy merging with bucketing.}
\label{alg:bucket}
\end{algorithm*}

We now state the running time for \textsc{BucketGreedyMerge}.
The running time analysis for \textsc{BucketGreedyMerge} is almost identical to that of \textsc{GreedyMerge}; hence we omit its proof.
\begin{theorem}
\label{thm:time-bucket}
Let $\vec{X}$ and $\vec{y}$ be as in~\eqref{eq:model}. Then $\textsc{BucketGreedyMerging}(\gamma, \vec{X}, \vec{y})$ outputs a $\left(2 (k + 1) + \gamma \right) \log n$-piecewise linear function and runs in time $O(nd^2 \log(n / \gamma))$.
\end{theorem}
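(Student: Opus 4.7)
The plan is to mirror the proof of Theorem~\ref{thm:time-greedy} while tracking how the bucketing step modifies the recursion on the number of surviving intervals. First I would bound the work performed in a single pass through the while loop. At iteration $j$ the active partition $\setI^j$ consists of disjoint intervals covering $[n]$, so invoking $\textsc{LeastSquares}(\vec{X}, \vec{y}, I'_u)$ on each merged pair costs $O(|I'_u| d^2)$ and the total across all candidates telescopes to $O(nd^2)$. Computing each $e_u$ is free given $\vec{\theta}_u$, partitioning the candidates into buckets $B_\alpha$ is $O(s_j)$, and selecting the top $k+1$ values within each bucket is linear-time via \cite{CLRS09}. Assembling $\setI^{j+1}$ is also linear. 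So each iteration costs $O(nd^2)$.

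The main step is bounding the number of iterations, for which I would prove the recursion $s_{j+1} \leq s_j/2 + (k+1)\log n$. The key observation is that the candidates $I'_u$ form a disjoint cover of $[n]$ with $\sum_\alpha |B_\alpha| = s_j/2$. Inside bucket $B_\alpha$ the algorithm refuses to merge at most $k+1$ candidates (the set $L_\alpha$), each of which contributes its two original intervals to $\setI^{j+1}$, while every candidate in $M_\alpha$ is replaced by the single merged interval $I'_u$. Hence bucket $\alpha$ contributes $2|L_\alpha| + (|B_\alpha| - |L_\alpha|) = |B_\alpha| + |L_\alpha| \leq |B_\alpha| + (k+1)$ intervals to $\setI^{j+1}$. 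Summing over the at most $\log n$ non-empty buckets yields $s_{j+1} \leq s_j/2 + (k+1)\log n$.

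From this recursion, the fixed point of $x \mapsto x/2 + (k+1)\log n$ is $2(k+1)\log n$ and the deviation $s_j - 2(k+1)\log n$ contracts by a factor of $2$ in every step. Starting from $s_0 = n$, after $j = O(\log(n/\gamma))$ iterations we get $s_j - 2(k+1)\log n \leq n/2^j \leq \gamma \log n$, which triggers the loop guard. Multiplying the iteration count by the per-iteration cost gives the claimed $O(nd^2 \log(n/\gamma))$ total running time, and the $(2(k+1) + \gamma)\log n$ bound on the number of output pieces is immediate from the termination condition plus the fact that the returned function is the least squares fit on each interval in the final $\setI^j$.

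The main obstacle --- to the extent there is one in what is essentially a bookkeeping argument --- is getting the per-bucket accounting right and making sure the factor $\log n$ from the number of buckets enters the \emph{fixed point} of the recursion multiplicatively but does not inflate the iteration count. One also has to handle the degenerate cases $|B_\alpha| \leq k+1$ (where $L_\alpha = B_\alpha$, $M_\alpha = \emptyset$, and no merging happens in that bucket) to confirm that the contribution $|B_\alpha| + |L_\alpha|$ is still no worse than $|B_\alpha| + (k+1)$, so that the recursion holds unconditionally and the single logarithmic factor in the final runtime is preserved.
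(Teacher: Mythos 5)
Your proof is correct and follows exactly the route the paper intends: the paper omits the proof of Theorem~\ref{thm:time-bucket}, stating only that it is ``almost identical'' to that of Theorem~\ref{thm:time-greedy}, which in turn defers the iteration count to the recursion argument of \cite{ADHLS15}. Your per-bucket accounting $s_{j+1} \leq s_j/2 + (k+1)\log n$, the contraction toward the fixed point $2(k+1)\log n$, and the check of the degenerate case $|B_\alpha| \leq k+1$ are precisely the details being elided, and they are carried out correctly.
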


\subsection{Analysis of \textsc{BucketGreedyMerge}}
This section is dedicated to the proof of the following theorem:
\begin{theorem}
\label{thm:bucket}
Let $\fhat$ be the $m$-piecewise linear function that is returned by \textsc{BucketGreedyMerge}, where $m = \left(2 (k + 1) + \gamma \right) \log n$.
Then, with probability $1 - \delta$, we have
\[\MSE (\fhat) \leq O\left( \sigma^2 \left( \frac{m(r  + \log (n / \delta) )}{n} \right) + \sigma \sqrt{\frac{k}{n}} \log \left( \frac{n}{\delta} \right) \right) \; .\]
\end{theorem}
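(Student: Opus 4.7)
The proof will closely mirror the structure of Theorem~\ref{thm:greedy}, with a single critical modification to handle the absence of knowledge of $s^2$. I first condition on the good event under which Corollaries~\ref{cor:unif-err-bound},~\ref{cor:int-incoherence},~\ref{cor:sim-incoherence}, and Lemma~\ref{lem:flat} all hold at error $O(\delta)$. I then partition the final intervals $\setI$ into $\mathcal{F}$ (intervals on which $f$ is flat), $\mathcal{J}_1$ (singleton jump intervals that were never merged), and $\mathcal{J}_2$ (jump intervals that were merged at some iteration). The contributions of $\mathcal{F}$ and $\mathcal{J}_1$ to the sum-squared error are bounded exactly as in~\eqref{eq:flat} and~\eqref{eq:jump1}, now with $|\setI|=m=(2(k+1)+\gamma)\log n$, giving the $O(\sigma^2 m(r+\log(n/\delta)))$ main term of the theorem.

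The key new argument is for $\mathcal{J}_2$. Fix $I \in \mathcal{J}_2$ and consider the last iteration at which it was merged. By construction $I$ belonged to some bucket $\alpha$ with $|I| \in [2^\alpha, 2^{\alpha+1}]$, and was not among the $k+1$ largest-average-error candidates of that bucket. Hence there exist $k+1$ candidate intervals $M_1, \ldots, M_{k+1}$ in bucket $\alpha$ whose per-point least-squares errors majorize that of $I$. Since $f$ has at most $k$ jumps, pigeonhole yields some $M \in \{M_1,\ldots,M_{k+1}\}$ on which $f$ is flat. The crucial payoff of bucketing is $|I|/|M| \in [1/2, 2]$; this is precisely what will allow me to convert a per-point error bound on $M$ into a total-error bound on $I$ without a multiplicative blow-up tied to unknown quantities.

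From this point the argument parallels~\eqref{eq:flat-allterms}--\eqref{eq:upper-bound}. Expanding $\|\vec{y}_M - \vecfLS_M\|^2$ via $\vec{y}_M = \vec{f}_M + \vec{\epsilon}_M$ and applying Lemma~\ref{lem:flat} to the $\|\vec{f}_M - \vecfLS_M\|^2$ term, Corollary~\ref{cor:int-incoherence} to the cross term, and Corollary~\ref{cor:unif-err-bound} to $\|\vec{\epsilon}_M\|^2$ yields $\|\vec{y}_M - \vecfLS_M\|^2 \leq s^2|M| + O(\sigma^2(r+\log(n/\delta))) + O(\sigma^2 \log(n/\delta))\sqrt{|M|}$. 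Multiplying the merging inequality through by $|I|$, then using $|I|/|M| \leq 2$ and $\sqrt{|M|} \leq \sqrt{2|I|}$, gives
\[
  \|\vec{y}_I - \vec{\fhat}_I\|^2 \;\leq\; s^2|I| + O(\sigma^2(r+\log(n/\delta))) + O(\sigma^2\log(n/\delta))\sqrt{|I|}.
\]
Expanding the same left-hand side through $\vec{y}_I = \vec{f}_I + \vec{\epsilon}_I$ as in~\eqref{eq:jump-allterms}--\eqref{eq:lower-bound} yields a matching lower bound carrying an identical $s^2|I|$ summand. The $s^2|I|$ pieces cancel, and Lemma~\ref{lem:quadratic} reduces the remaining quadratic inequality to $\|\vec{f}_I - \vec{\fhat}_I\|^2 \leq O(\sigma^2(r+\log(n/\delta))) + O(\sigma^2 \log(n/\delta))\sqrt{|I|}$.

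I then sum over the at most $k$ intervals in $\mathcal{J}_2$, invoking Cauchy--Schwarz $\sum_{I\in\mathcal{J}_2}\sqrt{|I|} \leq \sqrt{kn}$ to obtain $\sum_{I \in \mathcal{J}_2} \|\vec{f}_I - \vec{\fhat}_I\|^2 \leq O(k\sigma^2(r+\log(n/\delta))) + O(\sigma \sqrt{kn}\,\log(n/\delta))$, in parallel with~\eqref{eq:jump2}. Adding the $\mathcal{F}$ and $\mathcal{J}_1$ contributions and dividing by $n$ yields the claimed MSE bound. I expect the main obstacle to be precisely the cancellation of the $s^2|I|$ term: without bucketing, the witness flat interval $M$ could be arbitrarily shorter or longer than $I$, so that the effective noise proxy $s^2|M|\cdot(|I|/|M|)$ would fail to align with the $s^2|I|$ that appears on the lower-bound side, and the per-point comparison would leak unbounded variance. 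The bucketing design forces these two quantities to agree up to a factor of $2$, which is exactly what makes the variance-free merging criterion sound; the price is an extra $\log n$ factor in the piece count and thus in the first (bias-like) term of the MSE.
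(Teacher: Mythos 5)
Your proposal is correct and follows essentially the same route as the paper's proof: the same decomposition into $\mathcal{F}$, $\mathcal{J}_1$, $\mathcal{J}_2$, the same pigeonhole over the $k+1$ surviving candidates within a bucket to find a flat witness $M$ with $|I|/|M| \in [1/2,2]$, and the same cancellation of the $s^2|I|$ terms between the upper and lower bounds before invoking Lemma~\ref{lem:quadratic}. Your closing remark correctly identifies the role of bucketing --- aligning the variance terms on both sides of the merging inequality up to a factor of $2$ --- which is exactly the mechanism the paper relies on.
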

\begin{proof}
As in the proof of Theorem \ref{thm:greedy}, we let $\mathcal{I}$ be the final partition of $[n]$ that our algorithm produces.
We also similarly condition on the event that Corollaries \ref{cor:unif-err-bound} and \ref{cor:sim-incoherence} and Lemma \ref{lem:flat}  all hold with parameter $O(\delta)$.
We again partition $\setI$ into two sets $\mathcal{F}$ and $\mathcal{J}$ as before, and further subdivide $\mathcal{J}$ into $\mathcal{J}_1$ and $\mathcal{J}_2$.
The error on $\mathcal{F}$ and $\mathcal{J}_1$ is the same as the error given in~\eqref{eq:flat} and~\eqref{eq:jump1}.
The only difference is the error on intervals in $\mathcal{J}_2$.

Let $I \in \mathcal{J}_2$ be fixed.
By definition, this means there was some iteration and a collection of some $k + 1$ disjoint intervals $M_1, \ldots, M_{k + 1}$ so that $|M_\ell| / 2 \leq |I| \leq 2 |M_\ell|$ and
\[\frac{1}{|I|} \left\| \vec{y}_I - \vec{\fhat}_I \right\|^2 \leq \frac{1}{|M_\ell|} \left\| \vec{y}_{M_\ell} - \vecfls_{M_\ell} \right\|^2\]
for all $\ell = 1, \ldots, k + 1$.
Since $f$ has at most $k$ jumps, this means that there is at least one interval $M_\ell$ so that $f$ is flat on $M_\ell$.
WLOG assume that $f$ is flat on $M_1$.
By the same kinds of calculations done in the proof of Theorem \ref{thm:greedy}, we have that with probability $1 - \delta$,
\[ \left\| \vec{y}_{M_1} - \vecfls_{M_1} \right\|^2 \leq  O \left( \sigma^2 ( r + \log \frac{n}{\delta} ) \right) + \sigma^2 |M_1 | +  O(\sigma \log (n / \delta)) \sqrt{|M_1|} \]
and
\begin{align*}
\left\| \vec{y}_I - \vec{\fhat}_I \right\|^2 &\geq  \left\| \vec{f}_I - \vec{\fhat}_I \right\|^2 -  O \left( \sigma \sqrt{r + \log (n / \delta) } \right) \left\| \vec{y}_I - \vec{\fhat}_I \right\| + \sigma^2 |I| - O(\sigma \log (n / \delta)) \sqrt{|I|} \; ,
\end{align*}
and putting these two equations together and rearranging, we have
\begin{align*}
 \left\| \vec{f}_I - \vec{\fhat}_I \right\|^2 &\leq O \left( \sigma^2 (r + \log (n / \delta) ) \right) + \sigma^2 \cdot O(\log (n / \delta)) \left( \frac{|I|}{\sqrt{|M_\ell|}} + \sqrt{|I|} \right) \\
&~~~~ + O \left( \sigma \sqrt{r + \log (n / \delta) } \right) \left\| \vec{y}_I - \vec{\fhat}_I \right\|\\
&\leq  O \left( \sigma^2 (r+ \log (n / \delta) ) \right) + O\left( \sigma^2 \log (n / \delta) \sqrt{|I|} \right) + O \left( \sigma \sqrt{r + \log (n / \delta) } \right) \left\| \vec{y}_I - \vec{\fhat}_I \right\| \; ,
\end{align*}
where in the last inequality we used that $|M_\ell| \geq |I| / 2$.
By Lemma \ref{lem:quadratic}, this implies that
\[
\left\| \vec{f}_I - \vec{\fhat}_I \right\|^2 \leq O \left( \sigma^2 (r+ \log (n / \delta) ) \right) + O\left( \sigma^2 \log (n / \delta) \sqrt{|I|} \right) \; .
\]
Since there are at most $k$ elements in $\mathcal{J}_2$, and they are all disjoint, this yields that
\begin{equation}
\label{eq:bucketjump}
\sum_{I \in \mathcal{J}_2} \left\| \vec{f}_I - \vec{\fhat}_I \right\|^2 \leq O \left( k \sigma^2 \left( r + \log \frac{n}{\delta} \right) \right) + O\left( \sigma  \log \left( \frac{n}{\delta} \right) \sqrt{k n} \right)
\end{equation}
and putting this equation together with~\eqref{eq:flat} and~\eqref{eq:jump1} yields the desired conclusion.
\end{proof}

\subsection{Postprocessing}
One unsatisfying aspect of \textsc{BucketGreedyMerge} is that it outputs $O(k \log n)$ pieces. 
Not only does this increase the size of the representation nontrivially when $n$ is large, but it also increases the error rate: it is the reason why the first term in the error rate given in Theorem \ref{thm:bucket} has an additional $\log n$ factor as opposed the rate in Theorem \ref{thm:greedy}.
In this section, we give an efficient postprocessing procedure for \textsc{BucketGreedyMerge} which, when run on the output of \textsc{BucketGreedyMerge}, outputs an $O(k)$ histogram with the same rate as before.
In fact, the rate is slightly improved as we are able to remove the $\log n$ factor mentioned above.

The postprocessing algorithm \textsc{Postprocessing} takes as input a partition $\setI$ of $[n]$ of size $O(k \log n)$ and a target number of pieces $k$. 
It then performs the following steps: starting from the $O(k \log n)$ partition $\setI$, run the dynamic program (DP) on intervals with breakpoints amongst the breakpoints of $\setI$ to find the $2k + 1$ partition $\setI_p$ (whose endpoints are also endpoints of $\setI$) that minimizes the sum squared error to the data.
The running time analysis is identical to that of the DP with two exceptions: first, we only need to fill out a $O(k \log n) \times (2k + 1)$ size table (as compared to an $n \times k$ sized table).
Second, we are no longer performing rank one updates because we instead compute updates in large chunks, we cannot use the Sherman-Morrison formula to speed up the computation of the least-squares fits.
Hence, we obtain the following theorem:
\begin{theorem}
\label{thm:time-postprocessing}
Given a partition $\setI$ of $[n]$ into $O(k \log n)$ pieces, $\textsc{Postprocessing}(\setI, k)$ runs in time $\Otilde(k^3 d^2)$, and outputs a $(2k + 1)$-piecewise linear function, where the $\Otilde$ hides $\poly \log (n)$ factors.
\end{theorem}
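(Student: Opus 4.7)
The plan is to track the running time of the modified DP component-by-component, mirroring the analysis of the exact DP in Theorem \ref{thm:time-DP}. Let $b_0 < b_1 < \cdots < b_m$ denote the breakpoints of $\setI$, where $m = O(k \log n)$. The postprocessing DP fills a table $A(i,j)$ for $i \in \{0,\ldots,m\}$ and $j \in \{1,\ldots,2k+1\}$, giving the best sum-squared error of a $j$-piece fit on $[1,b_i]$ with cut points restricted to $\{b_1,\ldots,b_{i-1}\}$, via the recurrence $A(i,j) = \min_{i' < i}\bigl(A(i',j-1) + \mathrm{err}([b_{i'}+1, b_i])\bigr)$, identical in form to the recurrence of the exact DP but with the candidate breakpoint set shrunk from $[n]$ to $\{b_0,\ldots,b_m\}$. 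Once errors are available as $O(1)$ lookups, the table has $\Otilde(k^2)$ entries and each is filled by a minimization over $\Otilde(k)$ predecessors, so the table fill and a linear traceback together take $\Otilde(k^3)$ time.

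The remaining cost is error precomputation. There are $O(m^2) = \Otilde(k^2)$ candidate intervals $[b_{i'}+1, b_i]$, and for each I would use the same closed form derived in the analysis of the exact DP,
\[
\mathrm{err}(I) \;=\; \vec{y}_I^T \vec{y}_I \;-\; (\vec{X}_I^T \vec{y}_I)^T (\vec{X}_I^T \vec{X}_I)^{-1} (\vec{X}_I^T \vec{y}_I),
\]
so it suffices to have the summary statistics $(\vec{X}_I^T \vec{X}_I, \vec{X}_I^T \vec{y}_I, \vec{y}_I^T \vec{y}_I)$. Because every candidate interval is a union of consecutive pieces of $\setI$, I would maintain prefix sums across pieces of the per-piece summary statistics (which are already on hand from the caller, e.g.\ \textsc{BucketGreedyMerge}, which had to compute them to produce $\setI$); this preprocessing costs $O(m d^2) = \Otilde(k d^2)$, and afterwards each interval's summary is assembled in $O(d^2)$, and inverting the $d \times d$ Gram matrix plus evaluating the residual quadratic form costs another $O(d^2)$ per interval. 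Summed over the $\Otilde(k^2)$ intervals this gives $\Otilde(k^2 d^2)$, and adding the DP cost yields the claimed $\Otilde(k^3 d^2)$.

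The main obstacle is ensuring that $n$ never enters the running time except through the polylog factors absorbed by $\Otilde$. This is exactly the issue flagged in the description of the algorithm: the exact DP saved a factor of $d$ in error precomputation via Sherman--Morrison rank-one updates as one data point at a time was folded into each growing interval, but here the natural updates come in variable-sized chunks (whole pieces of $\setI$) and Sherman--Morrison is not directly applicable. The workaround is to organize the precomputation at the granularity of pieces of $\setI$ rather than individual data points, using the per-piece summary statistics inherited from the caller and doing prefix sums over the $m$ pieces instead of the $n$ data points; this keeps the number of distinct Gram-matrix inversions to $\Otilde(k^2)$, which is the step that would otherwise blow up to $\Omega(n)$ if one tried to rescan the raw data.
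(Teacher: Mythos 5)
Your proposal matches the paper's (very brief) argument: the paper justifies the theorem by exactly the two observations you elaborate --- the DP table shrinks to $O(k\log n)\times(2k+1)$ with candidate breakpoints restricted to those of $\setI$, and the least-squares errors are computed in whole-piece chunks rather than via Sherman--Morrison rank-one updates --- and your prefix-sum bookkeeping over per-piece summary statistics (inherited from the merging algorithm, so that no $\Omega(n)$ rescan of the raw data is charged to \textsc{Postprocessing}) is the natural way to make the second observation precise. The only quibble, which you share with the paper, is that a fresh inversion of each $d\times d$ Gram matrix costs $O(d^3)$ rather than $O(d^2)$, so the error-precomputation term is really $\Otilde(k^2 d^3)$, which is subsumed by the stated $\Otilde(k^3 d^2)$ only when $d = O(k \cdot \polylog (n))$.
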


We now show that the algorithm still provides the same (in fact, slightly better) statistical guarantees as the original partition:
\begin{theorem}
\label{thm:postprocessing}
Fix $\delta > 0$.
Let $\fhat^p$ be the estimator output by \textsc{PostprocessedBucketGreedyMerge}.
Then, with probability $1 - \delta$, we have
\[
\MSE (\fhat^p) \leq O \left( \sigma^2 \frac{k \left( r + \log \frac{n}{\delta} \right)}{n} + \sigma  \log \left( \frac{n}{\delta} \right) \sqrt{\frac{k}{n}} \right)
 \; .\]
\end{theorem}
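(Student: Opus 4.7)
The plan is to exhibit a specific $(2k+1)$-piecewise linear candidate $g$ whose breakpoints lie among those of $\setI$ and whose distance from $\vec{f}$ already matches the target rate, then transfer the bound to $\fhat^p$ via the DP's optimality. The reference partition $\setI^*$ is a coarsening of $\setI$ with at most $2k-1$ pieces, built as follows: for each of the at most $k-1$ jumps of $f$, take the (unique) interval $J_i \in \setI$ containing that jump as a singleton piece of $\setI^*$, and merge each of the at most $k$ remaining maximal runs of $\setI$-intervals (each of which lies entirely inside one flat piece of $f$) into a single ``gap'' piece. Define $g$ to be the least-squares fit on each piece of $\setI^*$; since $\setI^*$'s breakpoints are a subset of $\setI$'s, $g$ is a valid candidate for the postprocessing DP.

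Bounding $\|\vec{f} - \vec{g}\|^2$ splits cleanly into gap and singleton contributions. On the at most $k$ gap pieces $f$ is flat, so Lemma~\ref{lem:flat} controls the aggregate gap error by $O(\sigma^2 k(r + \log(n/\delta)))$. For the singleton pieces the crucial observation is that each $J_i$ is itself a piece of $\setI$, so the least-squares fit defining $g$ on $J_i$ coincides with $\fhat_{J_i}$; hence the singleton contribution equals exactly the contribution of the jump intervals $\mathcal{J}_1 \cup \mathcal{J}_2$ (in the notation of the proof of Theorem~\ref{thm:bucket}) to $\|\vec{f} - \vec{\fhat}\|^2$. The analysis from that proof bounds these terms by $O(k\sigma^2(r + \log(n/\delta)) + \sigma \sqrt{kn}\log(n/\delta))$, giving the same bound for $\|\vec{f} - \vec{g}\|^2$.

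To pass from $g$ to $\fhat^p$, I would use the DP inequality $\|\vec{y} - \vec{\fhat}^p\|^2 \leq \|\vec{y} - \vec{g}\|^2$; expanding both sides with $\vec{y} = \vec{f} + \vec{\epsilon}$ and cancelling $\|\vec{\epsilon}\|^2$ yields
\[
\|\vec{f} - \vec{\fhat}^p\|^2 \;\leq\; \|\vec{f} - \vec{g}\|^2 + 2\langle \vec{\epsilon}, \vec{\fhat}^p - \vec{g}\rangle.
\]
Since $\vec{\fhat}^p - \vec{g}$ is $O(k)$-piecewise linear, Corollary~\ref{cor:sim-incoherence} (applied with $\vec{v} = 0$ and dimension parameter $O(k)$) gives $|\langle \vec{\epsilon}, \vec{\fhat}^p - \vec{g}\rangle| \leq O(\sigma\sqrt{kr + k\log(n/\delta)})\cdot\|\vec{\fhat}^p - \vec{g}\|$, and the triangle inequality $\|\vec{\fhat}^p - \vec{g}\| \leq \|\vec{\fhat}^p - \vec{f}\| + \|\vec{f} - \vec{g}\|$ reduces the bound to a quadratic of the form $z^2 \leq bz + c$ in $z = \|\vec{\fhat}^p - \vec{f}\|$. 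Applying Lemma~\ref{lem:quadratic} (and one AM-GM step on the cross term) yields $\|\vec{f} - \vec{\fhat}^p\|^2 \leq O(\sigma^2 k(r + \log(n/\delta)) + \|\vec{f} - \vec{g}\|^2)$; substituting the bound from the previous paragraph and dividing by $n$ produces the claimed MSE.

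The main subtlety, and precisely the reason the extra $\log n$ factor present in Theorem~\ref{thm:bucket} disappears here, is the identity $g_{J_i} = \fhat_{J_i}$ on the singleton pieces: this lets me import \emph{only} the jump-interval ($\mathcal{J}_1 \cup \mathcal{J}_2$) piece of the Theorem~\ref{thm:bucket} analysis rather than invoking the full MSE bound on $\fhat$, which would carry the inflated prefactor $m = O(k\log n)$ coming from the flat pieces of $\setI$. Once this observation is in hand, the remainder of the argument is a standard DP-optimality plus sub-Gaussian-projection calculation in the style of Theorem~\ref{thm:DP}.
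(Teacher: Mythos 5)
Your proposal is correct and follows essentially the same route as the paper: the paper's candidate partition $\mathcal{K}$ (every jump interval of $\setI$ kept as its own piece, flat runs merged) is exactly your $\setI^*$, the jump-interval error is imported from the Theorem~\ref{thm:bucket} analysis via~\eqref{eq:bucketjump}, the flat pieces are handled by Lemma~\ref{lem:flat}, and the transfer to $\fhat^p$ goes through the same DP-optimality inequality, Corollary~\ref{cor:sim-incoherence}, and Lemma~\ref{lem:quadratic}. The only cosmetic difference is that the paper bounds $\norm{\vec{g}-\vec{y}}^2$ directly rather than first isolating $\norm{\vec{f}-\vec{g}}^2$; the algebra is equivalent.
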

\begin{proof}
Let $\setI$ and $\mathcal{J}$ be as in the proof of Theorem \ref{thm:bucket}.
Let $\setI_p = \{J_1, \ldots, J_{2k + 1}\}$ be the intervals in the partition that we return.
We again condition on the event that Corollaries \ref{cor:unif-err-bound} and \ref{cor:sim-incoherence} and Lemma \ref{lem:flat} all hold with parameter $O(\delta)$.
The following will then all hold with probability $ 1- \delta$.

Define the partition $\mathcal{K}$ to be the partition that contains every interval in $\mathcal{J}$ and exactly one interval between any two non-consecutive intervals in $\mathcal{J}$ (i.e., $\mathcal{K}$ merges all flat intervals).
Moreover, let $g$ be the $(2k + 1)$-piecewise linear function which is the least squares fit to the data on each interval in $I \in \mathcal{K}$.
This is clearly a possible solution for the dynamic program, and therefore we have
\begin{align*}
\norm{\vec{\fhat} - \vec{y}}^2 \;  &\leq \; \norm{\vec{g} - \vec{y}}^2 \\
&= \sum_{I \in \mathcal{K} \setminus \mathcal{J}} \norm{\vec{g}_I - \vec{y}_I}^2 + \sum_{I \in \mathcal{J}} \norm{\vec{g}_I -\vec{y}_I}^2 \; . \numberthis \label{eq:postDP-opt}
\end{align*}
We will expand and then upper bound the RHS of~\eqref{eq:postDP-opt}.
First, by calculations similar to those employed in the proof of Theorem \ref{thm:greedy}, we have that
\[
  \sum_{I \in \mathcal{K} \setminus \mathcal{J}} \norm{\vec{g}_I - \vec{y}_I}^2 \; \leq \; O\left( k \sigma^2 \left( r + \log \frac{n}{\delta} \right) \right) +  \sum_{I \in \mathcal{K} \setminus \mathcal{J}} \norm{\vec{\epsilon}_I}^2  \; ,
\]
and from~\eqref{eq:bucketjump} and Corollary~\ref{cor:sim-incoherence} we have
\begin{align*}
\sum_{I \in \mathcal{J}}  \norm{\vec{g}_I - \vec{y}_I}^2 \;  &= \; \sum_{I \in \mathcal{J}} \norm{\vec{f}_I + \vec{\epsilon}_I - \vec{g}_I}^2 \\
&= \; \sum_{I \in \mathcal{J}} \norm{\vec{f}_I - \vec{g}_I}^2 + 2 \sum_{I \in \mathcal{J}}  \langle \vec{\epsilon}_I, \vec{f}_I - \vec{g}_I \rangle + \sum_{I \in \mathcal{J}}  \norm{\vec{\epsilon}_I}^2 \\
&\leq \; O \left( k \sigma^2 \left( r + \log \frac{n}{\delta} \right) \right) + O\left( \sigma  \log \left( \frac{n}{\delta} \right) \sqrt{k n} \right) +  \sum_{I \in \mathcal{J}} \norm{\vec{\epsilon}_I}^2 \; ,
\end{align*}
so all together now we have
\[
\sum_{I \in \mathcal{K}}  \norm{\vec{g}_I - \vec{y}_I}^2 \leq O \left( k \sigma^2 \left( r + \log \frac{n}{\delta} \right) \right) + O\left( \sigma  \log \left( \frac{n}{\delta} \right) \sqrt{k n} \right) + \norm{\vec{\epsilon}}^2 \; .
\]
Moreover, by the same kinds of calculations, we can expand out the LHS of~\eqref{eq:postDP-opt}:
\begin{align*}
\norm{\vec{\fhat} - \vec{y}}^2 \; &\geq \; \norm{\vec{\fhat} - \vec{f}}^2 + \langle \vec{\epsilon},  \vec{\fhat} - \vec{f} \rangle + \norm{\vec{\epsilon}}^2 \\
& \; \geq \| \vec{\fhat} - \vec{f} \|^2 - O \left( \sqrt{k} \cdot \sigma \sqrt{r + \log \frac{n}{\delta} } \right) \| \vec{\fhat} - \vec{f} \| + \| \vec{\epsilon} \|^2
\end{align*}
and hence, combining, cancelling, and moving terms around, we get that
\begin{align*}
\| \vec{\fhat}  - \vec{f} \|^2 &\leq O \left( \sqrt{k} \cdot \sigma \sqrt{r + \log \frac{n}{\delta} } \right) \left( 1 + \| \vec{\fhat} - \vec{f}\| \right) + O\left( \sigma  \log \left( \frac{n}{\delta} \right) \sqrt{k n} \right) \; .
\end{align*}
By Lemma \ref{lem:quadratic}, this implies that
\[ \| \vec{\fhat} - \vec{f} \|^2 \leq O \left( k \sigma^2 \left( r + \log \frac{n}{\delta} \right) + \sigma  \log \left( \frac{n}{\delta} \right) \sqrt{k n} \right) \; ,\]
with probability $1 - \delta$, as claimed.
\end{proof}

We remark that \textsc{Postprocessing} can also be run on the output of \textsc{GreedyMerging} to decrease the number of pieces from $O(k)$ to $2k + 1$ if so desired.
The proof that it maintains similar statistical guarantees is almost identical to the one presented above.

\section{Obtaining agnostic guarantees}
\label{sec:agnostic}
In this section, we demonstrate how to show agnostic guarantees for the algorithms in the previous sections.
Recall now $f$ is arbitrary, and $f^\ast$ is a $k$-piecewise linear function which obtains the best approximation in mean-squared error to $f$ amongst all $k$-piecewise linear functions.
For all $i = 1,\ldots, n$, define $\zeta_i = f(\vec{x}_i) - f^\ast (\vec{x}_i)$ to be the error at data point $i$ of the approximation, so that for all $i$, we have 
\begin{equation}
\label{eq:agnostic-model}
y_i = f^\ast (\vec{x}_i) + \zeta_i + \epsilon_i \; .
\end{equation}
By definition, we have that $\| \vec{\zeta} \|^2 = n \cdot \OPT_k$.

As a warm-up, we first show the following agnostic guarantee for the exact DP:
\begin{theorem}
\label{thm:DP-agnostic}
Fix $\delta > 0$.
Let $\fLS$ be the $k$-piecewise linear function returned by the exact DP.
Then, with probability $1 - \delta$, we have 
\[
\MSE (\fLS) \leq O \left( \sigma^2 \frac{kr + k \log \frac{n}{\delta}}{n} + \sigma \log \left( \frac{1}{\delta} \right) \sqrt{\frac{\OPT_k}{n}} + \OPT_k \right)
\]
\end{theorem}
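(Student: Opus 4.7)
The plan is to mimic the proof of Theorem \ref{thm:DP} while carefully tracking the extra error contributed by the model misspecification $\vec{\zeta}$. The starting point is again the optimality of $\vecfLS$ on the class $\mathcal{L}_k$: since $f^\ast \in \mathcal{L}_k$, we have $\norm{\vec{y} - \vecfLS}^2 \leq \norm{\vec{y} - \vec{f}^\ast}^2$. Using the agnostic model~\eqref{eq:agnostic-model}, I would write $\vec{y} - \vec{f}^\ast = \vec{\zeta} + \vec{\epsilon}$, set $\vec{u} \eqdef \vecfLS - \vec{f}$ and $\vec{\Delta} \eqdef \vecfLS - \vec{f}^\ast = \vec{u} + \vec{\zeta}$, and expand both sides. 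Cancelling the common $\norm{\vec{\epsilon}}^2$ and collecting the two inner products against $\vec{\epsilon}$ yields the master inequality
\[
\norm{\vec{u}}^2 \;\leq\; \norm{\vec{\zeta}}^2 + 2 \langle \vec{\epsilon}, \vec{u} + \vec{\zeta} \rangle \;=\; \norm{\vec{\zeta}}^2 + 2 \langle \vec{\epsilon}, \vec{\Delta} \rangle \; .
\]

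The key observation---and the feature that distinguishes this argument from the realizable proof---is that although $\vec{u}$ need not lie in any nice low-dimensional class once $f$ is allowed to be arbitrary, the vector $\vec{\Delta}$ is the difference of two $k$-piecewise linear functions and hence represents an element of $\mathcal{L}_{2k}$ evaluated on the data. I would therefore apply Corollary \ref{cor:sim-incoherence} with $2k$ in place of $k$ to obtain, with probability at least $1 - \delta$,
\[
2 \langle \vec{\epsilon}, \vec{\Delta} \rangle \;\leq\; O\!\left( \sigma \sqrt{k r + k \log (n/\delta)} \right) \norm{\vec{\Delta}} \;\leq\; b \cdot \bigl(\norm{\vec{u}} + \norm{\vec{\zeta}}\bigr) \; ,
\]
where $b \eqdef O(\sigma \sqrt{kr + k \log(n/\delta)})$ and the last step uses the triangle inequality. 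Substituting into the master inequality and using $\norm{\vec{\zeta}}^2 = n \cdot \OPT_k$ produces a quadratic-style bound $\norm{\vec{u}}^2 \leq b \norm{\vec{u}} + (b \norm{\vec{\zeta}} + \norm{\vec{\zeta}}^2)$, to which Lemma \ref{lem:quadratic} applies and gives $\norm{\vec{u}}^2 \leq O(b^2 + b \norm{\vec{\zeta}} + \norm{\vec{\zeta}}^2)$.

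Dividing by $n$ recovers the three summands in the theorem: $b^2/n$ gives the $O(\sigma^2 (kr + k\log(n/\delta))/n)$ statistical rate, $\norm{\vec{\zeta}}^2/n = \OPT_k$ is the unavoidable approximation error, and the cross term $b \norm{\vec{\zeta}}/n = O(\sigma \sqrt{(kr + k\log(n/\delta))\OPT_k / n})$ upper-bounds the stated middle term $O(\sigma \log(1/\delta)\sqrt{\OPT_k/n})$; if a sharper constant is desired, this cross term can either be absorbed into the other two via AM-GM ($2 b \norm{\vec{\zeta}} \leq b^2 + \norm{\vec{\zeta}}^2$) or replaced by a direct Hoeffding tail bound on the fixed inner product $\langle \vec{\epsilon}, \vec{\zeta}\rangle$, which is sub-Gaussian with variance proxy $\sigma^2 \norm{\vec{\zeta}}^2$ and thus concentrates at scale $\sigma \norm{\vec{\zeta}} \sqrt{\log(1/\delta)}$. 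The only delicate step in the whole argument is identifying the right object---$\vec{\Delta}$, not $\vec{u}$---to which the incoherence bound should be applied; once this is done, all remaining ingredients carry over verbatim from the realizable proof.
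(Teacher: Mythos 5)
Your argument is correct and follows the same skeleton as the paper's proof (least-squares optimality against $f^\ast$, expansion of both sides, an incoherence bound, then Lemma \ref{lem:quadratic}), but you decompose the noise inner product differently. The paper splits the right-hand side into $2\langle \vec{\epsilon}, \vec{\zeta}\rangle$ --- a \emph{fixed} inner product, bounded by the $r=1$ case of Lemma \ref{lem:incoherence} at scale $O(\sigma \log(1/\delta))\sqrt{n\cdot \OPT_k}$, which is exactly where the theorem's middle term comes from --- and $2\langle \vec{\epsilon}, \vecfLS - \vec{f}\rangle$, which it controls via Corollary \ref{cor:sim-incoherence}. Your concern that $\vecfLS - \vec{f}$ lives in no low-dimensional class once $f$ is arbitrary is unfounded: the corollary is stated precisely for affine translates of the piecewise-linear subspaces by an arbitrary \emph{fixed} vector $\vec{v}$, and taking $\vec{v} = -\vec{f}$ handles this term directly (for the same reason, your detour through $\mathcal{L}_{2k}$ is unnecessary --- you could apply the corollary with $\vec{v} = -\vec{f}^\ast$ and $f$ ranging over $\mathcal{L}_k$). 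Your route --- bounding $\langle \vec{\epsilon}, \vecfLS - \vec{f}^\ast\rangle$ in one shot and then using the triangle inequality --- is equally valid, but be careful with the conclusion: the resulting cross term $O\bigl(\sigma\sqrt{kr + k\log(n/\delta)}\bigr)\norm{\vec{\zeta}}$ is \emph{larger} than the theorem's middle term, so claiming it ``upper-bounds the stated middle term'' does not finish the proof; you need one of the two fixes you mention. The AM--GM absorption $2b\norm{\vec{\zeta}} \le b^2 + \norm{\vec{\zeta}}^2$ does close the argument and in fact yields the slightly cleaner bound $O\bigl(\sigma^2 (kr + k\log(n/\delta))/n + \OPT_k\bigr)$, which implies the theorem; the direct concentration bound on the fixed inner product $\langle \vec{\epsilon}, \vec{\zeta}\rangle$ is the paper's choice and is what produces the stated $\sigma\log(1/\delta)\sqrt{\OPT_k/n}$ term.
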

In the regimes that are most interesting, i.e., when $\OPT_k$ is small, the middle term does not contribute significantly to the error.
\begin{proof}
The overall proof structure stays the same.
From the definition of the least-squares fit, we have
\begin{align*}
\| \vec{y} - \vecfLS \|^2 &\leq \| \vec{y} - \vec{f}^\ast \|^2 \\
&= \| \vec{\epsilon} + \vec{\zeta} \|^2 \\
&= \| \vec{\epsilon} \|^2 + 2 \langle \vec{\epsilon}, \vec{\zeta} \rangle^2 + n \cdot \OPT_k \\
&\leq \| \vec{\epsilon} \|^2 + O \left( \sigma \log \frac{1}{\delta} \right) \left( n \cdot \OPT_k \right)^{1/2} + n \cdot \OPT_k \; ,
\end{align*}
with probability $1 - O(\delta)$, where the last inequality follows from the $r = 1$ case of Lemma \ref{lem:incoherence}.
The LHS can be expanded out exactly as before in~\eqref{eq:DP-LHS}, and putting these two sides together we obtain that
\begin{align*}
\| \vecfLS - \vec{f} \|^2 \; &\leq \; 2 \langle \vec{\epsilon}, \vecfLS - \vec{f} \rangle + O \left(\sigma \log \frac{1}{\delta} \right) \left( n \cdot \OPT_k \right)^{1/2} + n \cdot \OPT_k\\
&\leq O \left( \sigma \sqrt{kr + k \log \frac{n}{\delta}} \right) \cdot \| \vecfLS - \vec{f} \| + O \left(  \sigma \log \frac{1}{\delta} \right) \left( n \cdot \OPT_k \right)^{1/2} + n \cdot \OPT_k
\end{align*}
with probability at least $1 - O(\delta)$ and so by Lemma \ref{lem:quadratic} we obtain that
\[
\| \vecfLS - \vec{f} \|^2 \leq O \left( \sigma^2 \left( kr + k \log \frac{n}{\delta}\right) + \sigma \log \left( \frac{1}{\delta} \right) \left( n \cdot \OPT_k \right)^{1/2} + n \cdot \OPT_k \right) \;.
\]
Dividing both sides by $n$ yields the desired conclusion.
\end{proof}
Reviewing the proof, we observe the following general pattern: in all upper bounds we wish to obtain (generally for formulas which occur on the RHS of the equations above) we obtain new $\OPT_k$ terms, whereas in the lower bounds (generally for formulas on the LHS of the equations above) nothing changes.
In fact, all folllowing proofs exhibit this pattern.
We first establish some useful concentration bounds.
By the same union-bound technique used throughout this paper, one can show the following.
We omit the proof for conciseness.
\begin{lemma}
\label{lem:agnostic-incoherence}
Fix $\delta > 0.$
Let $\epsilon_i$ and $\zeta_i$ be as in~\eqref{eq:agnostic-model}, for $i = 1, \ldots, n$.
Then, with probability $1 - \delta$, we have that for all collections of $k$ disjoint intervals $J_1, \ldots, J_k$, the following holds:
\begin{align*}
\sum_{\ell = 1}^k \langle \vec{\epsilon}_{J_\ell}, \vec{\zeta}_{J_\ell} \rangle &\leq O \left( k \sigma \log \frac{n}{\delta} \right) \cdot \left( \sum_{\ell = 1}^k \| \vec{\zeta}_{J_\ell} \|^2 \right)^{1/2} \\
&\leq O \left( k \sigma \log \frac{n}{\delta} \right) \cdot \left( n \cdot \OPT_k \right)^{1/2}
\end{align*}
\end{lemma}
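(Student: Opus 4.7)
The plan is to follow the template used for Corollaries \ref{cor:int-incoherence} and \ref{cor:sim-incoherence}: prove the bound pointwise for a fixed collection of intervals using a sub-Gaussian concentration inequality, then union bound over all such collections. The key observation is that because $\vec{\zeta}$ is a deterministic vector, the random variable $\sum_{\ell=1}^k \langle \vec{\epsilon}_{J_\ell}, \vec{\zeta}_{J_\ell}\rangle$ is a linear combination of the independent sub-Gaussian coordinates of $\vec{\epsilon}$ (the $J_\ell$ are disjoint, so no $\epsilon_i$ appears twice), and is therefore itself sub-Gaussian with variance proxy $\sigma^2 \sum_{\ell=1}^k \|\vec{\zeta}_{J_\ell}\|^2$.

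Given this, for any fixed choice of $J_1, \ldots, J_k$, a standard sub-Gaussian tail bound gives
\[
\left| \sum_{\ell=1}^k \langle \vec{\epsilon}_{J_\ell}, \vec{\zeta}_{J_\ell}\rangle \right| \; \leq \; O\!\left(\sigma \sqrt{\log(1/\delta')}\right) \cdot \left(\sum_{\ell=1}^k \|\vec{\zeta}_{J_\ell}\|^2\right)^{1/2}
\]
with probability at least $1 - \delta'$. Next, I would count the collections of $k$ disjoint subintervals of $[n]$: by the same combinatorial argument used in the proof of Corollary~\ref{cor:sim-incoherence}, the number of such collections is at most $n^{O(k)}$. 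Setting $\delta' = \delta / n^{O(k)}$ so that $\log(1/\delta') = O(k \log(n/\delta))$, and union bounding over all such collections, yields
\[
\sum_{\ell=1}^k \langle \vec{\epsilon}_{J_\ell}, \vec{\zeta}_{J_\ell}\rangle \; \leq \; O\!\left(\sigma \sqrt{k \log(n/\delta)}\right) \cdot \left(\sum_{\ell=1}^k \|\vec{\zeta}_{J_\ell}\|^2\right)^{1/2}
\]
simultaneously for all such collections with probability at least $1 - \delta$. This implies the first claimed inequality (the stated $O(k\sigma \log(n/\delta))$ factor is only looser than the $O(\sigma\sqrt{k\log(n/\delta)})$ factor we actually obtain).

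The second inequality is immediate from disjointness: since the $J_\ell$ are pairwise disjoint subsets of $[n]$,
\[
\sum_{\ell=1}^k \|\vec{\zeta}_{J_\ell}\|^2 \; \leq \; \|\vec{\zeta}\|^2 \; = \; n \cdot \OPT_k,
\]
using the identity $\|\vec{\zeta}\|^2 = n \cdot \OPT_k$ established just before the lemma statement. There is no real obstacle here; the only mild subtlety is making sure that one counts disjoint collections (rather than ordered partitions), but both counts are $n^{O(k)}$ so the union bound goes through either way, and the sub-Gaussianity of the linear form in $\vec{\epsilon}$ handles the concentration uniformly in the choice of the fixed vector $\vec{\zeta}$.
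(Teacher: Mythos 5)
Your proposal is correct and is exactly the argument the paper intends: the paper omits the proof, saying only that it follows ``by the same union-bound technique used throughout this paper,'' and your route---sub-Gaussian concentration of the linear form $\langle \vec{\epsilon}_S, \vec{\zeta}_S\rangle$ for a fixed collection (equivalently, the $r=1$ case of Lemma~\ref{lem:incoherence}), a union bound over the $n^{O(k)}$ collections of $k$ disjoint intervals, and disjointness for the second inequality---is precisely that technique. Your observation that this actually yields the sharper factor $O(\sigma\sqrt{k\log(n/\delta)})$ is also accurate; the stated $O(k\sigma\log(n/\delta))$ is simply a looser form of the same bound.
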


Observe that with this lemma, we can easily adapt the proof of Theorem \ref{thm:DP-agnostic} to show the following agnostic version of Lemma \ref{lem:flat}:
\begin{lemma}
\label{lem:agnostic-flat}
Fix $\delta > 0$. Then with probability $1 - \delta$, we have that for all disjoint sets of $k$ intervals $J_1, \ldots, J_k$ of $[n]$ so that $f^\ast$ is flat on each $J_\ell$, the following inequality holds:
\[
\sum_{\ell = 1}^k \| \vecfLS_{J_\ell}  - \vec{f}_{J_\ell} \|^2 \leq O \left( \sigma^2 \left( kr + k \log \frac{n}{\delta}\right) + k \sigma \log \left( \frac{n}{\delta} \right) \left( n \cdot \OPT_k \right)^{1/2} + n \cdot \OPT_k \right) \; .
\]
\end{lemma}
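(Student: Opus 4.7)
The plan is to mimic the proof of Theorem \ref{thm:DP-agnostic}, but applied locally on each of the $k$ intervals and then aggregated, just as in the proof of the non-agnostic Lemma \ref{lem:flat}.

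Fix disjoint intervals $J_1, \ldots, J_k$ on which $f^\ast$ is flat. Since $f^\ast$ restricted to each $J_\ell$ is a genuinely linear function, it is a valid candidate for the linear least-squares fit on $J_\ell$, so $\| \vec{y}_{J_\ell} - \vecfLS_{J_\ell} \|^2 \leq \| \vec{y}_{J_\ell} - \vec{f}^\ast_{J_\ell} \|^2$. First I would expand the right-hand side using $y_i - f^\ast(\vec{x}_i) = \zeta_i + \epsilon_i$, obtaining
\[
\| \vec{y}_{J_\ell} - \vec{f}^\ast_{J_\ell} \|^2 = \| \vec{\epsilon}_{J_\ell} \|^2 + 2 \langle \vec{\epsilon}_{J_\ell}, \vec{\zeta}_{J_\ell} \rangle + \| \vec{\zeta}_{J_\ell} \|^2,
\]
and the left-hand side using $y_i - \fLS(\vec{x}_i) = f(\vec{x}_i) - \fLS(\vec{x}_i) + \epsilon_i$, giving
\[
\| \vec{y}_{J_\ell} - \vecfLS_{J_\ell} \|^2 = \| \vec{f}_{J_\ell} - \vecfLS_{J_\ell} \|^2 + 2 \langle \vec{\epsilon}_{J_\ell}, \vec{f}_{J_\ell} - \vecfLS_{J_\ell} \rangle + \| \vec{\epsilon}_{J_\ell} \|^2.
\]
Cancelling the $\| \vec{\epsilon}_{J_\ell} \|^2$ terms, rearranging, and summing over $\ell = 1, \ldots, k$ yields
\[
\sum_{\ell=1}^k \| \vec{f}_{J_\ell} - \vecfLS_{J_\ell} \|^2 \; \leq \; 2\sum_{\ell=1}^k \langle \vec{\epsilon}_{J_\ell}, \vecfLS_{J_\ell} - \vec{f}_{J_\ell} \rangle + 2\sum_{\ell=1}^k \langle \vec{\epsilon}_{J_\ell}, \vec{\zeta}_{J_\ell} \rangle + n \cdot \OPT_k,
\]
where I used $\sum_\ell \| \vec{\zeta}_{J_\ell} \|^2 \leq \| \vec{\zeta} \|^2 = n \cdot \OPT_k$.

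Next I would bound the two stochastic terms. For the second, Lemma \ref{lem:agnostic-incoherence} directly gives $\sum_\ell \langle \vec{\epsilon}_{J_\ell}, \vec{\zeta}_{J_\ell} \rangle \leq O(k \sigma \log(n/\delta)) \cdot (n \cdot \OPT_k)^{1/2}$ uniformly over all choices of $J_1,\ldots,J_k$. For the first term, this is the main subtlety: write $\vecfLS_{J_\ell} - \vec{f}_{J_\ell} = (\vecfLS_{J_\ell} - \vec{f}^\ast_{J_\ell}) - \vec{\zeta}_{J_\ell}$. Since $f^\ast$ is linear on each $J_\ell$ and $\fLS|_{J_\ell}$ is linear, the piece $\vecfLS_{J_\ell} - \vec{f}^\ast_{J_\ell}$ ranges over the column span of $\vec{X}^{J_\ell}$, so the whole vector ranges over a $kr$-dimensional affine subspace obtained by translating a linear subspace by $-\vec{\zeta}$. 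This is precisely the setting of Corollary \ref{cor:sim-incoherence} (with $\vec{v} = -\vec{\zeta}$), which I would apply together with the usual union bound over all collections of $k$ disjoint intervals to conclude, with probability $1-O(\delta)$,
\[
\sum_{\ell=1}^k \langle \vec{\epsilon}_{J_\ell}, \vecfLS_{J_\ell} - \vec{f}_{J_\ell} \rangle \; \leq \; O\!\left( \sigma \sqrt{kr + k \log (n/\delta)} \right) \cdot \left( \sum_{\ell=1}^k \| \vecfLS_{J_\ell} - \vec{f}_{J_\ell} \|^2 \right)^{1/2}.
\]

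Finally, letting $z^2 = \sum_\ell \| \vecfLS_{J_\ell} - \vec{f}_{J_\ell} \|^2$, the inequality above is of the form $z^2 \leq bz + c$ with $b = O(\sigma \sqrt{kr + k\log(n/\delta)})$ and $c = n \cdot \OPT_k + O(k\sigma \log(n/\delta))(n \cdot \OPT_k)^{1/2}$. Applying Lemma \ref{lem:quadratic} gives $z^2 \leq O(b^2 + c)$, which matches the stated bound. The only real obstacle is the correct application of Corollary \ref{cor:sim-incoherence}: one must recognize that although $\vecfLS_{J_\ell} - \vec{f}_{J_\ell}$ is no longer confined to a linear subspace (as it was in the realizable Lemma \ref{lem:flat}), the deviation from linearity is exactly the fixed vector $-\vec{\zeta}$, which the corollary was stated to handle via its $\vec{v}$ parameter.
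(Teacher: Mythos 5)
Your proposal is correct and follows exactly the route the paper intends: the paper omits the detailed argument, stating only that the lemma follows by adapting the proof of Theorem \ref{thm:DP-agnostic} (together with Lemma \ref{lem:agnostic-incoherence} and a union bound over collections of $k$ disjoint intervals), and your write-up carries out precisely that adaptation, including the key observation that the translation by the fixed vector $-\vec{\zeta}$ is what the $\vec{v}$ parameter of Corollary \ref{cor:sim-incoherence} is designed to absorb.
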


With this lemma, we can now prove the following theorem for our greedy algorithm, which shows that in the presence of model misspecification, our algorithm still performs at a good rate:
\begin{theorem}
Let $\delta > 0$, and let $\fhat$ be the estimator returned by \textsc{GreedyMerging}.
Let $m = (2 + \frac{2}{\tau}) k + \gamma$ be the number of pieces in $\fhat$.
Then, with probability $1 - \delta$, we have that
\[\MSE (\fhat) \leq O\left( \sigma^2 \left( \frac{m (r + \log (n / \delta)) }{n} \right) + \sigma \frac{\tau + \sqrt{k}}{\sqrt{n}} \log \left( \frac{n}{\delta} \right) + k \sigma \log \left( \frac{n}{\delta} \right) \sqrt{\frac{\OPT_k }{n}} + \tau \cdot \OPT_k \right) \; .\]
\end{theorem}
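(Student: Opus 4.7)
The plan is to mirror the proof of Theorem \ref{thm:greedy} line by line, carrying the extra misspecification vector $\vec{\zeta} = \vec{f} - \vec{f}^\ast$ through each step exactly as was done in passing from Theorem \ref{thm:DP} to Theorem \ref{thm:DP-agnostic}. Condition on the events of Corollaries \ref{cor:unif-err-bound}, \ref{cor:int-incoherence}, \ref{cor:sim-incoherence} and of Lemmas \ref{lem:agnostic-flat} and \ref{lem:agnostic-incoherence}, each with parameter $O(\delta)$. Partition the output intervals $\setI$ \emph{relative to the jumps of $\vec{f}^\ast$} (not $\vec{f}$) into $\mathcal{F}$, $\mathcal{J}_1$, and $\mathcal{J}_2$ exactly as before. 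For $\mathcal{F}$, apply the agnostic flat bound (Lemma \ref{lem:agnostic-flat}) directly to $\sum_{I \in \mathcal{F}}\norm{\vec{f}_I - \vec{\fhat}_I}^2$, which produces the new summands $k \sigma \log(n/\delta) \sqrt{n \cdot \OPT_k}$ and $n \cdot \OPT_k$ inside the bound. For $\mathcal{J}_1$, the fact that $\fhat(\vec{x}_i) = y_i$ on a singleton interval gives $(f(\vec{x}_i) - \fhat(\vec{x}_i))^2 = \epsilon_i^2$, so the bound from~\eqref{eq:jump1} carries over unchanged.

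The delicate part is $\mathcal{J}_2$. Fix $I \in \mathcal{J}_2$; the merging criterion still yields $k/\tau$ intervals $M_1,\dots,M_{k/\tau}$ on which $\vec{f}^\ast$ is flat and
\[ \norm{\vec{y}_I - \vec{\fhat}_I}^2 - s^2 |I| \;\leq\; \norm{\vec{y}_{M_\ell} - \vecfLS_{M_\ell}}^2 - s^2 |M_\ell| \quad \text{for every } \ell. \]
Writing $\vec{y}_{M_\ell} = \vec{f}^\ast_{M_\ell} + \vec{\zeta}_{M_\ell} + \vec{\epsilon}_{M_\ell}$ and summing over $\ell$ produces, beyond the three contributions bounded in~\eqref{eq:flat-allterms}, three new ones: (i) $\sum_\ell \norm{\vec{\zeta}_{M_\ell}}^2 \leq n \cdot \OPT_k$, (ii) $2\sum_\ell \langle \vec{\zeta}_{M_\ell}, \vec{\epsilon}_{M_\ell}\rangle$, bounded uniformly over choices of $k/\tau$ disjoint intervals by Lemma \ref{lem:agnostic-incoherence} as $O((k/\tau)\sigma \log(n/\delta))\sqrt{n\cdot\OPT_k}$, and (iii) the cross term $2\sum_\ell \langle \vec{f}^\ast_{M_\ell} - \vecfLS_{M_\ell}, \vec{\zeta}_{M_\ell}\rangle$, handled via Cauchy--Schwarz (bounding $\sum_\ell \norm{\vec{f}^\ast_{M_\ell} - \vecfLS_{M_\ell}}^2$ by a triangle inequality plus Lemma \ref{lem:agnostic-flat}) together with an AM--GM step that absorbs it into (i) and the existing $\sigma^2(k/\tau)(r+\log(n/\delta))$ term. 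Averaging this aggregate bound over the $k/\tau$ intervals turns $n\cdot\OPT_k$ into $\tau n \cdot \OPT_k / k$ and the cross term into $O(\sigma\log(n/\delta)\sqrt{n\cdot\OPT_k})$ in the upper bound on $\norm{\vec{y}_I - \vec{\fhat}_I}^2 - s^2|I|$.

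The lower-bound expansion~\eqref{eq:jump-allterms} is unchanged: Corollary \ref{cor:int-incoherence} is stated for arbitrary translation $\vec{v}$, so the non-linearity (and even non-piecewise-linearity) of $\vec{f}$ on $I$ is irrelevant. Combining the two sides yields a quadratic inequality $z^2 \leq bz + c$ in $z = \norm{\vec{f}_I - \vec{\fhat}_I}$ with the same $b$ as in Theorem \ref{thm:greedy} and with $c$ augmented by $O\bigl(\tau n \cdot \OPT_k / k + \sigma\log(n/\delta)\sqrt{n\cdot\OPT_k}\bigr)$. Applying Lemma \ref{lem:quadratic} and summing over $|\mathcal{J}_2| \leq k$ contributes $O(\tau n \cdot \OPT_k)$ and $O(k\sigma\log(n/\delta)\sqrt{n\cdot\OPT_k})$ to $\sum_{I \in \mathcal{J}_2} \norm{\vec{f}_I - \vec{\fhat}_I}^2$; adding the $\mathcal{F}$ and $\mathcal{J}_1$ contributions and dividing by $n$ produces exactly the two new terms $\tau \cdot \OPT_k$ and $k\sigma\log(n/\delta)\sqrt{\OPT_k/n}$ appearing in the statement, while the other two terms come from the non-agnostic portion of the proof unchanged.

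The main obstacle is the careful bookkeeping in $\mathcal{J}_2$: tracking how the averaging factor $\tau/k$ interacts with the \emph{aggregate} (not per-interval) bound $n\cdot\OPT_k$ furnished by Lemma \ref{lem:agnostic-flat}, and choosing the AM--GM parameter when treating $\langle \vec{f}^\ast - \vecfLS, \vec{\zeta}\rangle$ so that the absorption does not inflate the pre-existing $\sigma^2 (r+\log(n/\delta))$ terms beyond an absolute constant factor. Once this is done, the remaining manipulations are rote repetition of the calculations in Theorem \ref{thm:greedy}.
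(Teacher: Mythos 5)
Your proposal is correct and follows essentially the same route as the paper: same conditioning on the agnostic lemmas, same $\mathcal{F}/\mathcal{J}_1/\mathcal{J}_2$ decomposition taken with respect to the jumps of $f^\ast$, the same averaging over the $k/\tau$ flat candidates, and the same final assembly yielding the $\tau\cdot\OPT_k$ and $k\sigma\log(n/\delta)\sqrt{\OPT_k/n}$ terms. The only (harmless) deviations are that you track the $\vec{\zeta}$ cross terms explicitly where the paper folds them into Lemma~\ref{lem:agnostic-flat}, and your observation that the $\mathcal{J}_1$ bound carries over unchanged is valid and in fact slightly tighter than the bound the paper records there.
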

\begin{proof}
As before, we condition on the event that Corollaries \ref{cor:unif-err-bound}, \ref{cor:int-incoherence}, and \ref{cor:sim-incoherence}, and Lemmas \ref{lem:agnostic-incoherence} and \ref{lem:agnostic-flat} all hold with error parameter $O(\delta)$, so that together they all hold with probability at least $1 - \delta$.
We also let $\setI, \mathcal{F}, \mathcal{J}_1,$ and $\mathcal{J}_2$ denote the same quantities as before, except we define the partition with respect to the jumps of $f^\ast$ instead of $f$, since the latter does not have well-defined jumps.
For instance, $\mathcal{F}$ is the set of intervals in $\setI$ on which $f^\ast$ has no jumps.

We again bound the error on these three sets separately.
First, by Lemma \ref{lem:agnostic-flat} we have that
\begin{equation}
\label{eq:-agnostic-flat}
\sum_{I \in \mathcal{F}} \| \vec{f}_I - \vec{\fhat}_I \|^2 \; \leq \; O \left( \sigma^2 \left( mr + m \log \frac{n}{\delta}\right) + m \sigma \log \left( \frac{n}{\delta} \right) \left( n \cdot \OPT_k \right)^{1/2} + n \cdot \OPT_k \right) \; .
\end{equation}

Second, we bound the error on $\mathcal{J}_1$.
By modifying the calculations as those preceding~\eqref{eq:jump1} in the same way as we did above in the proof of Theorem \ref{thm:DP-agnostic}, we may show that
\begin{equation}
\label{eq:agnostic-jump1}
\sum_{I \in \mathcal{J}_1}  \| \vec{f}_I - \vec{\fhat}_I \|^2 \leq \sigma^2 \left( k + O \left( \log \frac{n}{\delta} \right) \sqrt{m} \right) + O \left(k \sigma \log \frac{n}{\delta} \left( n \cdot \OPT_k \right)^{1/2} \right) + n \cdot \OPT_k \; .
\end{equation}
Finally, we bound the error on $\mathcal{J}_2$.
Fix an $I \in \mathcal{J}_2$, and let $M_1, \ldots, M_{k / \tau}$ be as before.
Recall that this proof had two components: an upper bound for the $M_1, \ldots, M_\ell$ and a lower bound for $I$.
We first compute the upper bound.
By following the calculations for~\eqref{eq:upper-bound} and using Lemma~\ref{lem:agnostic-flat}, we have
\begin{align*}
\sum_{\ell = 1}^{k / \tau} \left( \| \vec{y}_{M_\ell} - \vecfLS_{M_\ell} \|^2 - s^2 |M_\ell| \right) \leq &\;O \left( \frac{k}{\tau} \sigma^2 \left(r+ \log \left( \frac{n}{\delta} \right) \right) + \sigma \log\left( \frac{n}{\delta} \right)\sqrt{n} \right. \\
&\left. + \frac{k}{\tau} \sigma \log \left( \frac{n}{\delta} \right) \left( n \cdot \OPT_k \right)^{1/2} + n \cdot \OPT_k \right)
\end{align*}
The lower bound is in fact unchanged: we may use~\eqref{eq:lower-bound} as stated.
Putting these two terms together and simplifying as we did previously, we obtain that
\begin{align*}
\| \vec{f}_I - \vec{\fhat}_I \|^2 \leq\; & O \left( \sigma^2 \left(r + \log \left( \frac{n}{\delta} \right) \right) + \sigma \log \frac{n}{\delta} \left( \frac{\tau \sqrt{n}}{k}  + \sqrt{|I|} \right) \right.\\
&\left.+ \sigma \log \left( \frac{n}{\delta} \right) \left( n \cdot \OPT_k \right)^{1/2} + \frac{\tau}{k} n \cdot \OPT_k \right) \; .
\end{align*}
As before, summing up all the bounds we have achieved proves the desired claim.
\end{proof}

Through virtually identical methods we also obtain the same upper bound for \textsc{BucketGreedyMerge} and \textsc{PostprocessedBucketGreedyMerge}.
We state the results but omit their proofs for this reason.

\begin{theorem}
Let $\fhat$ be the $m$-piecewise linear function that is returned by \textsc{BucketGreedyMerge}, where $m = (2 + \frac{2}{\tau}) k \log n + \gamma$.
Then, with probability $1 - \delta$, we have
\[\MSE (\fhat) \leq O\left( \sigma^2 \left( \frac{m( r +  \log (n / \delta) ) }{n} \right) + \sigma \sqrt{\frac{k}{n}} \log \left( \frac{n}{\delta} \right) + k \sigma \log \left( \frac{n}{\delta} \right) \sqrt{\frac{\OPT_k }{n}} + \tau \cdot \OPT_k \right) \; .\]
\end{theorem}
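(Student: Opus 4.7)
The plan is to mirror the proof of Theorem \ref{thm:bucket} step by step, but replace every appeal to a concentration or flat-interval lemma with its agnostic counterpart and carefully propagate the extra $\vec{\zeta}$-dependent terms. I would begin by conditioning on the event that Corollaries \ref{cor:unif-err-bound}, \ref{cor:int-incoherence}, \ref{cor:sim-incoherence} and Lemmas \ref{lem:agnostic-incoherence} and \ref{lem:agnostic-flat} all hold with parameter $O(\delta)$, and partition the final intervals $\setI$ returned by \textsc{BucketGreedyMerge} into $\mathcal{F}$, $\mathcal{J}_1$, $\mathcal{J}_2$ exactly as before, but with respect to the jumps of $f^\ast$ rather than those of $f$ (since $f$ is arbitrary in the agnostic setting).

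For the bounds on $\mathcal{F}$ and $\mathcal{J}_1$, the argument is essentially identical to the analogous pieces in the agnostic \textsc{GreedyMerging} theorem: applying Lemma \ref{lem:agnostic-flat} to the $m \leq O(k \log n)$ intervals of $\mathcal{F}$ yields the first error contribution, and expanding $\|\vec{y}_I - \vec{\fhat}_I\|^2 = \|\vec{\zeta}_I + \vec{\epsilon}_I\|^2$ on the singleton intervals of $\mathcal{J}_1$ (where again $\fhat$ fits the data exactly) combined with Corollary \ref{cor:unif-err-bound} and Lemma \ref{lem:agnostic-incoherence} gives the second error contribution. Both bounds pick up an additive $O(k\sigma \log(n/\delta) \sqrt{n \cdot \OPT_k})$ plus a $n \cdot \OPT_k$ term that aggregate into the stated rate after dividing by $n$.

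The substantive work is on $\mathcal{J}_2$. Fix $I \in \mathcal{J}_2$ and the $k+1$ comparison intervals $M_1,\dots,M_{k+1}$ with $|M_\ell|/2 \leq |I| \leq 2|M_\ell|$; since $f^\ast$ has at most $k$ jumps, WLOG $f^\ast$ is flat on $M_1$. Expanding $\|\vec{y}_{M_1} - \vecfLS_{M_1}\|^2$ now yields the additional cross-term $2\langle \vec{\epsilon}_{M_1}, \vec{\zeta}_{M_1}\rangle + \|\vec{\zeta}_{M_1}\|^2$ (together with the $\|\vec{f}^\ast_{M_1} - \vecfLS_{M_1}\|^2$ and $\|\vec{\epsilon}_{M_1}\|^2$ already appearing in the non-agnostic proof); Lemma \ref{lem:agnostic-flat} and Lemma \ref{lem:agnostic-incoherence} bound these contributions globally. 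Similarly, the lower bound on $\|\vec{y}_I - \vec{\fhat}_I\|^2$ acquires a $-2\langle \vec{\epsilon}_I, \vec{\zeta}_I\rangle - \|\vec{\zeta}_I\|^2$ penalty that is absorbed into the $\OPT_k$ contribution via Lemma \ref{lem:agnostic-incoherence}. The per-interval merging criterion $\tfrac{1}{|I|}\|\vec{y}_I - \vec{\fhat}_I\|^2 \leq \tfrac{1}{|M_1|}\|\vec{y}_{M_1} - \vecfLS_{M_1}\|^2$ (after multiplying through by $|I|$ and using $|I| \leq 2|M_1|$) then gives a quadratic inequality $z^2 \leq bz + c$ in $z = \|\vec{f}_I - \vec{\fhat}_I\|$, with $c$ now carrying the extra $\OPT_k$ contributions. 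Lemma \ref{lem:quadratic} converts this to the desired per-interval bound, and summing over the at most $k$ intervals in $\mathcal{J}_2$ produces the third and fourth terms in the stated MSE.

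The main obstacle is purely bookkeeping: ensuring that the $\OPT_k$ contributions from the flat comparison intervals combine with the in-place $\vec{\zeta}_I$ contributions on each $I \in \mathcal{J}_2$ so that, after the quadratic rearrangement, only one $k\sigma \log(n/\delta)\sqrt{\OPT_k /n}$ term and one $\tau \cdot \OPT_k$ term survive per interval (rather than a blow-up proportional to $|\mathcal{J}_2|/\tau$). The bucketing constraint $|I| \leq 2|M_\ell|$ is what prevents the $\OPT_k$ mass on $M_1$ from inflating by more than a constant factor when transferred to $I$, so once one writes the inequality carefully and invokes Lemma \ref{lem:quadratic}, the terms collapse exactly as in the stated bound.
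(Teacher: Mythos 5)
Your plan is essentially the proof the paper has in mind: the paper explicitly omits this argument, stating only that it follows by ``virtually identical methods'' from the agnostic analysis of \textsc{GreedyMerging}, and your reconstruction---conditioning on the agnostic lemmas, partitioning $\setI$ with respect to the jumps of $f^\ast$, and rerunning the $\mathcal{J}_2$ argument of Theorem \ref{thm:bucket} with the extra $\vec{\zeta}$ cross-terms absorbed via Lemmas \ref{lem:agnostic-incoherence} and \ref{lem:agnostic-flat} before invoking Lemma \ref{lem:quadratic}---is exactly that. The one step that deserves more than the appeal to the bucketing constraint $|I|\le 2|M_1|$ is the aggregation of the $\|\vec{\zeta}_{M_1}\|^2$ and $\langle \vec{\epsilon}_{M_1},\vec{\zeta}_{M_1}\rangle$ contributions over the (possibly overlapping across iterations) comparison intervals of the at most $k$ elements of $\mathcal{J}_2$, which you correctly flag as the delicate bookkeeping point but do not fully carry out.
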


\begin{theorem}
Fix $\delta > 0$.
Let $\fhat_p$ be the estimator output by \textsc{PostprocessedBucketGreedyMerge}.
Then, with probability $1 - \delta$, we have
\[
\MSE (\fhat_p) \leq O \left( \sigma^2 \frac{k \left( r + \log \frac{n}{\delta} \right)}{n} + \sigma  \log \left( \frac{n}{\delta} \right) \sqrt{\frac{k}{n}} + k \sigma \log \left( \frac{n}{\delta} \right) \sqrt{\frac{\OPT_k }{n}} + \tau \cdot \OPT_k\right)
 \; .\]
\end{theorem}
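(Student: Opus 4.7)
The plan is to mirror the proof of Theorem \ref{thm:postprocessing} while swapping in the agnostic concentration tools from Section \ref{sec:agnostic}, most crucially Lemmas \ref{lem:agnostic-incoherence} and \ref{lem:agnostic-flat}. As observed earlier in this section, all right-hand-side upper bounds pick up additional contributions from the misspecification vector $\vec{\zeta}$, while left-hand-side lower bounds are structurally unchanged. I would begin by letting $\setI$ denote the partition returned by \textsc{BucketGreedyMerge}, split $\setI = \mathcal{F} \cup \mathcal{J}_1 \cup \mathcal{J}_2$ with respect to the jumps of $f^*$ (not $f$, which need not have well-defined jumps), and condition on the high-probability events of Corollaries \ref{cor:unif-err-bound}, \ref{cor:int-incoherence}, \ref{cor:sim-incoherence} and Lemmas \ref{lem:agnostic-incoherence}, \ref{lem:agnostic-flat} at parameter $O(\delta)$.

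Next, I would construct the oracle partition $\mathcal{K}$ exactly as in the proof of Theorem \ref{thm:postprocessing}: it contains each interval of $\mathcal{J}$ together with one merged interval between consecutive jumps, giving at most $2k+1$ pieces whose breakpoints lie among those of $\setI$. Letting $g$ be the least-squares fit on each piece of $\mathcal{K}$, feasibility for the postprocessing DP yields $\norm{\vec{\fhat}_p - \vec{y}}^2 \leq \norm{\vec{g} - \vec{y}}^2$. I would upper-bound the RHS by splitting into flat pieces (via Lemma \ref{lem:agnostic-flat}) and jump pieces (via an agnostic version of the bucket-merge jump bound~\eqref{eq:bucketjump}, obtained by the same modifications used in Lemma \ref{lem:agnostic-flat}). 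In each case I would expand $\vec{y} = \vec{f}^* + \vec{\zeta} + \vec{\epsilon}$, isolate the $\norm{\vec{\epsilon}}^2$ term, and bound cross terms $\langle \vec{\epsilon}, \vec{\zeta}_I \rangle$ using Lemma \ref{lem:agnostic-incoherence}. Summing across $\mathcal{K}$ yields an upper bound of $\norm{\vec{\epsilon}}^2 + O(k\sigma^2(r+\log(n/\delta))) + O(\sigma \log(n/\delta)) \sqrt{kn} + O(k\sigma\log(n/\delta))\sqrt{n \cdot \OPT_k} + n \cdot \OPT_k$.

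For the LHS, I would expand $\norm{\vec{\fhat}_p - \vec{y}}^2 = \norm{\vec{\fhat}_p - \vec{f}}^2 - 2\langle \vec{\epsilon}+\vec{\zeta}, \vec{\fhat}_p - \vec{f}^* \rangle + \norm{\vec{\epsilon} + \vec{\zeta}}^2$ (writing everything in terms of $f^*$ and translating back to $f$ at the end via the triangle inequality, since $\norm{\vec{\zeta}}^2 = n \cdot \OPT_k$ is already of the target order). The cross term is lower-bounded using Corollary \ref{cor:sim-incoherence} applied to $\vec{\fhat}_p - \vec{f}^*$, which lies in an affine $(2k+1)$-piecewise linear family of dimension $O(kr)$, plus Lemma \ref{lem:agnostic-incoherence} for the $\vec{\zeta}$ contribution. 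Cancelling $\norm{\vec{\epsilon}}^2$ across the inequality and letting $z = \norm{\vec{\fhat}_p - \vec{f}^*}$, I obtain a quadratic inequality $z^2 \leq bz + c$ with $b = O(\sigma\sqrt{k(r+\log(n/\delta))})$ and $c$ collecting the statistical terms plus $O(k\sigma\log(n/\delta))\sqrt{n \cdot \OPT_k} + n \cdot \OPT_k$; Lemma \ref{lem:quadratic} and division by $n$ then yield the claimed bound.

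The main obstacle will be bookkeeping rather than any new probabilistic idea: I need to carefully track which $\vec{\zeta}$ cross terms land on each side of the inequality, and verify that the uniform bound of Lemma \ref{lem:agnostic-incoherence} covers each of the structured interval families that arise ($\mathcal{K}$, $\mathcal{K} \setminus \mathcal{J}$, $\mathcal{J}_2$, and the final partition of $\fhat_p$), each of which has $O(k)$ pieces. The slight improvement of removing the $\log n$ factor present in Theorem \ref{thm:bucket} arises for the same reason as in the non-agnostic postprocessing proof: the Corollary \ref{cor:sim-incoherence} bound on the LHS pays only for the $O(k)$ pieces of $\fhat_p$, not the $O(k \log n)$ pieces of the intermediate partition $\setI$.
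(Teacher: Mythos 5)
Your proposal is correct and follows exactly the route the paper intends: the paper omits the proof of this theorem, remarking only that it follows ``through virtually identical methods,'' i.e., by rerunning the proof of Theorem \ref{thm:postprocessing} with the agnostic tools (Lemmas \ref{lem:agnostic-incoherence} and \ref{lem:agnostic-flat}, the agnostic analogue of \eqref{eq:bucketjump}, and the ``$\OPT_k$ terms appear only on the upper-bound side'' pattern) substituted for their non-agnostic counterparts, which is precisely what you do. The only nit is the typo $\norm{\vec{\fhat}_p - \vec{f}}^2$ where $\norm{\vec{\fhat}_p - \vec{f}^\ast}^2$ is meant in your expansion of the left-hand side, which you already implicitly correct by noting the final triangle-inequality translation from $f^\ast$ back to $f$.
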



\pgfplotsset{ignore legend/.style={every axis legend/.code={\renewcommand\addlegendentry[2][]{}}}}

\pgfplotsset{resultplot/.style={%
  scale only axis,
  tick label style={font=\scriptsize},
  yticklabel shift={-2pt},
  enlarge x limits=false,
  xlabel shift=-3pt,
  ylabel shift=-5pt,
  label style={font=\scriptsize},
  title style={font=\scriptsize,yshift=-4pt},
  grid=major,
  grid style={dotted,gray,thin},
  legend cell align=left,
  legend transposed=true,
  legend columns=-1,
  legend style={anchor=south west,at={(-1.8,-.5)}},
  every axis plot/.append style={line width=1pt,mark size=2pt},
  cycle list={{red,mark=square},{green,mark=asterisk},{olive,mark=+},{blue,mark=o}}}}
\pgfplotsset{resultplot1/.style={%
  resultplot,
  height=\plotheight,
  width=\plotwidth}}

\newlength{\plotwidth}
\setlength{\plotwidth}{3cm}
\newlength{\plotheight}
\setlength{\plotheight}{2.6cm}
\newlength{\plotxspacing}
\setlength{\plotxspacing}{1.3cm}
\newlength{\plotyspacing}
\setlength{\plotyspacing}{1.3cm}

\begin{figure*}[!t]
\centering

\pgfplotsset{mseplot/.style={%
  resultplot1,
  xlabel=$n$,
  ylabel=MSE}}
\pgfplotsset{mseratioplot/.style={%
  resultplot1,
  xlabel=$n$,
  ylabel=Relative MSE ratio}}
\pgfplotsset{timeplot/.style={%
  resultplot1,
  xlabel=$n$,
  ylabel=Running time (s)}}
\pgfplotsset{timeratioplot/.style={%
  resultplot1,
  xlabel=$n$,
  ylabel=Speed-up}}

\begin{tikzpicture}

\begin{loglogaxis}[mseplot,name=mseplot1,ignore legend]
\iftoggle{plotexperiments}{
\addplot table[x=n, y=mse_mean] {plot_data/experiments2_k10_merging_k.txt};
\addplot table[x=n, y=mse_mean] {plot_data/experiments2_k10_merging_2k.txt};
\addplot table[x=n, y=mse_mean] {plot_data/experiments2_k10_merging_4k.txt};
\addplot table[x=n, y=mse_mean] {plot_data/experiments2_k10_exactdp.txt};
}{}
\end{loglogaxis}

\begin{semilogxaxis}[mseratioplot,ymax=10,name=mseratioplot1,at=(mseplot1.north east),anchor=north west,xshift=\plotxspacing,ignore legend]
\iftoggle{plotexperiments}{
\addplot table[x=n, y=mse_ratio] {plot_data/experiments2_k10_merging_k.txt};
\addplot table[x=n, y=mse_ratio] {plot_data/experiments2_k10_merging_2k.txt};
\addplot table[x=n, y=mse_ratio] {plot_data/experiments2_k10_merging_4k.txt};
}{}
\end{semilogxaxis}

\begin{loglogaxis}[timeplot,name=timeplot1,at=(mseratioplot1.north east),anchor=north west,xshift=\plotxspacing,ignore legend]
\iftoggle{plotexperiments}{
\addplot table[x=n, y=time_mean] {plot_data/experiments2_k10_merging_k.txt};
\addplot table[x=n, y=time_mean] {plot_data/experiments2_k10_merging_2k.txt};
\addplot table[x=n, y=time_mean] {plot_data/experiments2_k10_merging_4k.txt};
\addplot table[x=n, y=time_mean] {plot_data/experiments2_k10_exactdp.txt};
}{}
\end{loglogaxis}

\begin{loglogaxis}[timeratioplot,name=timeratioplot1,at=(timeplot1.north east),anchor=north west,xshift=\plotxspacing,ignore legend]
\iftoggle{plotexperiments}{
\addplot table[x=n, y=time_ratio] {plot_data/experiments2_k10_merging_k.txt};
\addplot table[x=n, y=time_ratio] {plot_data/experiments2_k10_merging_2k.txt};
\addplot table[x=n, y=time_ratio] {plot_data/experiments2_k10_merging_4k.txt};
}{}
\end{loglogaxis}

\begin{loglogaxis}[mseplot,name=mseplot2,at=(mseplot1.south west),anchor=north west,yshift=-\plotyspacing,ignore legend]
\iftoggle{plotexperiments}{
\addplot table[x=n, y=mse_mean] {plot_data/experiments5_k5_merging_k.txt};
\addplot table[x=n, y=mse_mean] {plot_data/experiments5_k5_merging_2k.txt};
\addplot table[x=n, y=mse_mean] {plot_data/experiments5_k5_merging_4k.txt};
\addplot table[x=n, y=mse_mean] {plot_data/experiments5_k5_exactdp.txt};
}{}
\end{loglogaxis}

\begin{semilogxaxis}[mseratioplot,ymax=10,name=mseratioplot2,at=(mseratioplot1.south west),anchor=north west,yshift=-\plotyspacing,ignore legend]
\iftoggle{plotexperiments}{
\addplot table[x=n, y=mse_ratio] {plot_data/experiments5_k5_merging_k.txt};
\addplot table[x=n, y=mse_ratio] {plot_data/experiments5_k5_merging_2k.txt};
\addplot table[x=n, y=mse_ratio] {plot_data/experiments5_k5_merging_4k.txt};
}{}
\end{semilogxaxis}

\begin{loglogaxis}[timeplot,ytick={0.0001,0.01,1,100},name=timeplot2,at=(timeplot1.south east),anchor=north east,yshift=-\plotyspacing]
\iftoggle{plotexperiments}{
\addplot table[x=n, y=time_mean] {plot_data/experiments5_k5_merging_k.txt};
\addplot table[x=n, y=time_mean] {plot_data/experiments5_k5_merging_2k.txt};
\addplot table[x=n, y=time_mean] {plot_data/experiments5_k5_merging_4k.txt};
\addplot table[x=n, y=time_mean] {plot_data/experiments5_k5_exactdp.txt};
\legend{Merging $k$, Merging $2k$, Merging $4k$, Exact DP}
}{}
\end{loglogaxis}

\begin{loglogaxis}[timeratioplot,name=timeratioplot2,at=(timeratioplot1.south east),anchor=north east,yshift=-\plotyspacing,ignore legend]
\iftoggle{plotexperiments}{
\addplot table[x=n, y=time_ratio] {plot_data/experiments5_k5_merging_k.txt};
\addplot table[x=n, y=time_ratio] {plot_data/experiments5_k5_merging_2k.txt};
\addplot table[x=n, y=time_ratio] {plot_data/experiments5_k5_merging_4k.txt};
}{}
\end{loglogaxis}

\node [at=(mseratioplot1.north east),anchor=south west,xshift=-1cm,yshift=.2cm] {Piecewise constant};
\node [at=(mseratioplot2.north east),anchor=south west,xshift=-1cm,yshift=.2cm] {Piecewise linear};
\end{tikzpicture}
\iftoggle{fullversion}{
}{
\vspace{-.5cm}}
\caption{Experiments with synthetic data: results for piecewise constant models with $k=10$ segments (top row) and piecewise linear models with $k=5$ segments (bottom row, dimension $d=10$).
Compared to the exact dynamic program, the MSE achieved by the merging algorithm is worse but stays within a factor of 2 to 4 for a sufficient number of output segments.
The merging algorithm is significantly faster and achieves a speed-up of about $10^3\times$ compared to the dynamic program for $n=10^4$.
This leads to a significantly better trade-off between statistical and computational performance (see also Figure \ref{fig:plots2}).}
\label{fig:plots1}
\end{figure*}
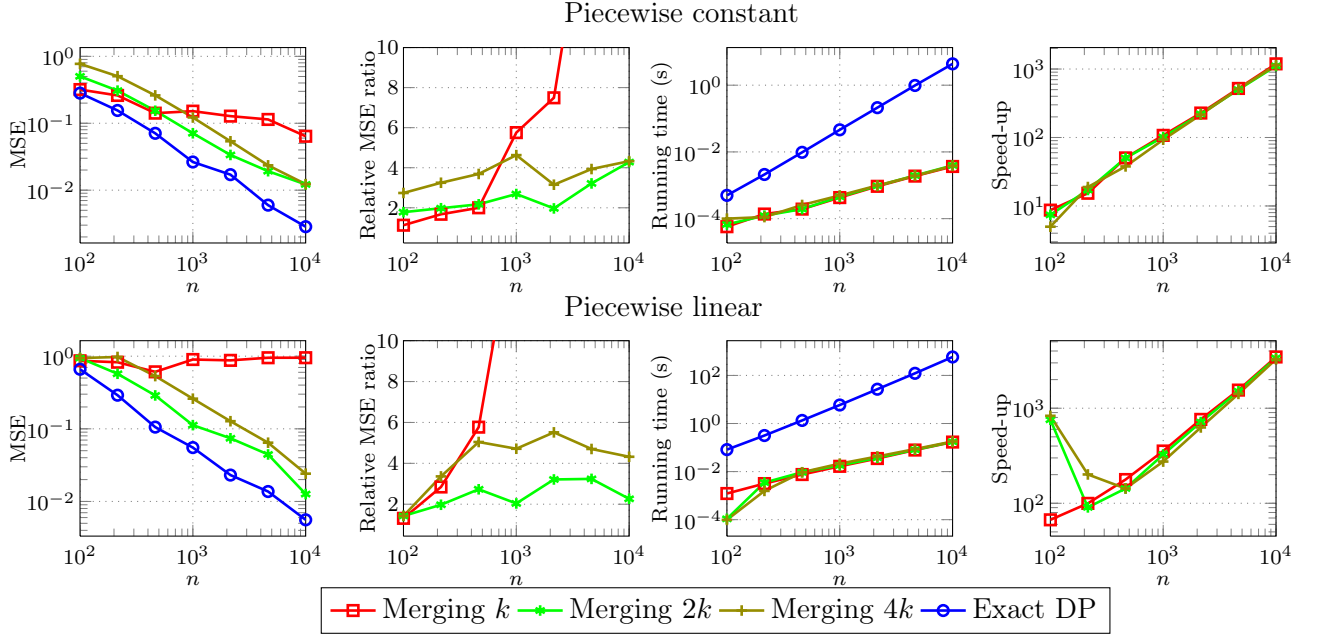

\iftoggle{fullversion}{
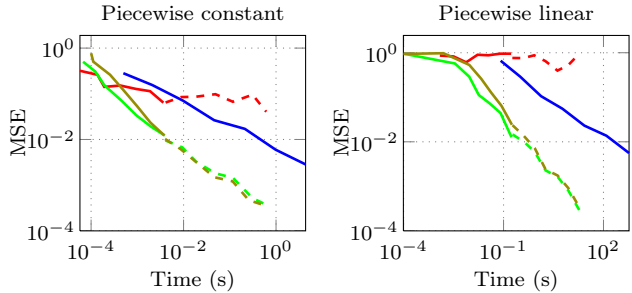
\begin{figure}[htb]
}{
\begin{figure}
}
\centering

\pgfplotsset{tradeoffplot/.style={%
  resultplot1,
  ymin=0.0001,
  ytick={0.0001,0.01,1.0},
  cycle list={{red,mark=none},{green,mark=none},{olive,mark=none},{blue,mark=none},{red,mark=none,dashed},{green,mark=none,dashed},{olive,mark=none,dashed}},
  xlabel=Time (s),
  ylabel=MSE}}

\begin{tikzpicture}

\begin{loglogaxis}[title=Piecewise constant,tradeoffplot,name=tradeoffplot1,ignore legend]
\iftoggle{plotexperiments}{
\addplot table[x=time_mean, y=mse_mean] {plot_data/experiments2_k10_merging_k.txt};
\addplot table[x=time_mean, y=mse_mean] {plot_data/experiments2_k10_merging_2k.txt};
\addplot table[x=time_mean, y=mse_mean] {plot_data/experiments2_k10_merging_4k.txt};
\addplot table[x=time_mean, y=mse_mean] {plot_data/experiments2_k10_exactdp.txt};
\addplot table[x=time_mean, y=mse_mean] {plot_data/experiments2_k10_extra_merging_k.txt};
\addplot table[x=time_mean, y=mse_mean] {plot_data/experiments2_k10_extra_merging_2k.txt};
\addplot table[x=time_mean, y=mse_mean] {plot_data/experiments2_k10_extra_merging_4k.txt};
}{}
\end{loglogaxis}

\begin{loglogaxis}[title=Piecewise linear,tradeoffplot,name=tradeoffplot2,at=(tradeoffplot1.south east),anchor=south west,xshift=1.3cm,ignore legend]
\iftoggle{plotexperiments}{
\addplot table[x=time_mean, y=mse_mean] {plot_data/experiments5_k5_merging_k.txt};
\addplot table[x=time_mean, y=mse_mean] {plot_data/experiments5_k5_merging_2k.txt};
\addplot table[x=time_mean, y=mse_mean] {plot_data/experiments5_k5_merging_4k.txt};
\addplot table[x=time_mean, y=mse_mean] {plot_data/experiments5_k5_exactdp.txt};
\addplot table[x=time_mean, y=mse_mean] {plot_data/experiments5_k5_extra_merging_k.txt};
\addplot table[x=time_mean, y=mse_mean] {plot_data/experiments5_k5_extra_merging_2k.txt};
\addplot table[x=time_mean, y=mse_mean] {plot_data/experiments5_k5_extra_merging_4k.txt};
}{}
\end{loglogaxis}
\end{tikzpicture}
\iftoggle{fullversion}{
}{
\vspace{-.85cm}}
\caption{Computational vs.\ statistical efficiency in the synthetic data experiments.
The solid lines correspond to the data in Figure~\ref{fig:plots1}, the dashed lines show the results from additional runs of the merging algorithms for larger values of $n$.
The merging algorithm achieves the same MSE as the dynamic program about $100 \times$ faster if a sufficient number of samples is available.}
\label{fig:plots2}
\end{figure}

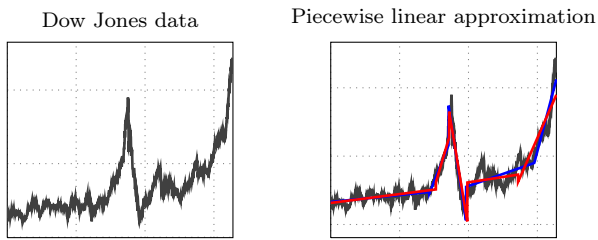
\begin{figure}
\centering

\pgfplotsset{djplot/.style={%
  resultplot1,
  ticks=none,
  cycle list={{darkgray,mark=none},{blue,mark=none},{red,mark=none}}}}

\begin{tikzpicture}

\begin{axis}[djplot,title=Dow Jones data,name=djplot1]
\iftoggle{plotexperiments}{
\addplot table [x expr=\coordindex, y index=0] {plot_data/dj_data.txt};
}{}
\end{axis}

\begin{axis}[djplot,title=Piecewise linear approximation,at=(djplot1.south east),anchor=south west,xshift=1.3cm]
\iftoggle{plotexperiments}{
\addplot table [x expr=\coordindex, y index=0] {plot_data/dj_data.txt};
\addplot table [x expr=\coordindex, y index=0] {plot_data/dj_dp.txt};
\addplot table [x expr=\coordindex, y index=0] {plot_data/dj_merging_k.txt};
}{}
\end{axis}
\end{tikzpicture}
\vspace{-.2cm}
\caption{Results of fitting a 5-piecewise linear function ($d=2$) to a Dow Jones time series.
The merging algorithm produces a fit that is comparable to the dynamic program and is about $200\times$ faster ($0.013$ vs.\ $3.2$ seconds).}
\label{fig:plots3}
\end{figure}

\section{Experiments}
\label{sec:experiments}
In addition to our theoretical analysis above, we also study the empirical performance of our new estimator for segmented regression on both real and synthetic data.
As baseline, we compare our estimator (\textsc{GreedyMerging}) to the dynamic programming approach.
Since our algorithm combines both combinatorial and linear-algebraic operations, we use the Julia programming language\footnote{\url{http://julialang.org/}} (version 0.4.2) for our experiments because Julia achieves performance close to C on both types of operations.
All experiments were conducted on a laptop computer with a 2.8 GHz Intel Core i7 CPU and 16 GB of RAM.

\paragraph{Synthetic data.}
Experiments with synthetic data allow us to study the statistical and computational performance of our estimator as a function of the problem size $n$.
Our theoretical bounds indicate that the worst-case performance of the merging algorithm scales as $O(\frac{k d}{n} + \sqrt{k / n} \log n)$ for constant error variance.
Compared to the $O(\frac{kd}{n})$ rate of the dynamic program, this indicates that the relative performance of our algorithm can depend on the number of features $d$.
Hence we use two types of synthetic data: a piecewise-constant function $f$ (effectively $d=1$) and a piecewise linear function $f$ with $d=10$ features.

We generate the piecewise constant function $f$ by randomly choosing $k = 10$ integers from the set $\{1, \ldots, 10\}$ as function value in each of the $k$ segments.\footnote{We also repeated the experiment for other values of $k$.
Since the results are not qualitatively different, we only report the $k=10$ case here.}
Then we draw $n/k$ samples from each segment by adding an i.i.d.\ Gaussian noise term with variance 1 to each sample.

For the piecewise linear case, we generate a $n \times d$ data matrix $\vec{X}$ with i.i.d.\ Gaussian entries ($d = 10$).
In each segment $I$, we choose the parameter values $\vec{\beta}_I$ independently and uniformly at random from the interval $[-1, 1]$.
So the true function values in this segment are given by $\vec{f}_I = \vec{X}_I \vec{\beta}_I$.
As before, we then add an i.i.d.\ Gaussian noise term with variance 1 to each function value.

Figure \ref{fig:plots1} shows the results of the merging algorithm and the exact dynamic program for sample size $n$ ranging from $10^2$ to $10^4$.
Since the merging algorithm can produce a variable number of output segments, we run the merging algorithm with three different parameter settings corresponding to $k$, $2k$, and $4k$ output segments, respectively.
As predicted by our theory, the plots show that the exact dynamic program has a better statistical performance.
However, the MSE of the merging algorithm with $2k$ pieces is only worse by a factor of $2$ to $4$, and this ratio empirically increases only slowly with $n$ (if at all).
The experiments also show that forcing the merging algorithm to return at most $k$ pieces can lead to a significantly worse MSE.

In terms of computational performance, the merging algorithm has a significantly faster running time, with speed-ups of more than $1,000\times$ for $n=10^4$ samples.
As can be seen in Figure \ref{fig:plots2}, this combination of statistical and computational performance leads to a significantly improved trade-off between the two quantities.
When we have a sufficient number of samples, the merging algorithm achieves a given MSE roughly $100\times$ faster than the dynamic program.

\paragraph{Real data.}
We also investigate whether the merging algorithm can empirically be used to find linear trends in a real dataset.
We use a time series of the Dow Jones index as input, and fit a piecewise linear function ($d=2$) with 5 segments using both the dynamic program and our merging algorithm with $k=5$ output pieces.
As can be seen from Figure \ref{fig:plots3}, the dynamic program produces a slightly better fit for the rightmost part of the curve, but the merging algorithm identifies roughly the same five main segments.
As before, the merging algorithm is significantly faster and achieves a $200\times$ speed-up compared to the dynamic program (0.013 vs 3.2 seconds).

\section*{Acknowledgements}
Part of this research was conducted while Ilias Diakonikolas was at the University of Edinburgh, Jerry Li was an intern at Microsoft Research Cambridge (UK), and Ludwig Schmidt was visiting the EECS department at UC Berkeley.

Jayadev Acharya was supported by a grant from the MIT-Shell Energy Initiative.
Ilias Diakonikolas was supported in part by EPSRC grant EP/L021749/1, a Marie Curie Career Integration Grant, and a SICSA grant.
Jerry Li was supported by NSF grant CCF-1217921, DOE grant DE-SC0008923, NSF CAREER Award CCF-145326, and a NSF Graduate Research Fellowship.
Ludwig Schmidt was supported by grants from the MIT-Shell Energy Initiative, MADALGO, and the Simons Foundation.

\bibliographystyle{alpha}

\bibliography{main}

\newcommand{\etalchar}[1]{$^{#1}$}
\begin{thebibliography}{ADH{\etalchar{+}}15}

\bibitem[ADH{\etalchar{+}}15]{ADHLS15}
J.~Acharya, I.~Diakonikolas, C.~Hegde, J.~Z. Li, and L.~Schmidt.
\newblock Fast and near-optimal algorithms for approximating distributions by
  histograms.
\newblock In {\em PODS}, pages 249--263, 2015.

\bibitem[ADLS15]{ADLS15}
J.~Acharya, I.~Diakonikolas, J.~Zheng Li, and L~Schmidt.
\newblock Sample-optimal density estimation in nearly-linear time.
\newblock {\em CoRR}, abs/1506.00671, 2015.

\bibitem[ASW13]{ASW13}
H.~Avron, V.~Sindhwani, and D.~Woodruff.
\newblock Sketching structured matrices for faster nonlinear regression.
\newblock In {\em NIPS}, pages 2994--3002. 2013.

\bibitem[BP98]{BP98}
J.~Bai and P.~Perron.
\newblock Estimating and testing linear models with multiple structural
  changes.
\newblock {\em Econometrica}, 66(1):47--78, 1998.

\bibitem[CGS15]{chatterjee2015}
S.~Chatterjee, A.~Guntuboyina, and B.~Sen.
\newblock On risk bounds in isotonic and other shape restricted regression
  problems.
\newblock {\em Annals of Statistics}, 43(4):1774--1800, 08 2015.

\bibitem[CLRS09]{CLRS09}
T.~H. Cormen, C.~E. Leiserson, R.~L. Rivest, and C.~Stein.
\newblock {\em Introduction to Algorithms}.
\newblock 3rd edition, 2009.

\bibitem[CW13]{CW13}
K.~L. Clarkson and D.~P. Woodruff.
\newblock Low rank approximation and regression in input sparsity time.
\newblock In {\em STOC}, 2013.

\bibitem[Fed75]{feder1975}
P.~I. Feder.
\newblock On asymptotic distribution theory in segmented regression problems--
  identified case.
\newblock {\em Annals of Statistics}, 3(1):49--83, 01 1975.

\bibitem[Fri91]{friedman1991}
J.~H. Friedman.
\newblock Multivariate adaptive regression splines.
\newblock {\em Annals of Statistics}, 19(1):1--67, 03 1991.

\bibitem[GA73]{GF73}
A.~R. Gallant and Fuller~W. A.
\newblock Fitting segmented polynomial regression models whose join points have
  to be estimated.
\newblock {\em Journal of the American Statistical Association},
  68(341):144--147, 1973.

\bibitem[GKS06]{GKS06}
S.~Guha, N.~Koudas, and K.~Shim.
\newblock Approximation and streaming algorithms for histogram construction
  problems.
\newblock {\em ACM Trans. Database Syst.}, 2006.

\bibitem[JKM{\etalchar{+}}98]{Jagadish98}
H.~V. Jagadish, Nick Koudas, S.~Muthukrishnan, Viswanath Poosala, Kenneth~C.
  Sevcik, and Torsten Suel.
\newblock Optimal histograms with quality guarantees.
\newblock In {\em VLDB '98}, 1998.

\bibitem[Jor13]{jordan2013}
M.~I. Jordan.
\newblock On statistics, computation and scalability.
\newblock {\em Bernoulli}, 19(4):1378--1390, 09 2013.

\bibitem[KRS15]{KRS15}
R.~Kyng, A.~Rao, and S.~Sachdeva.
\newblock Fast, provable algorithms for isotonic regression in all $l_p$-norms.
\newblock In {\em NIPS}, pages 2701--2709, 2015.

\bibitem[Mey08]{meyer2008}
M.~C. Meyer.
\newblock Inference using shape-restricted regression splines.
\newblock {\em Annals of Applied Statistics}, 2(3):1013--1033, 09 2008.

\bibitem[MT77]{MT77}
F.~Mosteller and J.~W. Tukey.
\newblock {\em Data analysis and regression: a second course in statistics}.
\newblock Addison-Wesley, Reading (Mass.), Menlo Park (Calif.), London, 1977.

\bibitem[Rig15]{Rig15}
P.~Rigollet.
\newblock High dimensional statistics.
\newblock 2015.

\bibitem[SHKT97]{Stone97}
C.~J. Stone, M.~H. Hansen, C.~Kooperberg, and Y.~K. Truong.
\newblock Polynomial splines and their tensor products in extended linear
  modeling: 1994 wald memorial lecture.
\newblock {\em Annals of Statistics}, 25(4):1371--1470, 1997.

\bibitem[Sto94]{Stone94}
C.~J. Stone.
\newblock The use of polynomial splines and their tensor products in
  multivariate function estimation.
\newblock {\em Annals of Statistics}, 22(1):pp. 118--171, 1994.

\bibitem[Ver10]{Vershynin}
Roman Vershynin.
\newblock Introduction to the non-asymptotic analysis of random matrices.
\newblock Chapter 5 of: Compressed Sensing, Theory and Applications. Edited by
  Y. Eldar and G. Kutyniok. Cambridge University Press, 2012, 2010.

\bibitem[WW83]{WegW83}
E.~J. Wegman and I.~W. Wright.
\newblock Splines in statistics.
\newblock {\em Journal of the American Statistical Association}, 78(382):pp.
  351--365, 1983.

\bibitem[YP13]{YP13}
Y.~Yamamoto and P.~Perron.
\newblock Estimating and testing multiple structural changes in linear models
  using band spectral regressions.
\newblock {\em Econometrics Journal}, 16(3):400--429, 2013.

\end{thebibliography}


\begin{thebibliography}{16}
\providecommand{\natexlab}[1]{#1}
\providecommand{\url}[1]{\texttt{#1}}
\expandafter\ifx\csname urlstyle\endcsname\relax
  \providecommand{\doi}[1]{doi: #1}\else
  \providecommand{\doi}{doi: \begingroup \urlstyle{rm}\Url}\fi

\bibitem[Acharya et~al.(2015)Acharya, Diakonikolas, Hegde, Li, and
  Schmidt]{ADHLS15}
Acharya, J., Diakonikolas, I., Hegde, C., Li, J.~Z., and Schmidt, L.
\newblock Fast and near-optimal algorithms for approximating distributions by
  histograms.
\newblock In \emph{PODS}, pp.\  249--263, 2015.

\bibitem[Avron et~al.(2013)Avron, Sindhwani, and Woodruff]{ASW13}
Avron, H., Sindhwani, V., and Woodruff, D.
\newblock Sketching structured matrices for faster nonlinear regression.
\newblock In \emph{NIPS}, pp.\  2994--3002. 2013.

\bibitem[Bai \& Perron(1998)Bai and Perron]{BP98}
Bai, J. and Perron, P.
\newblock Estimating and testing linear models with multiple structural
  changes.
\newblock \emph{Econometrica}, 66\penalty0 (1):\penalty0 47--78, 1998.

\bibitem[Chatterjee et~al.(2015)Chatterjee, Guntuboyina, and
  Sen]{chatterjee2015}
Chatterjee, S., Guntuboyina, A., and Sen, B.
\newblock On risk bounds in isotonic and other shape restricted regression
  problems.
\newblock \emph{Ann. Statist.}, 43\penalty0 (4):\penalty0 1774--1800, 08 2015.

\bibitem[Feder(1975)]{feder1975}
Feder, P.~I.
\newblock On asymptotic distribution theory in segmented regression problems--
  identified case.
\newblock \emph{Ann. Statist.}, 3\penalty0 (1):\penalty0 49--83, 01 1975.

\bibitem[Friedman(1991)]{friedman1991}
Friedman, J.~H.
\newblock Multivariate adaptive regression splines.
\newblock \emph{Ann. Statist.}, 19\penalty0 (1):\penalty0 1--67, 03 1991.

\bibitem[Gallant \& A.(1973)Gallant and A.]{GF73}
Gallant, A.~R. and A., Fuller~W.
\newblock Fitting segmented polynomial regression models whose join points have
  to be estimated.
\newblock \emph{Journal of the American Statistical Association}, 68\penalty0
  (341):\penalty0 144--147, 1973.

\bibitem[Jordan(2013)]{jordan2013}
Jordan, M.~I.
\newblock On statistics, computation and scalability.
\newblock \emph{Bernoulli}, 19\penalty0 (4):\penalty0 1378--1390, 09 2013.

\bibitem[Kyng et~al.(2015)Kyng, Rao, and Sachdeva]{KRS15}
Kyng, R., Rao, A., and Sachdeva, S.
\newblock Fast, provable algorithms for isotonic regression in all $l_p$-norms.
\newblock In \emph{NIPS}, pp.\  2701--2709, 2015.

\bibitem[Meyer(2008)]{meyer2008}
Meyer, M.~C.
\newblock Inference using shape-restricted regression splines.
\newblock \emph{Ann. Appl. Stat.}, 2\penalty0 (3):\penalty0 1013--1033, 09
  2008.

\bibitem[Mosteller \& Tukey(1977)Mosteller and Tukey]{MT77}
Mosteller, F. and Tukey, J.~W.
\newblock \emph{Data analysis and regression : a second course in statistics}.
\newblock Addison-Wesley, Reading (Mass.), Menlo Park (Calif.), London, 1977.

\bibitem[Rigollet(2015)]{Rig15}
Rigollet, P.
\newblock High dimensional statistics.
\newblock 2015.
\newblock URL \url{http://www-math.mit.edu/~rigollet/PDFs/RigNotes15.pdf}.

\bibitem[Stone(1994)]{Stone94}
Stone, C.~J.
\newblock The use of polynomial splines and their tensor products in
  multivariate function estimation.
\newblock \emph{Annals of Statistics}, 22\penalty0 (1):\penalty0 pp. 118--171,
  1994.

\bibitem[Stone et~al.(1997)Stone, Hansen, Kooperberg, and Truong]{Stone97}
Stone, C.~J., Hansen, M.~H., Kooperberg, C., and Truong, Y.~K.
\newblock Polynomial splines and their tensor products in extended linear
  modeling: 1994 wald memorial lecture.
\newblock \emph{Annals of Statistics}, 25\penalty0 (4):\penalty0 1371--1470,
  1997.

\bibitem[Wegman \& Wright(1983)Wegman and Wright]{WegW83}
Wegman, E.~J. and Wright, I.~W.
\newblock Splines in statistics.
\newblock \emph{Journal of the American Statistical Association}, 78\penalty0
  (382):\penalty0 pp. 351--365, 1983.

\bibitem[Yamamoto \& Perron(2013)Yamamoto and Perron]{YP13}
Yamamoto, Y. and Perron, P.
\newblock Estimating and testing multiple structural changes in linear models
  using band spectral regressions.
\newblock \emph{The Econometrics Journal}, 16\penalty0 (3):\penalty0 400--429,
  2013.

\end{thebibliography}



\end{document}